\newcommand{\real}{{\mathbb R}}
\newcommand{\set}[1]{\ensuremath{\mathcal #1}}
\newcommand{\separator}{
  \begin{center}
    \rule{\columnwidth}{0.3mm}
  \end{center}
}
 \def\11{{\textbf{1}}}
\newcommand{\beq}{\begin{eqnarray*}}
\newcommand{\eeq}{\end{eqnarray*}}
\newcommand{\beqn}{\begin{eqnarray}}
\newcommand{\eeqn}{\end{eqnarray}}
\newcommand{\bemn}{\begin{multiline}}
\newcommand{\eemn}{\end{multiline}}
\newtheorem{theorem}{Theorem}
\newtheorem{prop}{Proposition}
\newtheorem{defi}{Definition}
\newcommand{\Qtot}{Q_{\rm{jt}}}
\newcommand{\Vtot}{V_{\rm{jt}}}
\newcommand{\hQtot}{\Hat{Q}_{\rm{jt}}}
\newcommand{\algoname}{\text{QTRAN}}
\newcommand{\algonamebasic}{\text{QTRAN-base}}
\newcommand{\algonameadv}{\text{QTRAN-alt}}
\newcommand{\myparagraph}[1]{\vspace{-0.3cm} \paragraph{#1}}
\icmltitlerunning{\algoname: Learning to Factorize with Transformation for Cooperative Multi-Agent Reinforcement Learning}
\begin{document}

\twocolumn[
\icmltitle{QTRAN: Learning to Factorize with Transformation for \\
           Cooperative Multi-Agent Reinforcement learning}



\icmlsetsymbol{equal}{*}

\begin{icmlauthorlist}
\icmlauthor{Kyunghwan Son}{KAIST}
\icmlauthor{Daewoo Kim}{KAIST}
\icmlauthor{Wan Ju Kang}{KAIST}
\icmlauthor{David Hostallero}{KAIST}
\icmlauthor{Yung Yi}{KAIST}
\end{icmlauthorlist}

\icmlaffiliation{KAIST}{School of Electrical Enginerring, KAIST, Daejeon, South Korea}

\icmlcorrespondingauthor{Yung Yi}{yiyung@kaist.edu}
     \icmlcorrespondingauthor{Kyunghwan Son}{kevinson9473@kaist.ac.kr}

\icmlkeywords{Multi-Agent Reinforcement Learning, Deep Reinforcement Learning, Multi-Agent Systems}

\vskip 0.3in
]



\printAffiliationsAndNotice{}  

\begin{abstract}
We explore value-based solutions for multi-agent reinforcement learning (MARL) tasks in the centralized training with decentralized execution (CTDE) regime popularized recently.
However, VDN and QMIX are representative examples that use the idea of factorization of the joint action-value function into individual ones for decentralized execution. VDN and QMIX address only a fraction of factorizable MARL tasks due to their structural constraint in factorization such as additivity and monotonicity. In this paper, we propose a new factorization method for MARL, $\algoname$, which is free from such structural constraints and 
takes on a new approach to transforming the original joint action-value function into an easily factorizable one, with the same optimal actions. $\algoname$ guarantees more general factorization than VDN or QMIX, thus covering a much wider class of MARL tasks than does previous methods. Our experiments  for the tasks of multi-domain Gaussian-squeeze and modified predator-prey demonstrate  $\algoname$'s superior performance with especially larger margins in games whose payoffs penalize non-cooperative behavior more aggressively. 


\end{abstract}
\section{Introduction}

Reinforcement learning aims to instill in agents a good policy that maximizes the cumulative reward in a given environment. Recent progress has witnessed success in various tasks, such as Atari games \citep{mnih2015human}, Go \citep{silver2016mastering, silver2017mastering}, and robot control \citep{lillicrap2015continuous}, just to name a few, with the development of deep learning techniques. Such advances largely consist of deep neural networks, which can represent action-value functions and policy functions in reinforcement learning problems as a high-capacity function approximator. 
However, more complex tasks such as robot swarm control and autonomous driving, often modeled as cooperative multi-agent learning problems, still remain unconquered due to their high scales and operational constraints such as distributed execution.

The use of deep learning techniques carries through to cooperative multi-agent reinforcement learning (MARL). MADDPG \cite{Lowe:MADDPG} learns distributed policy in continuous action spaces, and COMA \cite{DBLP:conf/aaai/FoersterFANW18} utilizes a counterfactual baseline to address the credit assignment problem.
Among value-based methods, value function factorization \cite{koller1999computing, guestrin2002multiagent, DBLP:conf/atal/SunehagLGCZJLSL18, pmlr-v80-rashid18a} methods have been proposed to efficiently handle a joint action-value function whose complexity grows exponentially with the number of agents. 

Two representative examples of value function factorization  include VDN \cite{DBLP:conf/atal/SunehagLGCZJLSL18} and QMIX \cite{pmlr-v80-rashid18a}. VDN factorizes the joint action-value function into a sum of individual action-value functions. QMIX extends this additive value factorization to represent the joint action-value function as a monotonic function --- rather than just as a sum --- of individual action-value functions, thereby covering a richer class of multi-agent reinforcement learning problems than does VDN. However, these value factorization techniques still suffer structural constraints, namely, additive decomposability in VDN and monotonicity in QMIX, often failing to factorize a factorizable task. A task is factorizable if the optimal actions of the joint action-value function are the same as the optimal ones of the individual action-value functions, where additive decomposability and monotonicity are only sufficient --- somewhat excessively restrictive --- for factorizability.
 

\myparagraph{Contribution}
In this paper, we aim at successfully factorizing {\em any} factorizable task, free from additivity/monotonicity concerns. We transform the original joint action-value function into a new, easily factorizable one with the same optimal actions in both functions. This is done by learning a state-value function, which corrects for the severity of the partial observability issue in the agents.




We incorporate the said idea in a novel architecture, called $\algoname$, consisting of the following inter-connected deep neural networks: (i) joint action-value network, (ii) individual action-value networks, and (iii) state-value network. To train this architecture, we define loss functions appropriate for each neural network. We develop two variants of $\algoname$: $\algonamebasic$ and $\algonameadv$, whose distinction is twofold: how to construct the transformed action-value functions for non-optimal actions; and the degree of stability and convergence speed. We assess the performance of $\algoname$ by comparing it against VDN and QMIX in three environments. First, we consider a simple, single-state matrix game that does not satisfy additivity or monotonicity, where $\algoname$ successfully finds the joint optimal action, whereas neither VDN nor QMIX does. We then observe a similarly desirable cooperation-inducing tendency of $\algoname$ in more complex environments: modified predator-prey games and multi-domain Gaussian squeeze tasks. In particular, we show that the performance gap between $\algoname$ and VDN/QMIX increases with environments having more pronounced non-monotonic characteristics.

\myparagraph{Related work}


Extent of centralization varies across the spectrum of cooperative MARL research. While more decentralized methods benefit from scalability, they often suffer non-stationarity problems arising from a trivialized superposition of individually learned behavior. Conversely, more centralized methods alleviate the non-stationarity issue at the cost of complexity that grows exponentially with the number of agents.

Prior work tending more towards the decentralized end of the spectrum include \citet{tan1993multi}, whose independent Q-learning algorithm exhibits the greatest degree of decentralization. \citet{tampuu2017multiagent} combines this algorithm with deep learning techniques presented in DQN \citep{mnih2015human}. These studies, while relatively simpler to implement, are subject to the threats of training instability, as multiple agents attempt to improve their policy in the midst of other agents, whose policies also change over time during training. This simultaneous alteration of policies essentially makes the environment non-stationary. 


The other end of the spectrum involves some centralized entity to resolve the non-stationarity problem. \citet{guestrin2002coordinated} and \citet{kok2006collaborative} are some of the earlier representative works. \citet{guestrin2002coordinated} proposes a graphical model approach in presenting an alternative characterization of a global reward function as a sum of conditionally independent agent-local terms. \citet{kok2006collaborative} exploits the sparsity of the states requiring coordination compared to the whole state space and then tabularize those states to carry out tabular Q-learning methods as in \citet{watkins1989learning}.

The line of research positioned mid-spectrum aims to put together the best of both worlds. More recent studies, such as COMA \citep{DBLP:conf/aaai/FoersterFANW18}, take advantage of CTDE \citep{oliehoek2008optimal}; actors are trained by a joint critic to estimate a counterfactual baseline designed to gauge each agent's contribution to the shared task. \citet{gupta2017cooperative} implements per-agent critics to opt for better scalability at the cost of diluted benefits of centralization. MADDPG \citep{Lowe:MADDPG} extends DDPG \citep{lillicrap2015continuous} to the multi-agent setting by similar means of having a joint critic train the actors. \citet{wei2018multiagent} proposes Multi-Agent Soft Q-learning in continuous action spaces to tackle the relative overgeneralization problem \cite{wei2016lenient} and achieves better coordination.
Other related work includes CommNet \cite{sukhbaatar2016learning}, DIAL \cite{foerster2016learning}, ATOC \cite{NIPS2018_7956}, and SCHEDNET \cite{kim2018learning},
which exploit inter-agent communication in execution.

On a different note, two representative examples of value-based methods have recently been shown to be somewhat effective in analyzing a class of games. Namely, VDN \citep{DBLP:conf/atal/SunehagLGCZJLSL18} and QMIX \citep{pmlr-v80-rashid18a} represent the body of literature most closely related to this paper. While both are value-based methods and follow the CTDE approach, the additivity and monotonicity assumptions naturally limit the class of games that VDN or QMIX can solve.

\section{Background}

\subsection{Model and CTDE}

\paragraph{DEC-POMDP} We take DEC-POMDP \cite{oliehoek2016concise} as the {\em de facto} standard for modelling cooperative multi-agent tasks, as do many previous works: as a tuple $\set{G} = <\set{S},\set{U},P,r,\set{Z},O,N,\gamma>$, where $\bm{s} \in \set{S}$ denotes the true state of the environment. Each agent $i \in \set{N} := \{1,...,N\} $ chooses an action $u_i \in \set{U}$ at each time step, giving rise to a joint action vector, $\bm{u} := [u_i]_{i=1}^N \in \set{U}^N$. Function $P(\bm{s}'|\bm{s},\bm{u}):\set{S}\times \set{U}^N \times \set{S} \mapsto [0,1]$ governs all state transition dynamics. Every agent shares the same joint reward function $r(\bm{s},\bm{u}):\set{S} \times \set{U}^N \mapsto \real $, and $\gamma \in [0,1)$ is the discount factor.
Each agent has individual, partial observation $z \in \set{Z}$, according to some observation function $O(\bm{s},i) : \set{S} \times \set{N} \mapsto \set{Z}.$ Each agent also has an action-observation history $\tau_i \in \set{T} := (\set{Z} \times \set{U})^{*}$, on which it conditions its stochastic policy $\pi_i(u_i|\tau_i) : \set{T} \times \set{U} \mapsto [0,1]$. 

\myparagraph{Training and execution: CTDE}
Arguably the most na\"ive training method for MARL tasks is to learn the individual agents' action-value functions independently, {\em i.e.}, independent Q-learning. This method would be simple and scalable, but it cannot guarantee convergence even in the limit of infinite greedy exploration. As an alternative solution, recent works including VDN \citep{DBLP:conf/atal/SunehagLGCZJLSL18} and QMIX \citep{pmlr-v80-rashid18a} employ centralized training with decentralized execution (CTDE) \citep{oliehoek2008optimal} to train multiple agents. CTDE allows agents to learn and construct individual action-value functions, such that optimization at the individual level leads to optimization of the joint action-value function. This in turn, enables agents at execution time to select an optimal action simply by looking up the individual action-value functions, without having to refer to the joint one. Even with only partial observability and restricted inter-agent communication, information can be made accessible to all agents at training time.

\subsection{{\bf IGM} Condition and Factorizable Task}

Consider a class of sequential decision-making tasks that are amenable to factorization in the centralized training phase.
We first define {\bf IGM} (Individual-Global-Max):
\begin{defi}[IGM]
\label{def:igm}
For a joint action-value function $\Qtot: \set{T}^N \times \set{U}^N \mapsto \real$, where $\bm{\tau} \in \set{T}^N$ is a joint action-observation histories, if there exist 
individual action-value functions $[Q_i: \set{T} \times \set{U} \mapsto \real]_{i=1}^N,$ such that the following holds 
\begin{align}
\label{eq:max_cond}
\arg\max_{\bm{u}} \Qtot(\bm{\tau},\bm{u}) & = 
\begin{pmatrix}
\arg\max_{u_1} Q_1(\tau_1, u_1) \\
\vdots \\
\arg\max_{u_N} Q_N(\tau_n, u_N)
\end{pmatrix},
\end{align}
then, we say that $[Q_i]$ satisfy {\bf IGM} for $\Qtot$ under $\bm{\tau}$. 
In this case, we also say that $\Qtot(\bm{\tau},\bm{u})$ is factorized by $[Q_i(\tau_i,u_i)]$, or that $[Q_i]$ are factors of $\Qtot$.
\end{defi}

Simply put, the optimal joint actions across agents are equivalent to the collection of individual optimal actions of each agent. 
If $\Qtot(\bm{\tau},\bm{u})$ in a given task is factorizable under all $\bm{\tau} \in \set{T}^N,$ we say that the task itself is factorizable.



\subsection{VDN and QMIX}

Given $\Qtot$, one can consider the following two sufficient conditions for {\bf IGM}:  
\begin{align}
\text{\bf (Additivity)} \quad & \Qtot(\bm{\tau},\bm{u}) = \sum_{i=1}^N Q_i(\tau_i, u_i), \label{eq:addcondition} \\ 
\text{\bf (Monotonicity)} \quad &{\partial\Qtot(\bm{\tau},\bm{u}) \over\partial Q_i(\tau_i,u_i)} \geq 0, \quad \forall i \in \set{N}. \label{eq:monocondition}
\end{align}

VDN \cite{DBLP:conf/atal/SunehagLGCZJLSL18} and QMIX \cite{pmlr-v80-rashid18a} are methods that attempt to factorize $\Qtot$ assuming additivity and monotonicity, respectively. 
Thus, joint action-value functions satisfying those conditions would be well-factorized by VDN and QMIX. However, there exist tasks whose joint action-value functions do not meet the said conditions.
We illustrate this limitation of VDN and QMIX using a simple matrix game in the next section.

\section{$\algoname$: Learning to Factorize with Transformation}

In this section, we propose a new method called $\algoname$, aiming at factorizing any factorizable task. The key idea is to transform the original joint action-value function
$\Qtot$ into a new one $\Qtot'$ that shares the optimal joint action with 
$\Qtot.$



\begin{figure*}[!t]
  \centering
 \includegraphics[width=0.93\textwidth]{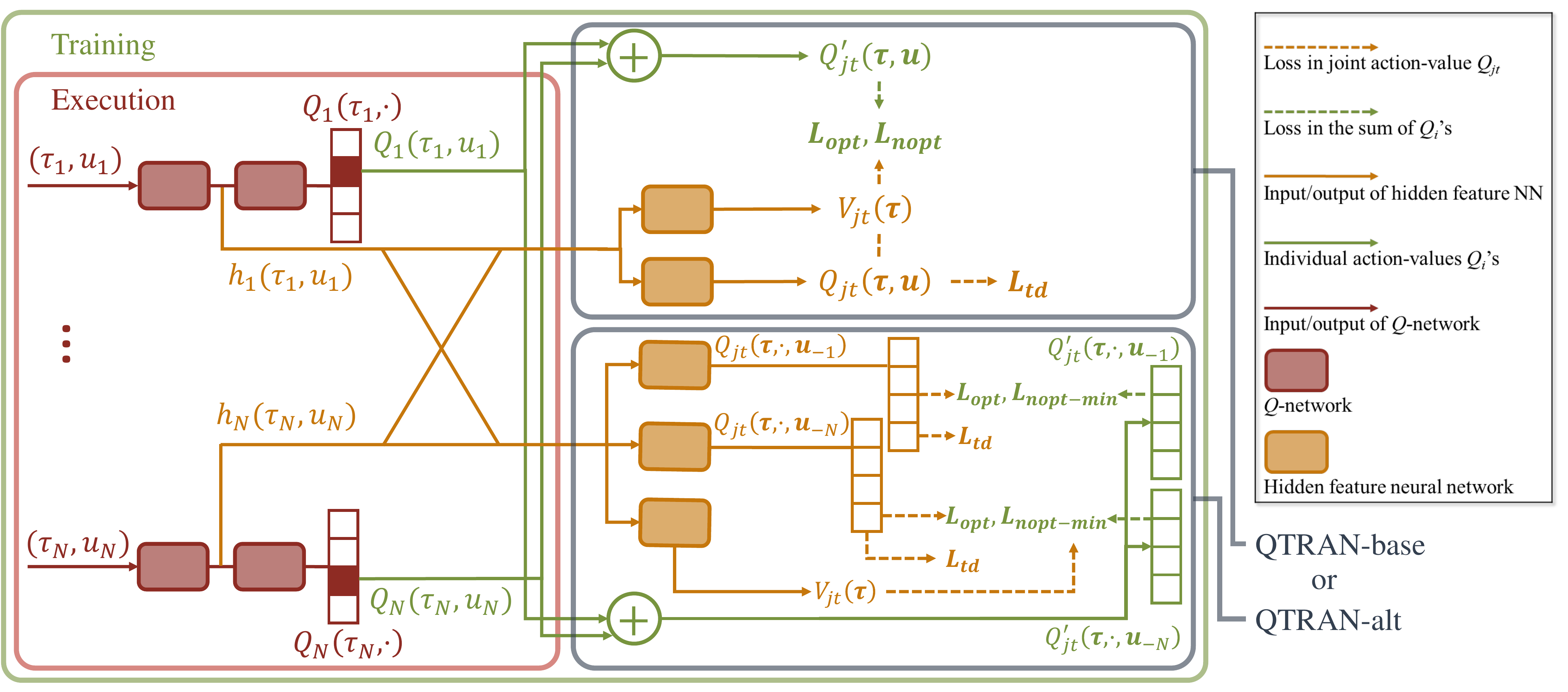}
\caption{$\algonamebasic$ and $\algonameadv$ Architecture}
  \label{fig:qreg}
\end{figure*}

\subsection{Conditions for the Factor Functions $[Q_i]$}
\label{sec:transformation}
For a given joint observation $\bm{\tau}$, consider an arbitrary factorizable $\Qtot(\bm{\tau},\bm{u}).$ 
Then, by Definition~\ref{def:igm} of {\bf IGM}, we can find individual action-value functions $[Q_i(\tau_i,u_i)]$ that factorize $\Qtot(\bm{\tau},\bm{u}).$ Theorem~\ref{thm:ifonlyif} states the sufficient condition for $[Q_i]$ that satisfy {\bf IGM}.
Let $\bar{u}_{i}$ denote the optimal local action $\arg\max_{u_{i}}Q_i(\tau_i,u_i)$ and $\bm{\bar{u}} = [\bar{u}_{i}]_{i=1}^N,$. Also, let $\bm{Q} = [Q_i] \in \real^N,$ {\em i.e.}, a column vector of $Q_i, i = 1, \ldots, N.$ 

\begin{theorem}
\label{thm:ifonlyif}
A factorizable joint action-value function $\Qtot(\bm{\tau},\bm{u})$ is factorized by $[Q_i(\tau_i,u_i)],$ if 
\begin{subnumcases}{\label{eq:regular_cond0}\hspace{-0.7cm}\sum_{i=1}^N Q_i(\tau_i,u_i) \! -\! \Qtot(\bm{\tau},\bm{u}) \! +\!\Vtot(\bm{\tau})\!=\!}
    \!\!\! 0 & $\hspace{-0.5cm}\bm{u} = \bm{\bar{u}},$ \label{eq:regular_cond1}\\
    \!\!\! \geq 0 & $\hspace{-0.5cm}\bm{u} \neq \bm{\bar{u},}$ \label{eq:regular_cond2}
\end{subnumcases}
where 
\begin{align*}
\Vtot(\bm{\tau}) &= \max_{\bm{u}}\Qtot(\bm{\tau},\bm{u}) - \sum_{i=1}^N Q_i(\tau_i,\bar{u}_i).
\end{align*}

\end{theorem}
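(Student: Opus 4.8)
The plan is to show that the hypothesis \eqref{eq:regular_cond0} forces the local argmaxes $\bm{\bar u}$ to coincide with a global maximizer of $\Qtot$, which is exactly the {\bf IGM} condition of Definition~\ref{def:igm}. First I would rewrite the two cases of \eqref{eq:regular_cond0} in a more suggestive form. Define the residual
\begin{align*}
R(\bm{\tau},\bm{u}) &= \sum_{i=1}^N Q_i(\tau_i,u_i) - \Qtot(\bm{\tau},\bm{u}) + \Vtot(\bm{\tau}).
\end{align*}
Condition \eqref{eq:regular_cond1} says $R(\bm{\tau},\bm{\bar u})=0$ and \eqref{eq:regular_cond2} says $R(\bm{\tau},\bm{u})\ge 0$ for all other $\bm{u}$. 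Substituting the given expression for $\Vtot(\bm{\tau})$, the residual at the joint action $\bm{\bar u}$ becomes $\sum_i Q_i(\tau_i,\bar u_i) - \Qtot(\bm{\tau},\bm{\bar u}) + \max_{\bm u}\Qtot(\bm{\tau},\bm{u}) - \sum_i Q_i(\tau_i,\bar u_i)$, and the two $\sum_i Q_i(\tau_i,\bar u_i)$ terms cancel, leaving $R(\bm{\tau},\bm{\bar u}) = \max_{\bm u}\Qtot(\bm{\tau},\bm{u}) - \Qtot(\bm{\tau},\bm{\bar u})$. So the equality case \eqref{eq:regular_cond1} is equivalent to the statement that $\Qtot(\bm{\tau},\bm{\bar u}) = \max_{\bm u}\Qtot(\bm{\tau},\bm u)$, i.e.\ that the vector of local optimizers $\bm{\bar u}$ is itself a global maximizer of $\Qtot$.

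Next I would translate the inequality case \eqref{eq:regular_cond2} the same way. For general $\bm u$, write $R(\bm{\tau},\bm u) = \sum_i Q_i(\tau_i,u_i) - \Qtot(\bm{\tau},\bm u) + \Vtot(\bm{\tau})$, and note that by definition each term $Q_i(\tau_i,u_i) \le Q_i(\tau_i,\bar u_i)$ since $\bar u_i$ maximizes $Q_i$. I would use this together with the (already established) fact that $\bm{\bar u}$ is a global maximizer to argue that any genuinely different joint action $\bm u \neq \bm{\bar u}$ yields $\Qtot(\bm{\tau},\bm u) \le \max_{\bm v}\Qtot(\bm{\tau},\bm v) = \Qtot(\bm{\tau},\bm{\bar u})$. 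The point of \eqref{eq:regular_cond2} is to certify that no such $\bm u$ beats $\bm{\bar u}$: the nonnegativity of the residual packages exactly the consistency needed to preclude a strictly better joint action from being selected by the individual argmaxes.

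The core of the argument is then to verify the {\bf IGM} identity \eqref{eq:max_cond}, namely that $\arg\max_{\bm u}\Qtot(\bm{\tau},\bm u)$ equals the stacked vector of local argmaxes $\bm{\bar u}$. The forward inclusion is immediate from the equality case: since $R(\bm{\tau},\bm{\bar u})=0$ gives $\Qtot(\bm{\tau},\bm{\bar u})=\max_{\bm u}\Qtot(\bm{\tau},\bm u)$, the action $\bm{\bar u}$ attains the joint maximum, so $\bm{\bar u}\in\arg\max_{\bm u}\Qtot(\bm{\tau},\bm u)$. This already establishes that the local-argmax vector is a valid global maximizer, which is the operative content of Definition~\ref{def:igm} for the purpose of decentralized execution: selecting $\bar u_i$ at each agent reconstructs an optimal joint action.

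The step I expect to be the main obstacle is handling uniqueness versus multiplicity of maximizers cleanly. Definition~\ref{def:igm} writes $\arg\max$ as a single stacked vector, implicitly treating the local maximizers as well-defined (essentially unique, or fixed by a tie-breaking rule), whereas in general $\arg\max$ is a set. I would therefore want to state precisely in which sense \eqref{eq:max_cond} is meant --- that the chosen profile $\bm{\bar u}=[\bar u_i]$ lies in the joint $\arg\max$ set --- and check that the inequality condition \eqref{eq:regular_cond2} rules out the degenerate situation where some $\bm u\neq\bm{\bar u}$ with $\Qtot(\bm{\tau},\bm u)=\Qtot(\bm{\tau},\bm{\bar u})$ would force a strict deficit $\sum_i Q_i(\tau_i,u_i) < \sum_i Q_i(\tau_i,\bar u_i)$ that violates $R\ge 0$. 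Reconciling the set-valued $\arg\max$ with the single-vector formulation, and making the "if" direction airtight across ties, is where I would spend the most care.
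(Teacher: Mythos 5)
Your proposal is correct, but it takes a genuinely different --- and in fact more economical --- route than the paper's. The paper argues by a chain of inequalities: from \eqref{eq:regular_cond1}, $\Qtot(\bm{\tau},\bm{\bar{u}}) = \sum_{i} Q_i(\tau_i,\bar{u}_i) + \Vtot(\bm{\tau})$; since each $\bar{u}_i$ maximizes $Q_i$, this is at least $\sum_{i} Q_i(\tau_i,u_i) + \Vtot(\bm{\tau})$; and by \eqref{eq:regular_cond2} the latter dominates $\Qtot(\bm{\tau},\bm{u})$ for every $\bm{u} \neq \bm{\bar{u}}$. Notably, that argument never invokes the explicit formula for $\Vtot$: it works for an arbitrary action-independent offset $\Vtot(\bm{\tau})$, which is the situation that actually arises in the algorithm, where $\Vtot$ is a learned network and \eqref{eq:regular_cond2} is what the loss $L_{\text{nopt}}$ enforces. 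Your argument instead substitutes the stated formula for $\Vtot$ into the equality case \eqref{eq:regular_cond1}, cancels the two $\sum_{i} Q_i(\tau_i,\bar{u}_i)$ terms, and reads off $\Qtot(\bm{\tau},\bm{\bar{u}}) = \max_{\bm{u}}\Qtot(\bm{\tau},\bm{u})$ directly --- so you obtain {\bf IGM} from \eqref{eq:regular_cond1} alone, and your second paragraph about \eqref{eq:regular_cond2} is, as you half-observe yourself, not load-bearing. What your route buys is brevity and the sharp observation that, with this particular $\Vtot$, the inequality condition is superfluous to the theorem as literally stated; what the paper's route buys is independence from the closed form of $\Vtot$, which explains why condition \eqref{eq:regular_cond2} is present at all and why it is worth training against. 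Your closing caveat about set-valued $\arg\max$ and ties is fair, but the paper's proof has exactly the same looseness: both arguments establish the operative content of Definition~\ref{def:igm}, namely that the profile $\bm{\bar{u}}$ of local optimizers attains the joint maximum.
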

The proof is provided in the Supplementary. 
We note that conditions in \eqref{eq:regular_cond0} are also {\em necessary} under an affine transformation. That is, there exists an affine transformation $\phi(\bm{Q}) = A\cdot \bm{Q} +B,$ where $A=[a_{ii}] \in \real_+^{N\times N}$ is a symmetric diagonal matrix with $a_{ii} > 0, \forall i$ and $B=[b_i] \in \real^N,$ such that if $\Qtot$ is factorized by $[Q_i]$, then \eqref{eq:regular_cond0} holds by replacing $Q_i$ with $a_{ii}Q_i +b_i.$ This is because for all $i$, $b_i$ cancels out, and $a_{ii}$ just plays the role of re-scaling the value of $\sum_{i=1}^N Q_i$ in multiplicative (with a positive scaling constant) and additive manners, since {\bf IGM} is invariant to $\phi$ of $[Q_i].$


\myparagraph{Factorization via transformation}
We first define a new function $\Qtot'$ as the linear sum of individual factor functions $[Q_i]$:
 \begin{align}
 \Qtot'(\bm{\tau},\bm{u}) & := \sum_{i=1}^N Q_i(\tau_i, u_i). \label{eq:define_qtprime}
 \end{align}
We call $\Qtot'(\bm{\tau},\bm{u})$ the {\em transformed joint-action value function} throughout this paper. Our idea of factorization is as follows: from the additive construction of $\Qtot'$ based on $[Q_i],$ 
$[Q_i]$ satisfy {\bf IGM} for the new joint action-value function $\Qtot'$, implying that $[Q_i]$ are also the factorized individual action-value functions of $\Qtot'$.
From the fact that $\arg\max_{\bm{u}} \Qtot(\bm{\tau},\bm{u})=\arg\max_{\bm{u}} \Qtot'(\bm{\tau},\bm{u}),$ finding $[Q_i]$ satisfying \eqref{eq:regular_cond0} is precisely the factorization of $\Qtot'(\bm{\tau},\bm{u})$. One interpretation of this process is that rather than directly factorizing $\Qtot$, we consider an alternative joint action-value function ({\em i.e.}, $\Qtot'$) that is factorized by additive decomposition. 
The function $\Vtot(\bm{\tau})$ corrects for the discrepancy between the centralized joint action-value function $\Qtot$ and the sum of individual joint action-value functions $[Q_i]$. This discrepancy arises from the partial observability of agents. If bestowed with full observability, $\Vtot$ can be re-defined as zero, and the definitions would still stand. Refer to the Supplementary for more detail.

\subsection{Method}\label{sec:arch}

\paragraph{Overall framework}

In this section, we propose a new deep RL framework with value
function factorization, called $\algoname$, whose architectural sketch is given in Figure~\ref{fig:qreg}.
$\algoname$ consists of three separate estimators: (i) each agent $i$'s individual action-value network for $Q_i$, (ii) a joint action-value network for $\Qtot$ to be factorized into individual action-value functions $Q_i$, and (iii) a state-value network for $\Vtot$, {\em i.e.},
\begin{align*}
  \text{\bf (Individual action-value network)} & \quad f_{\text{q}}: (\tau_i,u_i) \mapsto Q_i, \cr \quad
  \text{\bf (Joint action-value network)} &  \quad f_{\text{r}}: (\bm{\tau}, \bm{u}) \mapsto \Qtot, \cr
  \text{\hspace{0cm}} \text{\bf (State-value network)} & \quad f_{\text{v}}: \bm{\tau} \mapsto
    \Vtot. 
\end{align*}
Three neural networks are trained in a centralized manner, and each agent uses its own factorized individual action-value function $Q_i$ to take action during
decentralized execution. Each network is elaborated next.

\myparagraph{Individual action-value networks} 
For each agent, an action-value network takes its own action-observation history
$\tau_i$ as input, and produces action-values $Q_i(\tau_i,\cdot)$ as output. This
action-value network is used for each agent to determine its own
action by calculating the action-value for a given $\tau_i$. As defined in
\eqref{eq:define_qtprime}, $\Qtot'$ is just the summation of the outputs of
all agents.

\myparagraph{Joint action-value network} The joint action-value network approximates $\Qtot$.
It takes as input the selected action and produces the Q-value of the chosen action as output. For scalability and sample efficiency, we design
this network as follows. First, we use the action vector sampled by all individual action-value networks to update the joint action-value network. Since the joint action space is $\set{U}^N$, finding an optimal action requires high complexity as the number of agents $N$ grows, whereas obtaining an optimal action in each individual network is done by decentralized policies with linear-time individual $\arg\max$ operations. Second, the joint action-value network shares the parameters at the lower layers of individual networks, where the joint action-value network combines hidden features with summation $\sum_i h_{Q,i}(\tau_i, u_i)$ of $h_i(\tau_i, u_i) = [h_{Q,i}(\tau_i, u_i), h_{V,i}(\tau_i)]$ from  individual networks. This parameter sharing is used to enable scalable training with good sample efficiency at the expense of expressive power.
\myparagraph{State-value network} The state-value network is responsible for computing a scalar state-value, similar to $V(s)$ in the dueling network \citep{wang2016dueling}. $\Vtot$ is required to provide the flexibility to match $\Qtot$ and $\Qtot'+\Vtot$ at $\arg\max$. Without state-value network, partial observability would limit the representational complexity of $\Qtot'$. The state-value is independent of the selected action for a given $\bm{\tau}$. Thus, this value network does not contribute to choosing an action, and is instead used to calculate the loss of
\eqref{eq:regular_cond0}. Like the joint action-value network, we use
the combined hidden features $\sum_i h_{V,i}(\tau_i)$ from the individual
 networks as input to the value network for scalability. 


\subsection{Loss Functions}
\label{sec:loss}



There are two major goals in centralized training.
One is that we should train the joint action-value function $\Qtot$ to estimate the true action-value; the other is that the transformed action-value function $\Qtot'$
should ``track'' the joint action-value function in the sense that their optimal actions are equivalent. 
We use the algorithm introduced in DQN \citep{mnih2015human} to update networks, where we maintain a target network and a replay buffer. 
To this end, we devise the following global loss function in $\algoname$, combining three loss functions in a weighted manner: 
\begin{align}
  \label{eq:loss_total}
L(\bm{\tau}, \bm{u}, r, \bm{\tau}'; \bm{\theta})=
 L_{\text{td}} + \lambda_{\text{opt}} L_{\text{opt}} + \lambda_\text{nopt} L_\text{nopt},
\end{align}
where $r$ is the reward for action $\bm{u}$ at observation histories $\bm{\tau}$ with transition to $\bm{\tau}'$. 
$L_{\text{td}}$ is the loss function for estimating the true action-value, by minimizing the TD-error as $\Qtot$ is learned. $L_{\text{opt}}$ and $L_{\text{nopt}}$ are losses for factorizing $\Qtot$ by $[Q_i]$ satisfying condition \eqref{eq:regular_cond0}. The role of $L_{\text{nopt}}$ is to check at each step if the action selected in the samples satisfied \eqref{eq:regular_cond2}, and $L_{\text{opt}}$ confirms that the optimal local action obtained satisfies \eqref{eq:regular_cond1}. One could implement \eqref{eq:regular_cond0} by defining a loss depending on how well the networks satisfy \eqref{eq:regular_cond1} or \eqref{eq:regular_cond2} with actions taken in the samples. However, in this way, verifying whether \eqref{eq:regular_cond1} is indeed satisfied would take too many samples since optimal actions are seldom taken at training. Since we aim to learn $\Qtot'$ and $\Vtot$ to factorize for a given $\Qtot$, we stabilize the learning by fixing $\Qtot$ when learning with $L_{\text{opt}}$ and $L_{\text{nopt}}$. We let $\hQtot$ denote this fixed $\Qtot$. $\lambda_{\text{opt}}$ and $\lambda_{\text{nopt}}$ are the weight constants for two losses. 
The detailed forms of $L_{\text{td}}$, $L_{\text{opt}},$ and $L_{\text{nopt}}$ are given as follows, where we omit their common function arguments $(\bm{\tau}, \bm{u}, r, \bm{\tau}')$ in loss functions for presentational simplicity: 
\begin{align*}
L_{\text{td}}(;\bm{\theta})&= \big(\Qtot(\bm{\tau},\bm{u}) - y^{\text{dqn}}(r, \bm{\tau}'; \bm{\theta}^{-})\big)^{2},\cr 
L_{\text{opt}}(;\bm{\theta})&= \big(\Qtot'(\bm{\tau},\bm{\bar{u}}) - \hQtot(\bm{\tau},\bm{\bar{u}}) + \Vtot(\bm{\tau})\big)^{2}, \cr 
L_{\text{nopt}}(;\bm{\theta})&= \left(\min \big[\Qtot'(\bm{\tau},\bm{u}) - \hQtot(\bm{\tau},\bm{u}) + \Vtot(\bm{\tau}),0\big] \right)^{2}, 
\end{align*} 
where 
$y^{\text{dqn}}(r, \bm{\tau}'; \bm{\theta^{-}}) = r + \gamma
\Qtot(\bm{\tau}', \bm{\bar{u}}';\bm{\theta} ^{-}),$ $ \bar{\bm{u}}' = [\arg\max_{u_{i}}Q_i(\tau'_i,u_i;\bm{\theta}^{-})]_{i=1}^N,$ and $\bm{\theta}^{-}$ are the periodically copied parameters from $\bm{\theta}$, as in DQN \citep{mnih2015human}.

\subsection{Tracking the Joint Action-value Function Differently}


We name the method previously discussed {\bf $\algonamebasic$}, to reflect the basic nature of how it keeps track of the joint action-value function. Here on, we consider a variant of $\algoname$, which utilizes a counterfactual measure. As mentioned earlier, Theorem~\ref{thm:ifonlyif} is used to enforce {\bf IGM} by \eqref{eq:regular_cond1} and determine how the individual action-value functions $[Q_i]$ and the State-value function $\Vtot$ jointly ``track'' $\Qtot$ by \eqref{eq:regular_cond2}, which governs the stability of constructing the correct factorizing $Q_i$'s. We found that condition \eqref{eq:regular_cond2} is often too loose, leading the neural networks to fail their mission of constructing the correct factors of $\Qtot.$ That is, condition \eqref{eq:regular_cond2} imposes undesirable influence on the non-optimal actions, which in turn compromises the stability and/or convergence speed of the training process. This motivates us to study conditions stronger than \eqref{eq:regular_cond2} that would still be sufficient for factorizability, but at the same time would also be necessary under the aforementioned affine transformation $\phi$, as in Theorem~\ref{thm:ifonlyif}.

\begin{theorem}
\label{thm:min}
The statement presented in {\rm Theorem~\ref{thm:ifonlyif}} and the necessary condition of {\rm Theorem~\ref{thm:ifonlyif}} holds by replacing \eqref{eq:regular_cond2} with the following \eqref{eq:min2}: if $\bm{u} \neq \bar{\bm{u}}$, 
\begin{multline}
\label{eq:min2}
\min_{u_i \in \set{U}} \Big [\Qtot'(\bm{\tau},u_i,\bm{u}_{-i}) 
- \Qtot(\bm{\tau},u_i,\bm{u}_{-i}) \cr + \Vtot(\bm{\tau}) \Big] = 0, \quad \forall i =1, \ldots, N,
\end{multline}
where $\bm{u}_{-i} = (u_1, \ldots, u_{i-1}, u_{i+1}, \ldots, u_{N}),$ {\rm i.e.}, the action vector except for $i$'s action.  
\end{theorem}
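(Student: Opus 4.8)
The plan is to establish the two halves of Theorem~\ref{thm:min} separately — the sufficiency (``the statement of Theorem~\ref{thm:ifonlyif}'') and the necessity under the affine transformation $\phi$ — in both cases by reducing to Theorem~\ref{thm:ifonlyif} rather than redoing the factorizability argument from scratch. Throughout I abbreviate the left-hand side of \eqref{eq:regular_cond0} as $D(\bm{\tau},\bm{u}) := \Qtot'(\bm{\tau},\bm{u}) - \Qtot(\bm{\tau},\bm{u}) + \Vtot(\bm{\tau}) = \sum_{i} Q_i(\tau_i,u_i) - \Qtot(\bm{\tau},\bm{u}) + \Vtot(\bm{\tau})$, and I call $\{(u_i,\bm{u}_{-i}) : u_i \in \set{U}\}$ the $i$-th coordinate line through $\bm{u}$.

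For sufficiency I would first note that \eqref{eq:min2} is strictly stronger than \eqref{eq:regular_cond2}. Indeed, \eqref{eq:min2} asserts that for every $i$ and every $\bm{u}\neq\bar{\bm{u}}$ the minimum of $D$ along the $i$-th coordinate line is zero; hence $D(\bm{\tau},\bm{u}) \geq \min_{u_i} D(\bm{\tau},u_i,\bm{u}_{-i}) = 0$ for each such $\bm{u}$, which is exactly \eqref{eq:regular_cond2}. Since \eqref{eq:regular_cond1} and the definition of $\Vtot$ are left untouched, Theorem~\ref{thm:ifonlyif} applies verbatim and gives that $[Q_i]$ factorize $\Qtot$. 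So this direction needs no new work beyond spelling out the one-line implication \eqref{eq:min2}$\Rightarrow$\eqref{eq:regular_cond2}.

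For the necessity I would reuse the parametrization behind Theorem~\ref{thm:ifonlyif}'s necessity. Writing $q_i(u_i) := Q_i(\tau_i,u_i) - Q_i(\tau_i,\bar u_i) \leq 0$ and $\Delta(\bm{u}) := \Qtot(\bm{\tau},\bar{\bm{u}}) - \Qtot(\bm{\tau},\bm{u}) \geq 0$ (using {\bf IGM} to identify $\bar{\bm{u}}$ with $\arg\max_{\bm{u}}\Qtot$), the map $Q_i \mapsto a_{ii}Q_i + b_i$ fixes each $\bar u_i$ (so {\bf IGM} is preserved), the shifts $b_i$ cancel, and the transformed discrepancy is $\tilde D(\bm{u}) = \sum_i a_{ii} q_i(u_i) + \Delta(\bm{u})$, with $\tilde D(\bar{\bm{u}}) = 0$ so that \eqref{eq:regular_cond1} is automatic. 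The task is then to pick the positive scalars $a_{ii}$ so that (i) $\tilde D \geq 0$ everywhere and (ii) the minimum of $\tilde D$ along every coordinate line equals exactly zero. Lines through $\bar{\bm{u}}$ satisfy (ii) for free, because $\tilde D(\bar{\bm{u}})=0$ and $\tilde D \geq 0$; the real content is to drive the minimum down to zero on every remaining line.

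I expect this last calibration to be the crux and the main obstacle. Requirement (i) pushes the $a_{ii}$ to be small — any sufficiently small scaling already yields $\tilde D \geq 0$, which is all Theorem~\ref{thm:ifonlyif} needed — whereas requirement (ii) forces them large enough that the nonnegativity bound becomes tight on each coordinate line, and these two pressures oppose one another across different lines. My plan would be to enlarge the $a_{ii}$ until the constraint $\tilde D \geq 0$ becomes active, so that the tight constraints pin some values of $\tilde D$ at zero, and then to argue that a single diagonal $A$ can make the ``touching-zero'' condition hold on all coordinate lines at once. Since there are only $N$ free scalars but one tightness demand per coordinate line, I anticipate that closing this step cleanly is where the argument is most delicate, and is the point at which the compatibility of the per-line constraints (and possibly an extra structural assumption on $\Qtot$ beyond bare factorizability) would have to be argued carefully.
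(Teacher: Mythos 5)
Your sufficiency half coincides with the paper's own proof: from \eqref{eq:min2}, $D(\bm{\tau},\bm{u}) \geq \min_{u_i\in\set{U}} D(\bm{\tau},u_i,\bm{u}_{-i}) = 0$ for every $\bm{u}\neq\bar{\bm{u}}$, so \eqref{eq:regular_cond2} holds and Theorem~\ref{thm:ifonlyif} applies verbatim; no further work is needed there.

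The necessity half is where your proposal has a genuine gap, and your own suspicion about the crux is accurate --- but the situation is worse than ``delicate'': the route you propose provably cannot be completed. With the $b_i$ cancelling, you have only the $N$ scalars $a_{ii}$ against one tightness constraint per pair $(i,\bm{u}_{-i})$, roughly $N|\set{U}|^{N-1}$ constraints, and this over-determination is real. Concretely, take $N=2$, $\set{U}=\{A,B,C\}$, $Q_1=Q_2$ with $Q_i(A)=0$, $Q_i(B)=Q_i(C)=-1$, and let $\Qtot$ have maximum $0$ at $(A,A)$ with $\Delta:=\max_{\bm{u}}\Qtot-\Qtot$ given by $\Delta(B,A)=\Delta(C,A)=\Delta(A,B)=1$, $\Delta(B,B)=\Delta(C,B)=3$, $\Delta(A,C)=5$, $\Delta(B,C)=\Delta(C,C)=6$. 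These $[Q_i]$ are legitimate factors and even satisfy \eqref{eq:regular_cond0} with $\Vtot=0$. The coordinate lines through $(A,A)$ force $a_{11}\le 1$ and $a_{22}\le 1$; the direction-$1$ line at $u_2=B$ then has minimum $1-a_{22}$, forcing $a_{22}=1$; but the direction-$1$ line at $u_2=C$ has minimum $\min\bigl(5-a_{22},\,6-a_{11}-a_{22}\bigr)\ge 4>0$, violating \eqref{eq:min2}. Hence no positive diagonal rescaling of these factors works, no matter how cleverly calibrated.

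The missing idea --- and the paper's actual argument --- is to abandon the affine form and modify the factors \emph{per action}. If a line $(i,\bm{u}_{-i})$ has minimum $\beta>0$, then by \eqref{eq:regular_cond1} the tuple $\bm{u}_{-i}$ must contain some $u_j\neq\bar{u}_j$; replace $Q_j(\tau_j,u_j)$ by $Q_j(\tau_j,u_j)-\beta$. This preserves $\arg\max_{u_j}Q_j$ (hence {\bf IGM}, condition \eqref{eq:regular_cond1}, and $\Vtot$), keeps $D$ nonnegative everywhere, and drives the offending line's minimum to zero; iterating over actions constructs factors satisfying \eqref{eq:min2}. In the example above, subtracting $\beta=4$ from $Q_2(C)$ finishes the job in a single step. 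The degrees of freedom in this construction scale like the number of individual action values, not like $N$, which is exactly what your approach lacks. This also settles the interpretive question you raise at the end: the ``necessity'' in Theorem~\ref{thm:min} must be read the way the paper proves it --- existence of \emph{some} factors (obtained by this replacement process) satisfying \eqref{eq:regular_cond1} and \eqref{eq:min2} --- rather than as a diagonal affine rescaling of arbitrary given factors, which the counterexample shows is false; and no extra structural assumption on $\Qtot$ is needed.
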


The proof is presented in the Supplementary. The key idea behind \eqref{eq:min2} lies in what conditions to enforce on {\em non-optimal} actions. It stipulates that $\Qtot'(\bm{\tau},\bm{u}) - \Qtot(\bm{\tau},\bm{u}) + \Vtot(\bm{\tau})$ be set to zero for some actions. Now, it is not possible to zero this value for every action $u$, but it is available for at least one action whilst still abiding by Theorem~\ref{thm:ifonlyif}. 
It is clear that condition~\eqref{eq:min2} is stronger than condition \eqref{eq:regular_cond2}, as desired.
For non-optimal actions $\bm{u} \neq \bm{\bar{u}}$, the conditions of Theorem~\ref{thm:ifonlyif} are satisfied when $\Qtot(\bm{\tau},\bm{u}) - \Vtot(\bm{\tau}) \le \Qtot’(\bm{\tau},\bm{u}) \le \Qtot’(\bm{\tau},\bm{\bar{u}})$ for any given $\bm{\tau}$. Under this condition, however, there can exist a non-optimal action $\bm{u} \neq \bm{\bar{u}}$ whose $\Qtot’(\bm{\tau},\bm{u})$ is comparable to $\Qtot’(\bm{\tau},\bm{\bar{u}})$ but  $\Qtot(\bm{\tau},\bm{u})$ is much smaller than  $\Qtot(\bm{\tau},\bm{\bar{u}})$. 
This may cause instability in the practical learning process.
However, the newly devised condition~\eqref{eq:min2} compels $\Qtot’(\bm{\tau},\bm{u})$ to track $\Qtot(\bm{\tau},\bm{u})$ even for the problematic non-optimal actions mentioned above. This helps in widening the gap between $\Qtot'(\bm{\tau},\bm{u})$ and $\Qtot'(\bm{\tau},\bm{\bar{u}})$, and this gap makes the algorithm more stable. 
Henceforth, we call the deep MARL method outlined by Theorem~\ref{thm:min} {\bf $\algonameadv$}, to distinguish it from the one due to condition~\eqref{eq:regular_cond2}. 

\myparagraph{Counterfactual joint networks}
To reflect our idea of \eqref{eq:min2}, we now propose a counterfactual joint network, which replaces the joint action-value network of $\algonamebasic$, to efficiently calculate
$\Qtot(\bm{\tau},\cdot,\bm{u}_{-i})$ and
$\Qtot'(\bm{\tau},\cdot,\bm{u}_{-i})$ for all $i \in \set{N}$ with only one
forward pass. To this end, in the $\algonameadv$ module, 
each agent has a counterfactual joint network with the output of
$\Qtot(\bm{\tau},\cdot,\bm{u}_{-i})$ for each possible action,
given other agents' actions. As a joint action-value network, we use
$h_{V,i}(\tau_i)$ and the combined hidden features $\sum_{j\ne i} h_{Q,j}(\tau_j,u_j)$ from other agents.
Finally, $\Qtot'(\bm{\tau},\cdot,\bm{u}_{-i})$ is calculated as
$Q_i(\tau_i,\cdot) + \sum_{j\ne i} Q_j(\tau_j,u_j)$ for all agents. This architectural choice is realized by choosing the loss function to be $L_\text{nopt-min}$, replacing $L_{\text{nopt}}$ in $\algonamebasic$ as follows:
\begin{align*}
L_{\text{nopt-min}}(\bm{\tau}, \bm{u}, r, \bm{\tau}'; \bm{\theta})&= {1 \over N}\sum_{i=1}^N (\min_{u_i \in \set{U}}D(\bm{\tau},u_i,\bm{u}_{-i}))^2,
\end{align*} 
where 
$$\!\! D(\bm{\tau},u_i,\bm{u}_{-i})\! = \! 
\Qtot'(\bm{\tau},u_i,\bm{u}_{-i}) -
\hQtot(\bm{\tau},u_i,\bm{u}_{-i})\! +\! 
\Vtot(\bm{\tau}).$$ 
In $\algonameadv$, $L_{\text{td}}, L_{\text{opt}}$ are also used, but they are also computed for all agents.

\subsection{Example: Single-state Matrix Game}\label{sec:matgame}
In this subsection, we present how $\algoname$ performs compared to existing value factorization ideas such as VDN and QMIX, and 
how the two variants $\algonamebasic$ and $\algonameadv$ behave. 
The matrix game and learning results are shown in Table~\ref{table:matrix}\footnote{We present only $\Qtot$ and $\Qtot'$, because in fully observable cases ({\em i.e.}, observation function is bijective for all $i$) Theorem~\ref{thm:ifonlyif} holds for $\Vtot(\tau)=0$. We discuss further in Supplementary.}.
This symmetric matrix game has the optimal joint action $(A,A)$, and captures a very simple cooperative multi-agent task, where we have two users with three actions each. Evaluation with more complicated tasks are provided in the next subsection.
We show the results of VDN, QMIX, and $\algoname$ through a full exploration ({\em i.e.}, $\epsilon=1$ in $\epsilon$-greedy) conducted over 20,000 steps. Full exploration guarantees to explore all available game states. Therefore, we can compare only the expressive power of the methods. Other details are included in the Supplementary. 

\newcolumntype{P}[1]{>{\centering\arraybackslash}p{#1}}
\begin{table}[t]
\scriptsize
    \setlength{\tabcolsep}{-1pt}
    \begin{minipage}{.48\columnwidth}
      \centering
        \begin{tabular}{|P{1cm}||P{1cm}|P{1cm}|P{1cm}|}
        \hline
        \backslashbox{$u_1$}{$u_2$}& A & B & C\\ \hline \hline
        A & \textbf{8}& -12 & -12 \\ \hline
        B & -12 & 0 & 0 \\ \hline
        C & -12 & 0 & 0 \\ \hline
        \end{tabular}
        \subcaption{Payoff of matrix game }
        \label{table:p7orig}
    \end{minipage}
    \begin{minipage}{.48\columnwidth}
      \centering
        \begin{tabular}{|P{1cm}||P{1cm}|P{1cm}|P{1cm}|}
        \hline
        \backslashbox{$Q_1$}{$Q_2$} & \textbf{4.16}(A) & 2.29(B) & 2.29(C)\\ \hline \hline
        \textbf{3.84}(A) & \textbf{8.00}  & 6.13 & 6.12 \\ \hline
        -2.06(B) & 2.10 & 0.23 & 0.23 \\ \hline
        -2.25(C) & 1.92 & 0.04 & 0.04 \\ \hline
        \end{tabular}
        \subcaption{$\algoname$:  $Q_1, Q_2,\Qtot'$}
        \label{table:matrix-qreg}
    \end{minipage}\\
    \begin{minipage}{.48\columnwidth}
      \centering
        \begin{tabular}{|P{1cm}||P{1cm}|P{1cm}|P{1cm}|}
        \hline
        \backslashbox{$u_1$}{$u_2$}& A & B & C\\ \hline \hline
        A & \textbf{8.00}  & -12.02 & -12.02 \\ \hline
        B & -12.00 & 0.00 & 0.00 \\ \hline
        C & -12.00 & 0.00 & -0.01 \\ \hline
        \end{tabular}
        \subcaption{$\algoname$: $\Qtot$ }
        \label{table:matrix-qreg'}
    \end{minipage}
    \begin{minipage}{.48\columnwidth}
      \centering
        \begin{tabular}{|P{1cm}||P{1cm}|P{1cm}|P{1cm}|}
        \hline
        \backslashbox{$u_1$}{$u_2$}& A & B & C\\ \hline \hline
        A & \textbf{0.00}  & 18.14 & 18.14 \\ \hline
        B & 14.11 & 0.23 & 0.23 \\ \hline
        C & 13.93 & 0.05 & 0.05 \\ \hline
        \end{tabular}
        \subcaption{$\algoname$: $\Qtot' - \Qtot$}
        \label{table:matrix-F}
    \end{minipage}\\
        \begin{minipage}{.48\columnwidth}
      \centering
        \begin{tabular}{|P{1cm}||P{1cm}|P{1cm}|P{1cm}|}
            \hline
            \backslashbox{$Q_1$}{$Q_2$} & -3.14(A) & \textbf{-2.29}(B) & -2.41(C)\\ \hline \hline
            -2.29(A) & -5.42  & -4.57 & -4.70 \\ \hline
            -1.22(B) & -4.35 & -3.51 & -3.63 \\ \hline
            \textbf{-0.73}(C) & -3.87 & \textbf{-3.02} & -3.14 \\ \hline
        \end{tabular}
        \subcaption{VDN: $Q_1, Q_2,\Qtot$}
        \label{table:matrix-vdn}
    \end{minipage}
    \begin{minipage}{.48\columnwidth}
      \centering
         \begin{tabular}{|P{1cm}||P{1cm}|P{1cm}|P{1cm}|}
        \hline
        \backslashbox{$Q_1$}{$Q_2$} & -0.92(A) & 0.00(B) & \textbf{0.01}(C)\\ \hline \hline
        -1.02(A) & -8.08  & -8.08 & -8.08 \\ \hline
        \textbf{0.11}(B) & -8.08 & 0.01 & \textbf{0.03} \\ \hline
        0.10(C) & -8.08 & 0.01 & 0.02 \\ \hline
        \end{tabular}
        \subcaption{QMIX: $Q_1, Q_2,\Qtot$}
        \label{table:matrix-qmix}
    \end{minipage}
        \caption{Payoff matrix of the one-step game and reconstructed $\Qtot$ results on the game. Boldface means optimal/greedy actions from the state-action value}
\vspace{-0.2cm}
\label{table:matrix}
\end{table}

\myparagraph{Comparison with VDN and QMIX}
Tables \ref{table:matrix-qreg}-\ref{table:matrix-qmix} show the learning results of 
$\algoname$, VDN, and QMIX. Table~\ref{table:matrix-qreg} shows that 
$\algoname$ enables each agent to jointly take the optimal action only by using its own locally optimal action, meaning successful factorization. Note that Tables~\ref{table:matrix-qreg'} and \ref{table:matrix-F} demonstrate the difference between $\Qtot$ and $\Qtot'$, stemming from our transformation, where their optimal actions are the nonetheless same. 
Table~\ref{table:matrix-F} shows that $\algoname$ also satisfies \eqref{eq:regular_cond0}, thereby validating our design principle as described in Theorem~\ref{thm:ifonlyif}.
However, in VDN, agents 1's and 2's individual optimal actions are $C$ and $B$, respectively. VDN fails to factorize because the structural constraint of additivity $\Qtot(\bm{u}) = \sum_{i=1,2} Q_i(u_i)$ is enforced, leading to deviations from {\bf IGM}, whose sufficient condition is additivity, {\em i.e.}, $\Qtot(\bm{u}) = \sum_{i=1,2} Q_i(u_i)$ for all $\bm{u}=(u_1,u_2).$ QMIX also fails in factorization in a similar manner due to the structural constraint of monotonicity.


\begin{figure}[!t]
 
\hspace*{\fill}
\begin{subfigure}[t]{.25\columnwidth}
  \includegraphics[width=\linewidth]{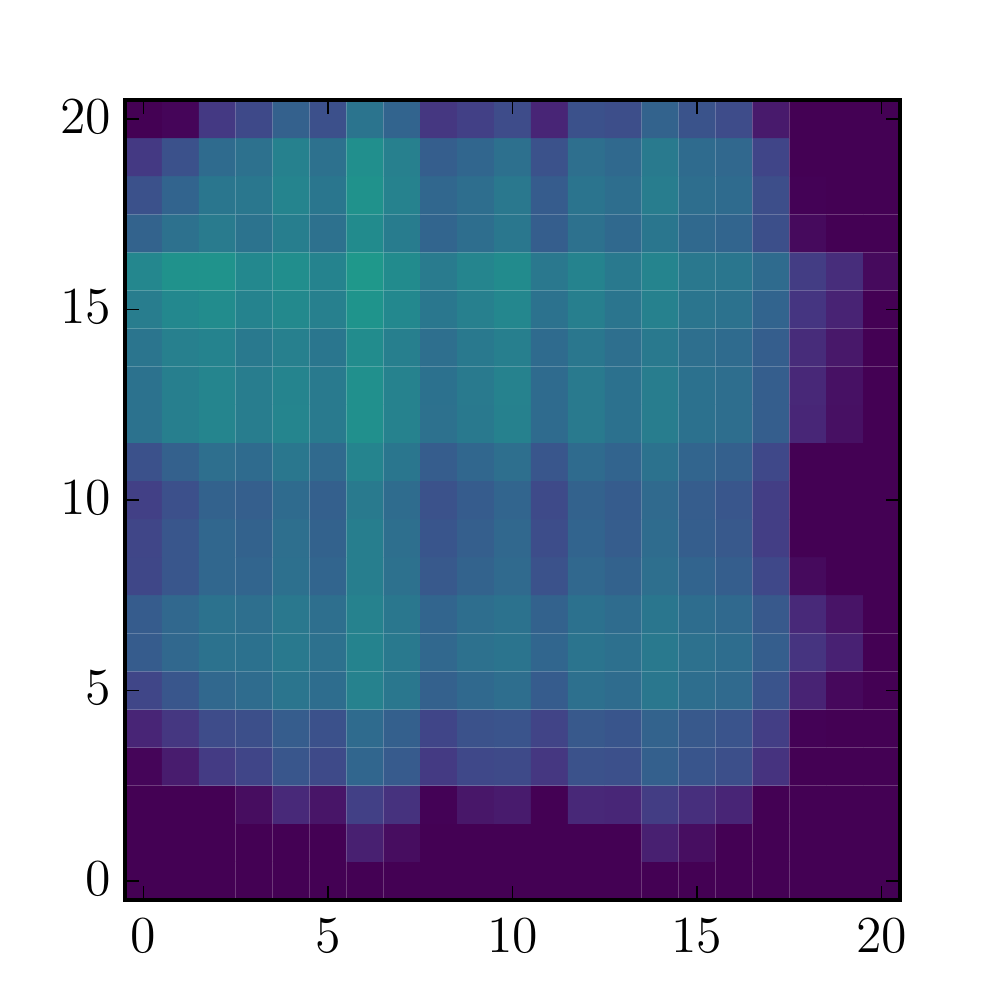}
  \caption{\scriptsize $\Qtot$: 1000 step}
  \label{fig:PQgraph1}
\end{subfigure}\hspace*{\fill}
\begin{subfigure}[t]{.25\columnwidth}
  \includegraphics[width=\linewidth]{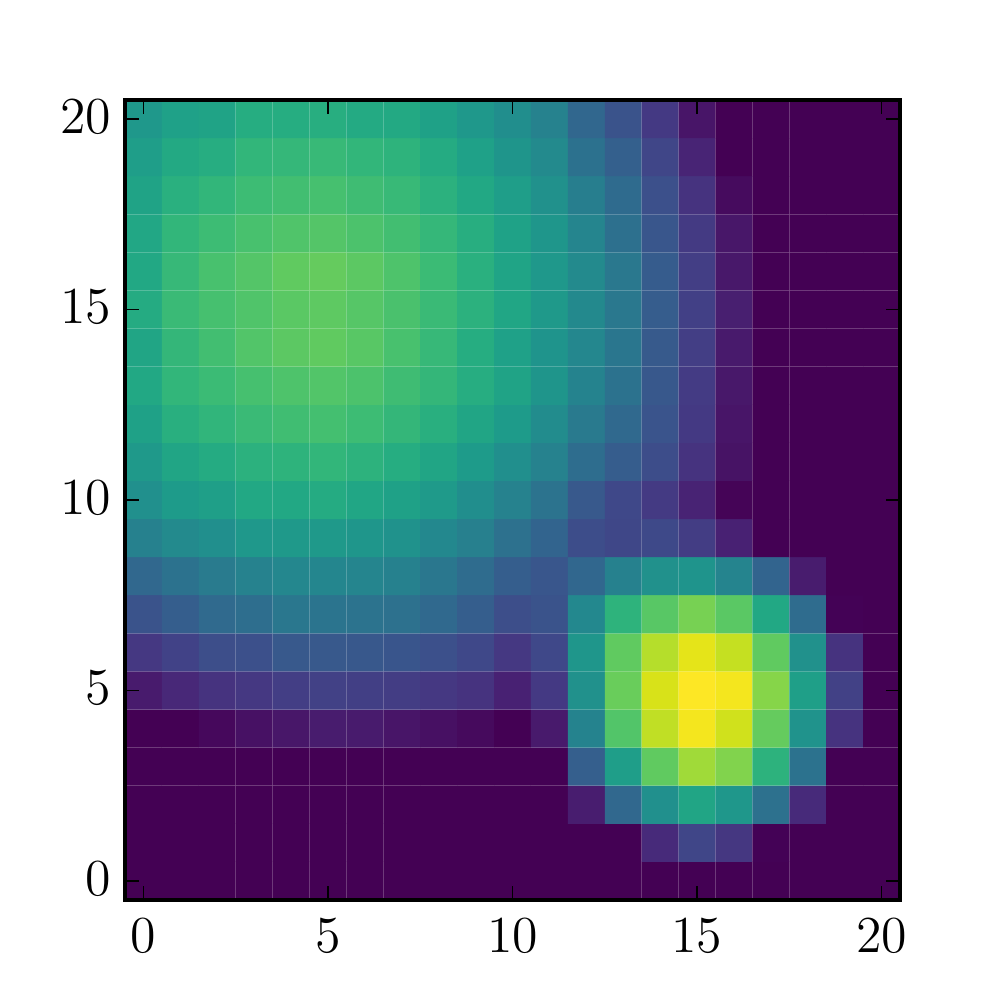}
  \caption{\scriptsize $\Qtot$: 2000 step}
  \label{fig:PQgraph2}
\end{subfigure}\hspace*{\fill}
\begin{subfigure}[t]{.25\columnwidth}
  \includegraphics[width=\linewidth]{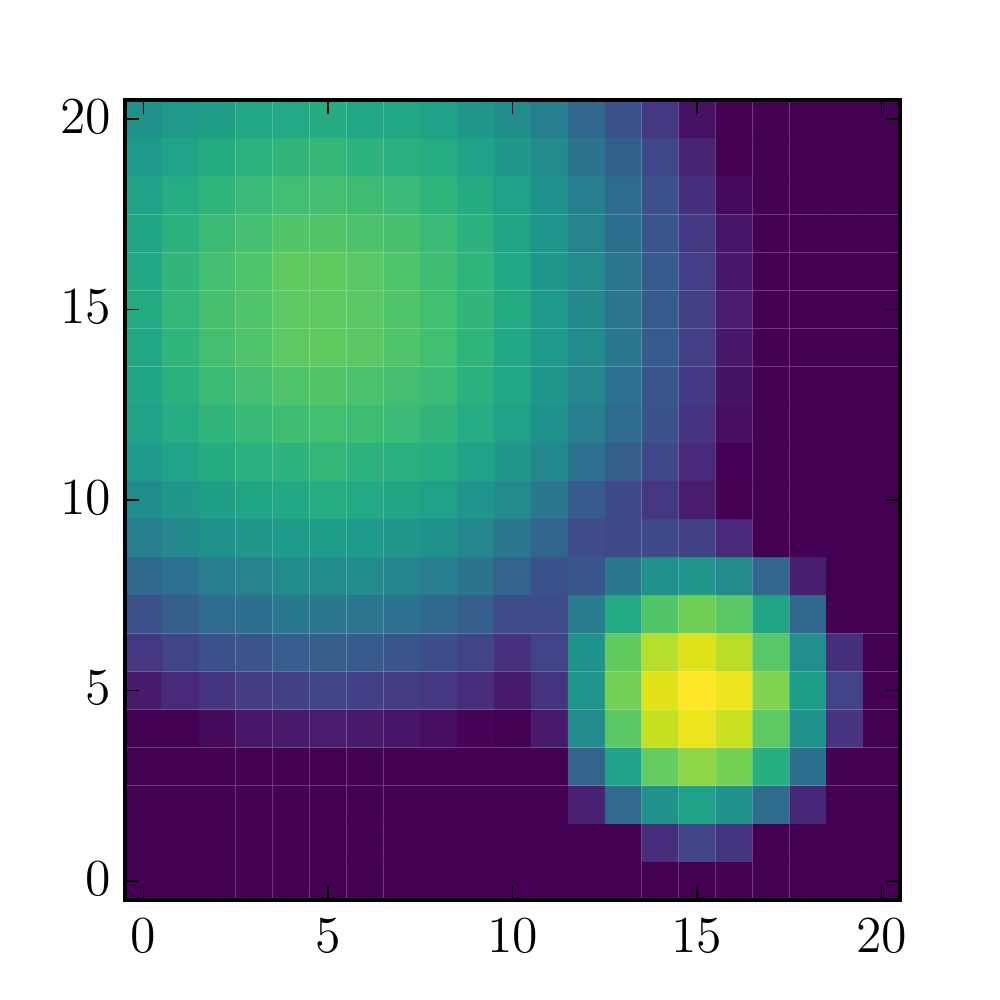}
  \caption{\scriptsize $\Qtot$: 3000 step}
  \label{fig:PQgraph3}
\end{subfigure}\hspace*{\fill}
\begin{subfigure}[t]{.25\columnwidth}
  \includegraphics[width=\linewidth]{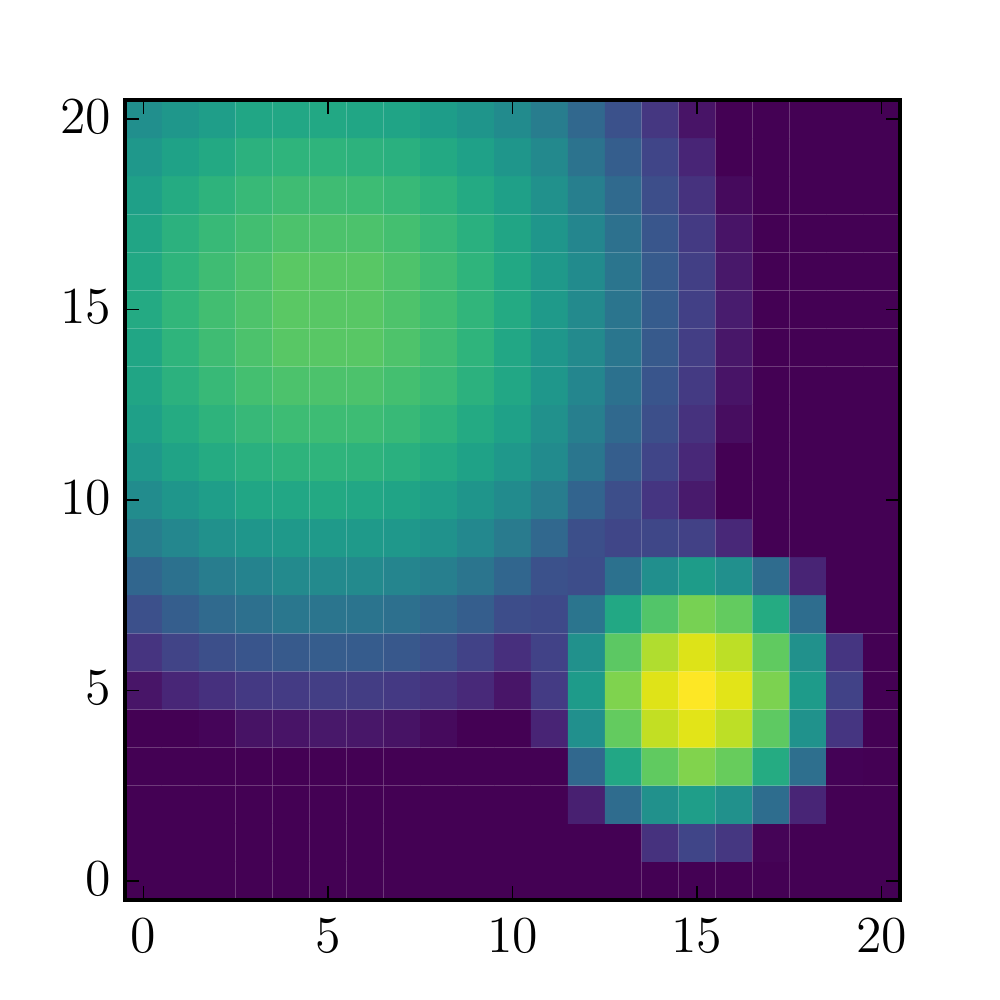}
  \caption{\scriptsize $\Qtot$: 4000 step}
  \label{fig:PQgraph4}
\end{subfigure}\hspace*{\fill}\\
\hspace*{\fill}
\begin{subfigure}[t]{.25\columnwidth}
  \includegraphics[width=\linewidth]{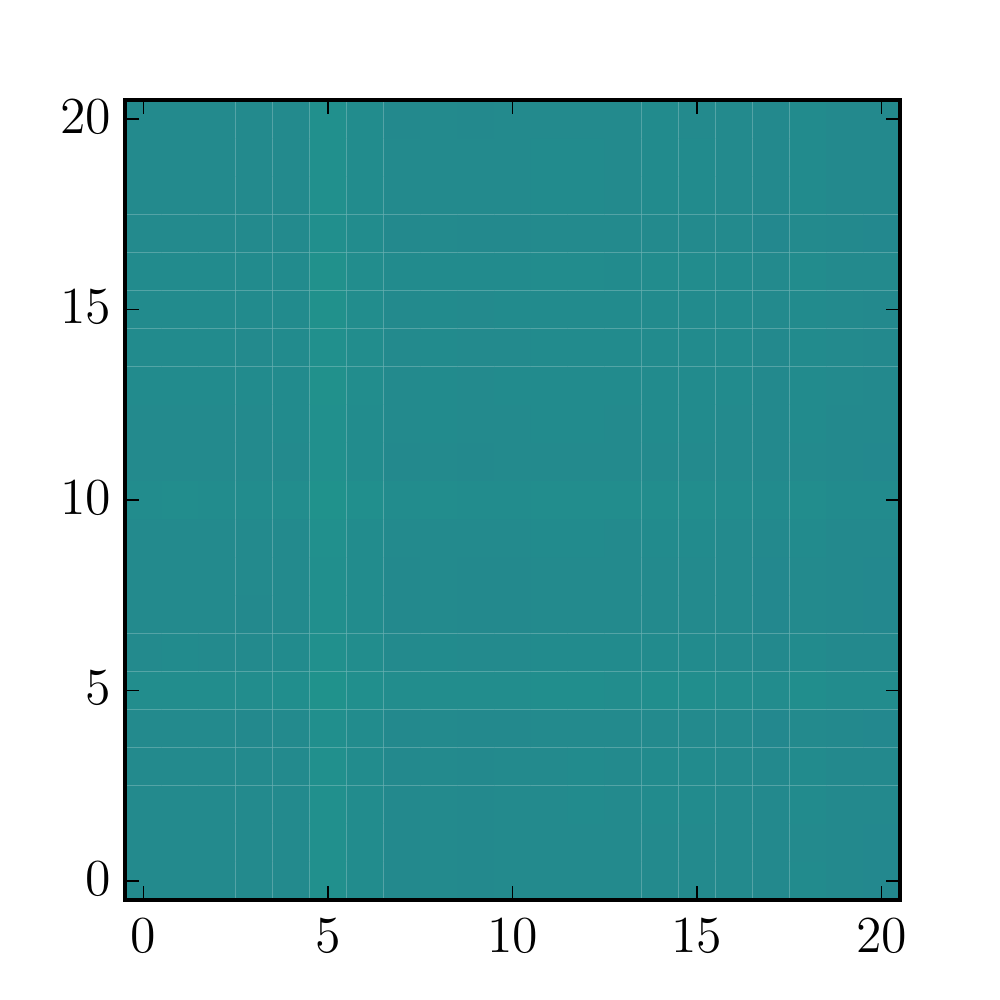}
  \caption{\scriptsize {\bf base}: 1000 step}
  \label{fig:PCgraph1}
\end{subfigure}\hspace*{\fill}
\begin{subfigure}[t]{.25\columnwidth}
  \includegraphics[width=\linewidth]{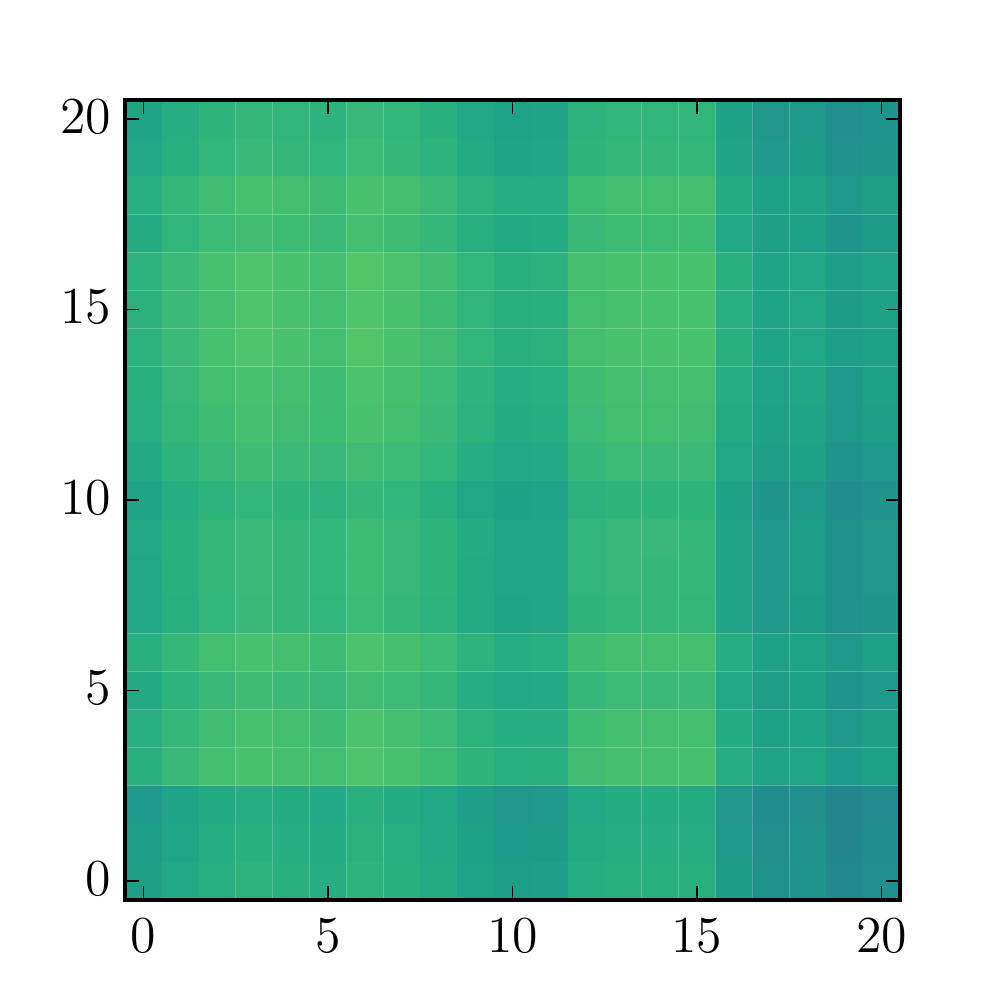}
  \caption{\scriptsize {\bf base}: 2000 step}
  \label{fig:PCgraph2}
\end{subfigure}\hspace*{\fill}
\begin{subfigure}[t]{.25\columnwidth}
  \includegraphics[width=\linewidth]{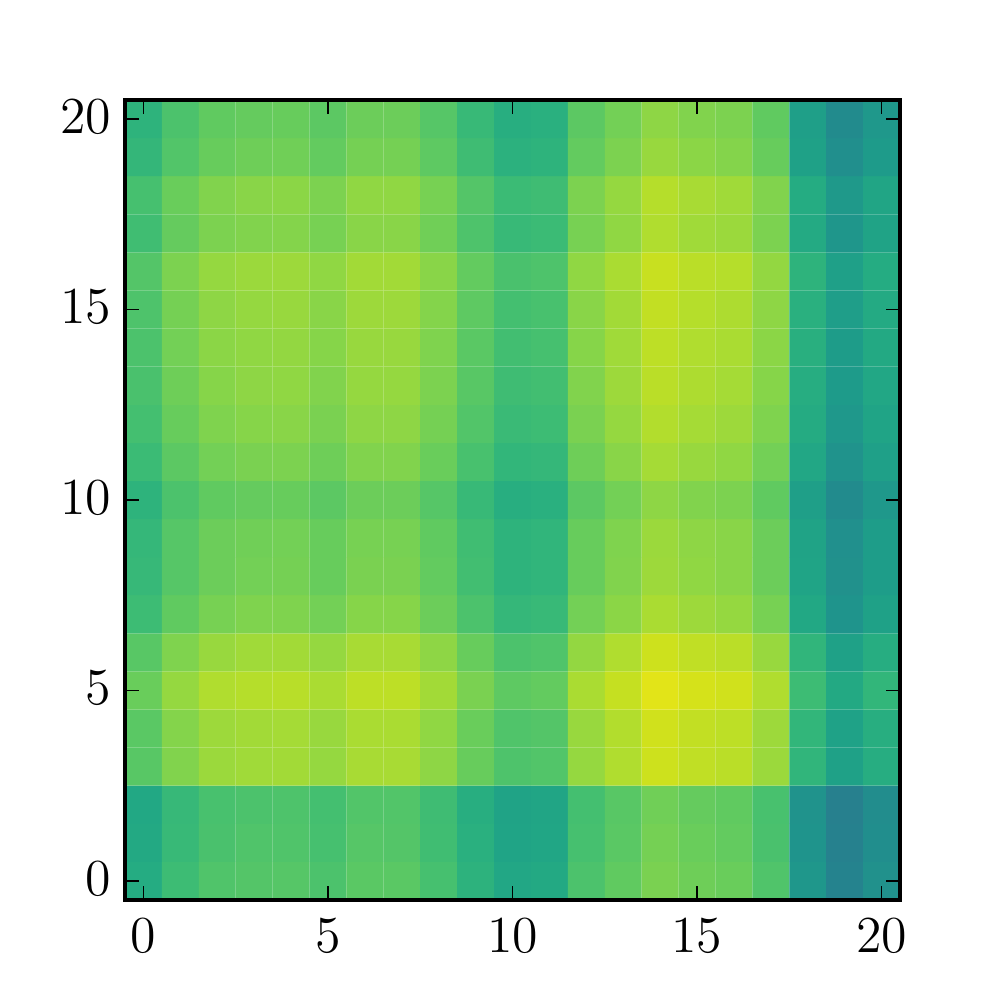}
  \caption{\scriptsize {\bf base}: 3000 step}
  \label{fig:PCgraph3}
\end{subfigure}\hspace*{\fill}
\begin{subfigure}[t]{.25\columnwidth}
  \includegraphics[width=\linewidth]{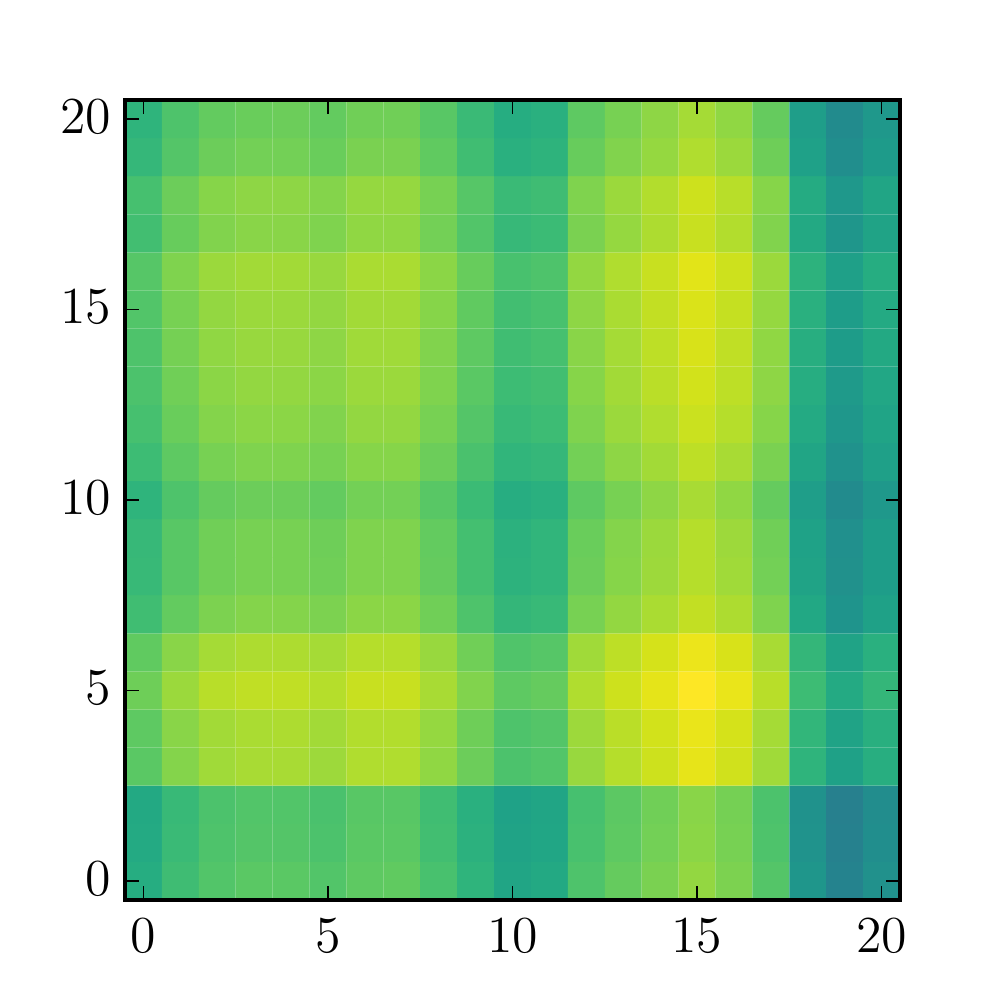}
  \caption{\scriptsize {\bf base}: 4000 step}
  \label{fig:PCgraph4}
\end{subfigure}\hspace*{\fill}\\
\begin{subfigure}[t]{.25\columnwidth}
  \includegraphics[width=\linewidth]{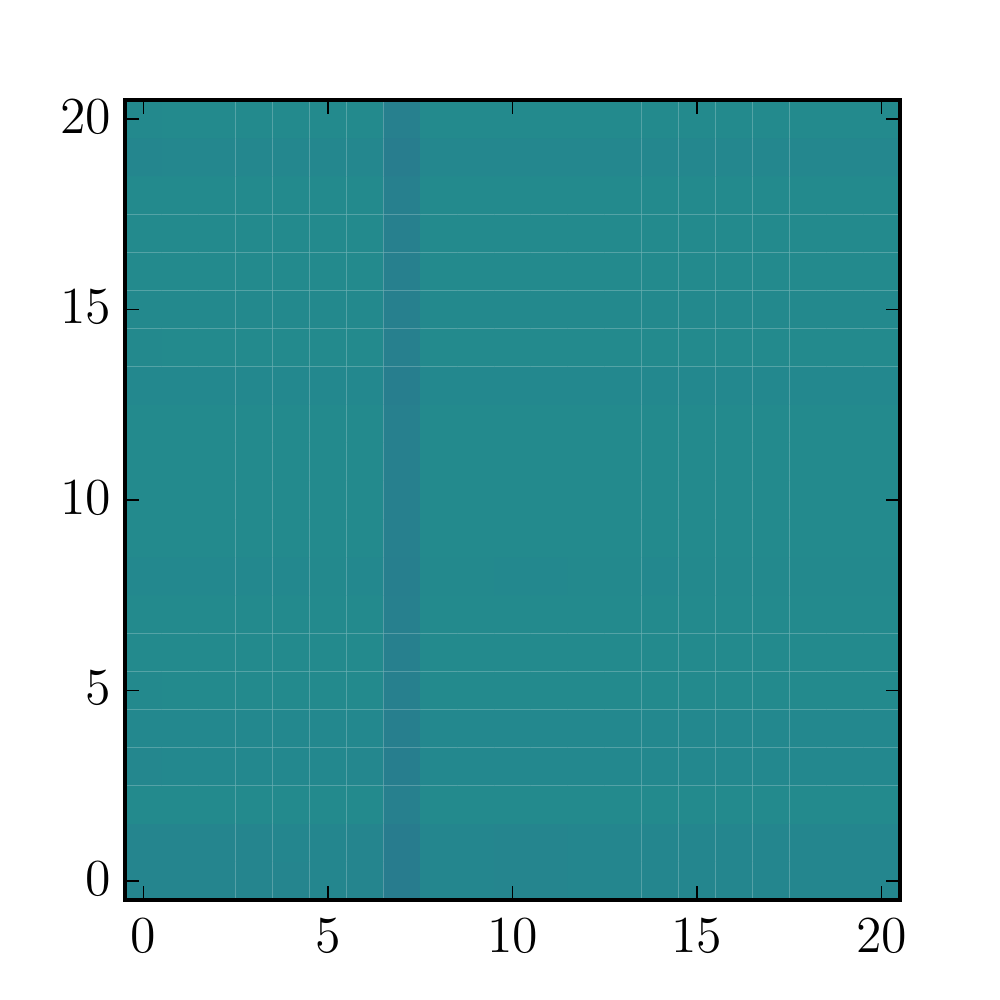}
  \caption{\scriptsize {\bf alt}: 1000 step}
  \label{fig:PDgraph1}
\end{subfigure}\hspace*{\fill}
\begin{subfigure}[t]{.25\columnwidth}
  \includegraphics[width=\linewidth]{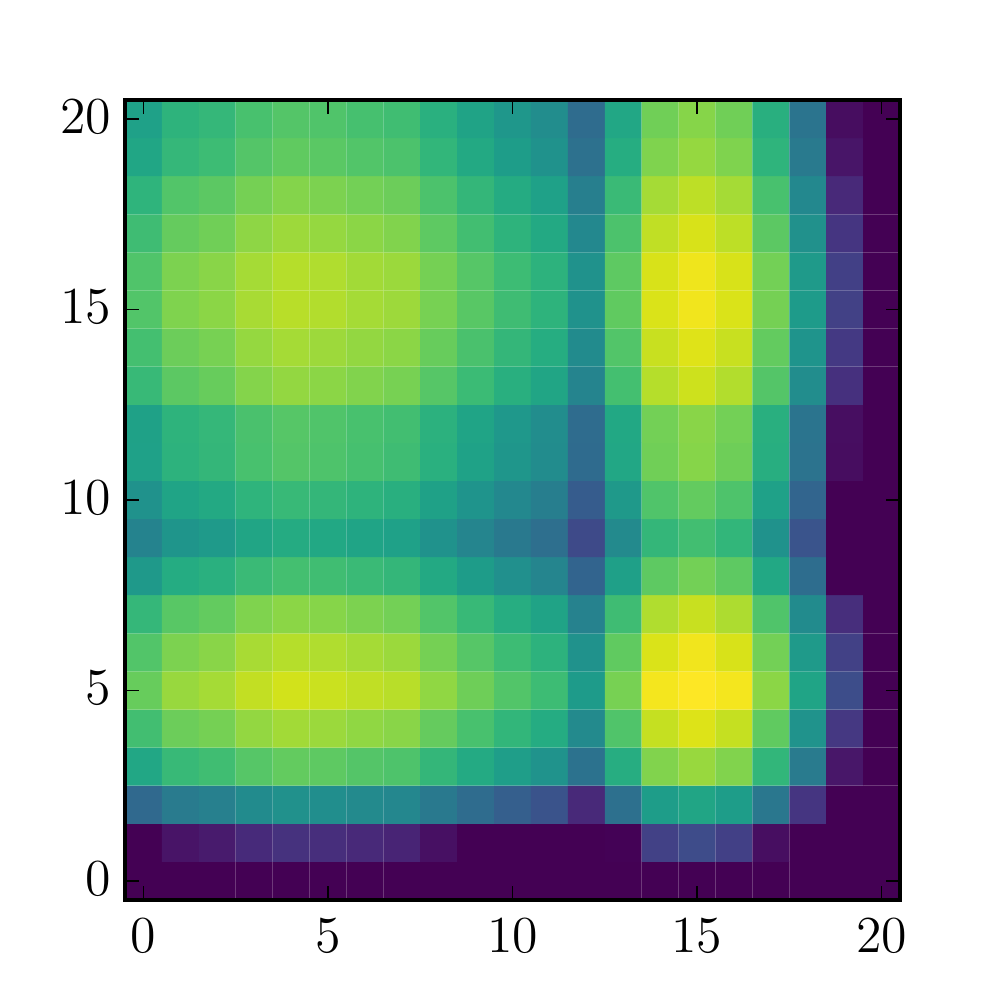}
  \caption{\scriptsize {\bf alt}: 2000 step}
  \label{fig:PDAgraph2}
\end{subfigure}\hspace*{\fill}
\begin{subfigure}[t]{.25\columnwidth}
  \includegraphics[width=\linewidth]{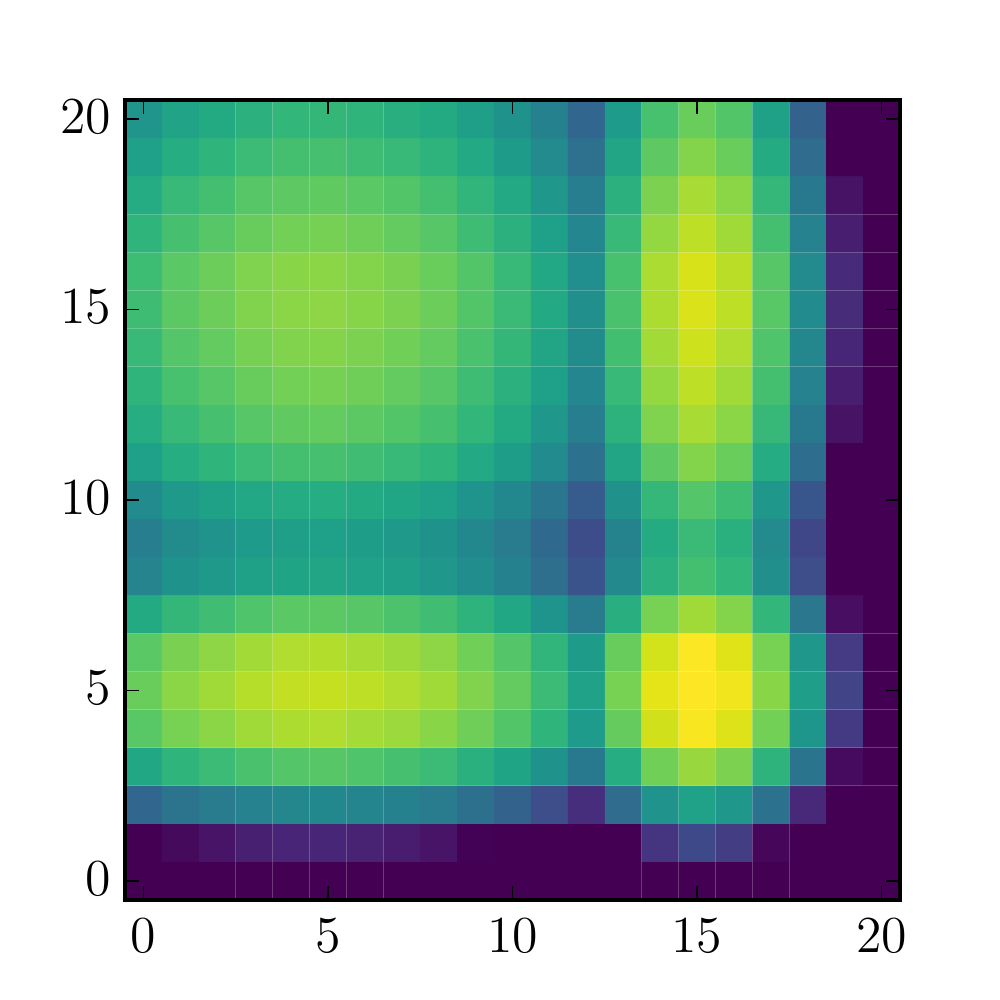}
  \caption{\scriptsize {\bf alt}: 3000 step}
  \label{fig:PDgraph3}
\end{subfigure}\hspace*{\fill}
\begin{subfigure}[t]{.25\columnwidth}
  \includegraphics[width=\linewidth]{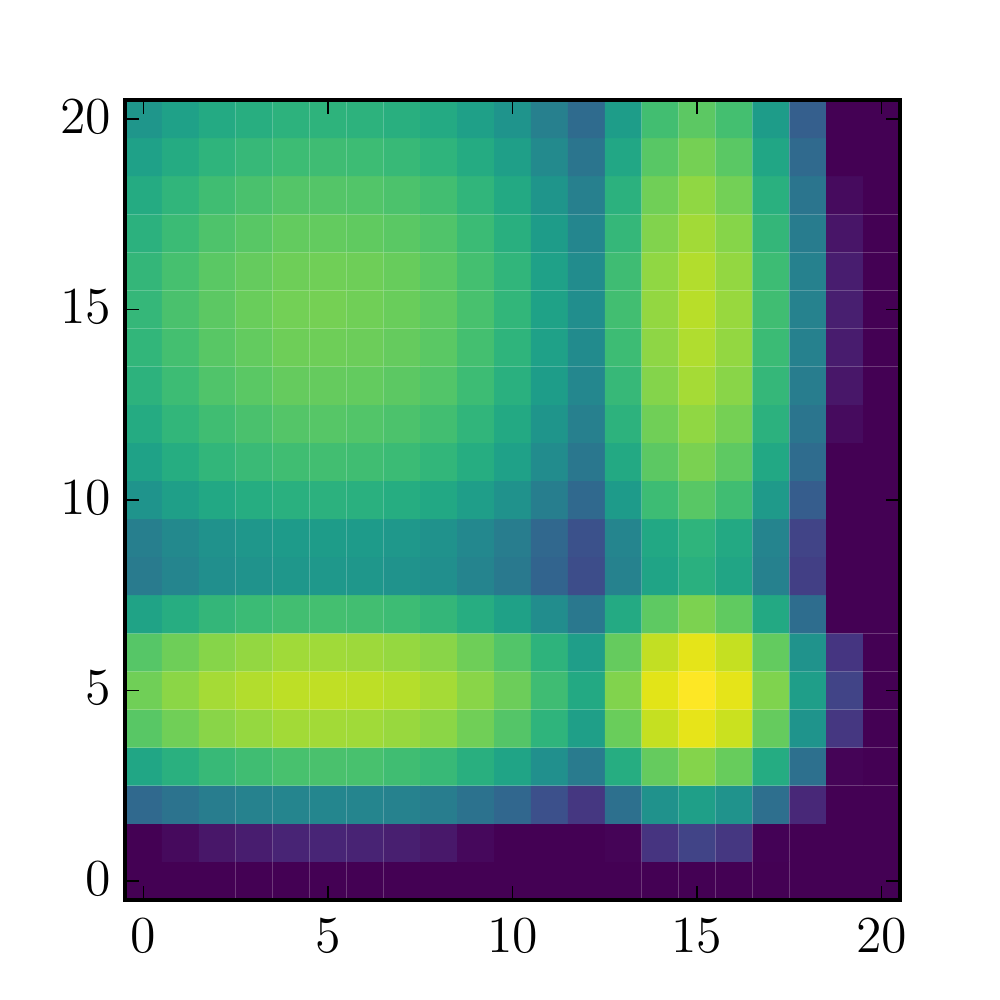}
  \caption{\scriptsize {\bf alt}: 4000 step}
  \label{fig:PDgraph4}
\end{subfigure}\hspace*{\fill}
\caption{{\bf base}: $\algonamebasic$, {\bf alt}: $\algonameadv$. $x$-axis and $y$-axis: agents 1 and 2's actions, respectively. Colored values represent the values of $\Qtot$ ((a)-(d)) and $\Qtot'$ ((e)-(l)) for selected actions.}
\label{fig:PCresult1}
\vspace{-0.4cm}
\end{figure}

\myparagraph{Impact of $\algonameadv$}
In order to see the impact of $\algonameadv$, we train the agents in a matrix game where two agents each have 21 actions. Figure~\ref{fig:PCresult1} illustrates the joint action-value function and its transformations of both $\algonamebasic$ and $\algonameadv$.
The result shows that both algorithms successfully learn the 
optimal action by correctly estimating the $\Qtot$ for $\bm{u}=\bm{\bar{u}}$ for any given state. 
$\Qtot'$ values for non-optimal actions are different from $\Qtot$, but it has a different tendency in each algorithm as follows. 
As shown in Figures~\ref{fig:PCgraph1}-\ref{fig:PCgraph4}, all $\Qtot'$ values in $\algonamebasic$ have only a small difference from the maximum value of $\Qtot$, whereas Figures~\ref{fig:PDgraph1}-\ref{fig:PDgraph4} show that $\algonameadv$ has the ability to more accurately distinguish optimal actions from non-optimal actions. Thus, in $\algonameadv$, the agent can smartly explore and have better sample efficiency to train the networks. This feature of $\algonameadv$ also prevents learning unsatisfactory policies in complex environments.
Full details on the experiment are included in the Supplementary.

\begin{figure*}[!t]
\hspace*{\fill}
\begin{subfigure}[t]{.33\textwidth}
  \includegraphics[width=\linewidth]{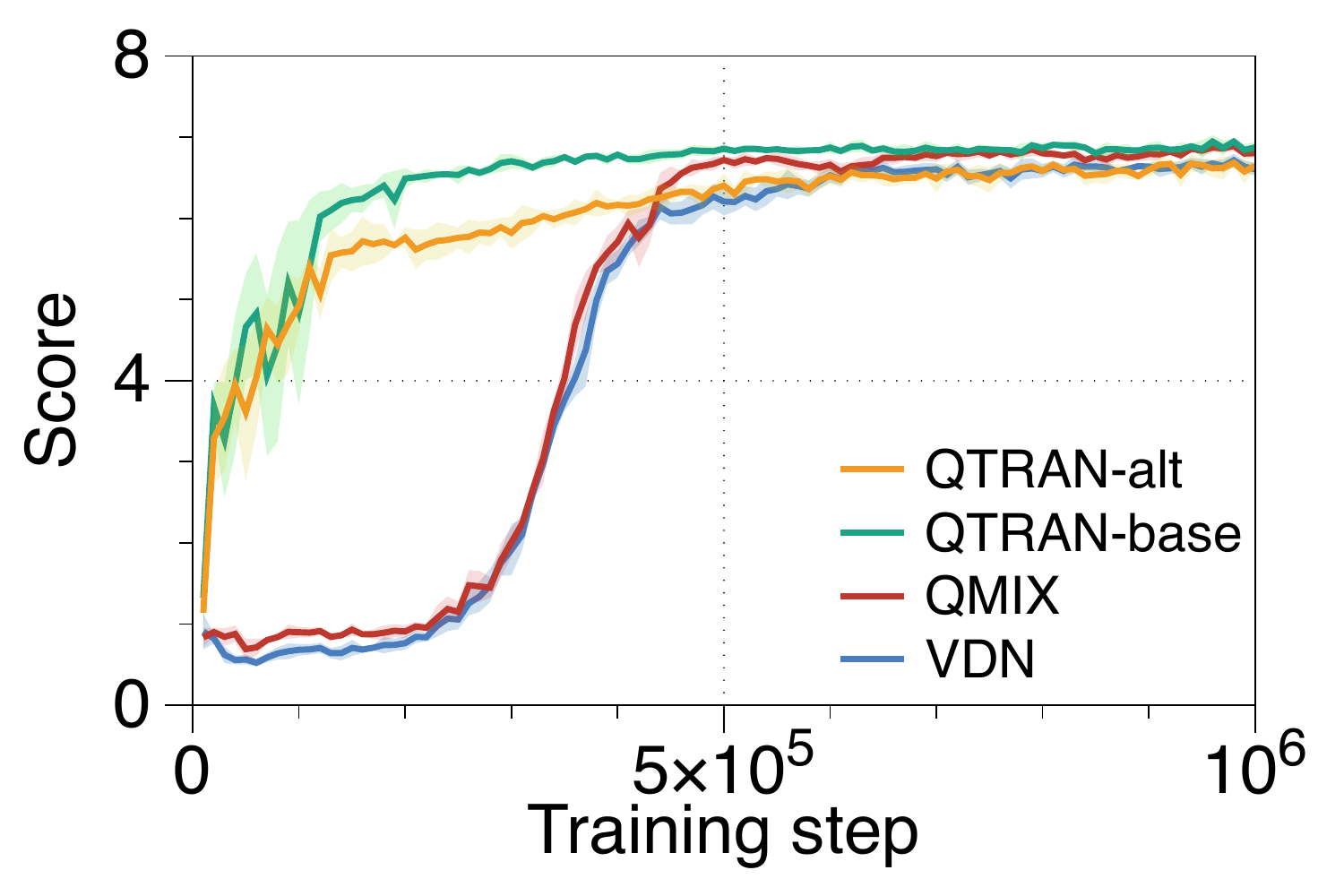}
  \caption{Gaussian Squeeze}
  \label{fig:GSgraph1}
\end{subfigure}\hspace*{\fill}
\begin{subfigure}[t]{.33\textwidth}
  \includegraphics[width=\linewidth]{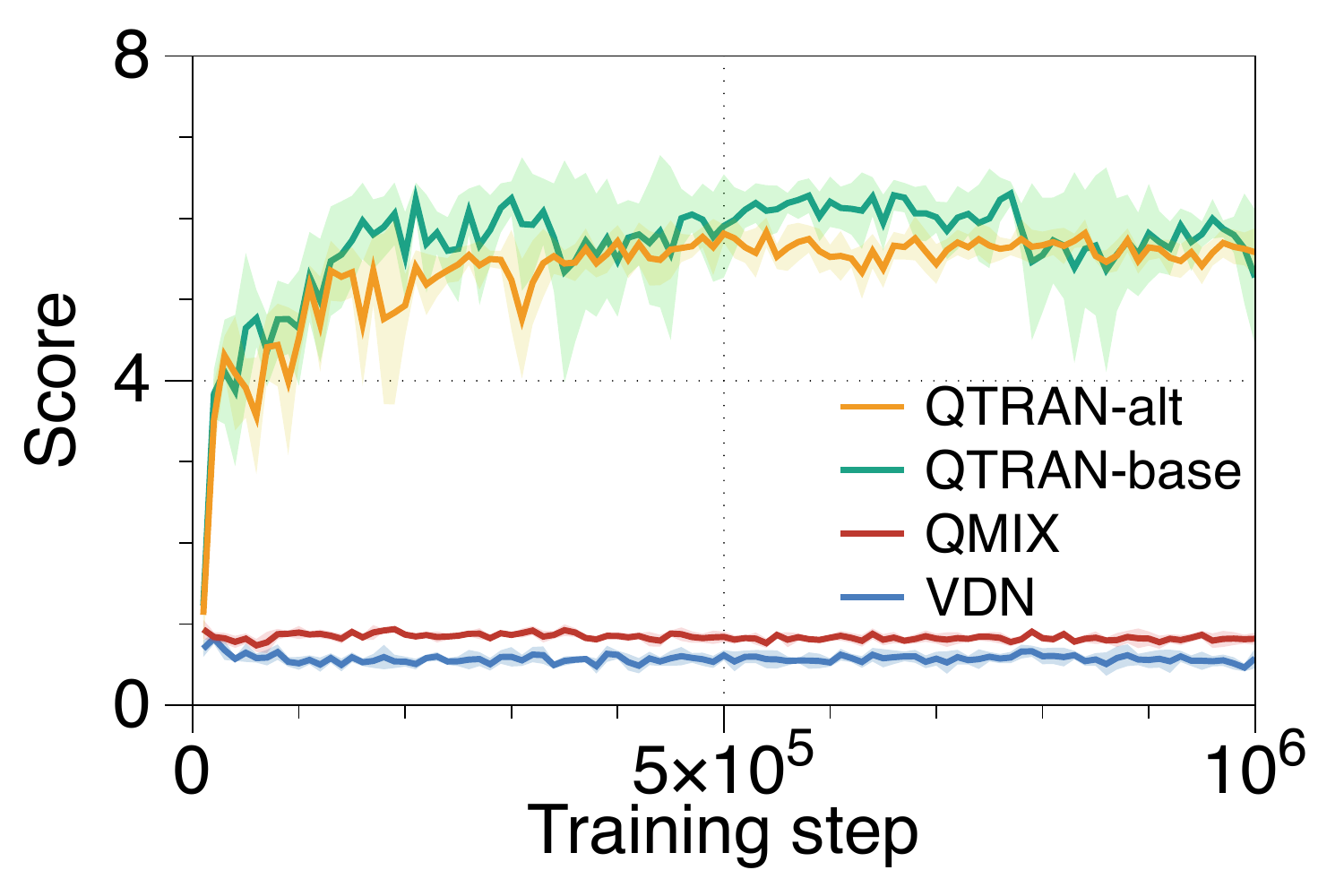}
  \caption{Gaussian Squeeze without epsilon decay}
  \label{fig:GSgraph2}
\end{subfigure}\hspace*{\fill}
\begin{subfigure}[t]{.33\textwidth}
  \includegraphics[width=\linewidth]{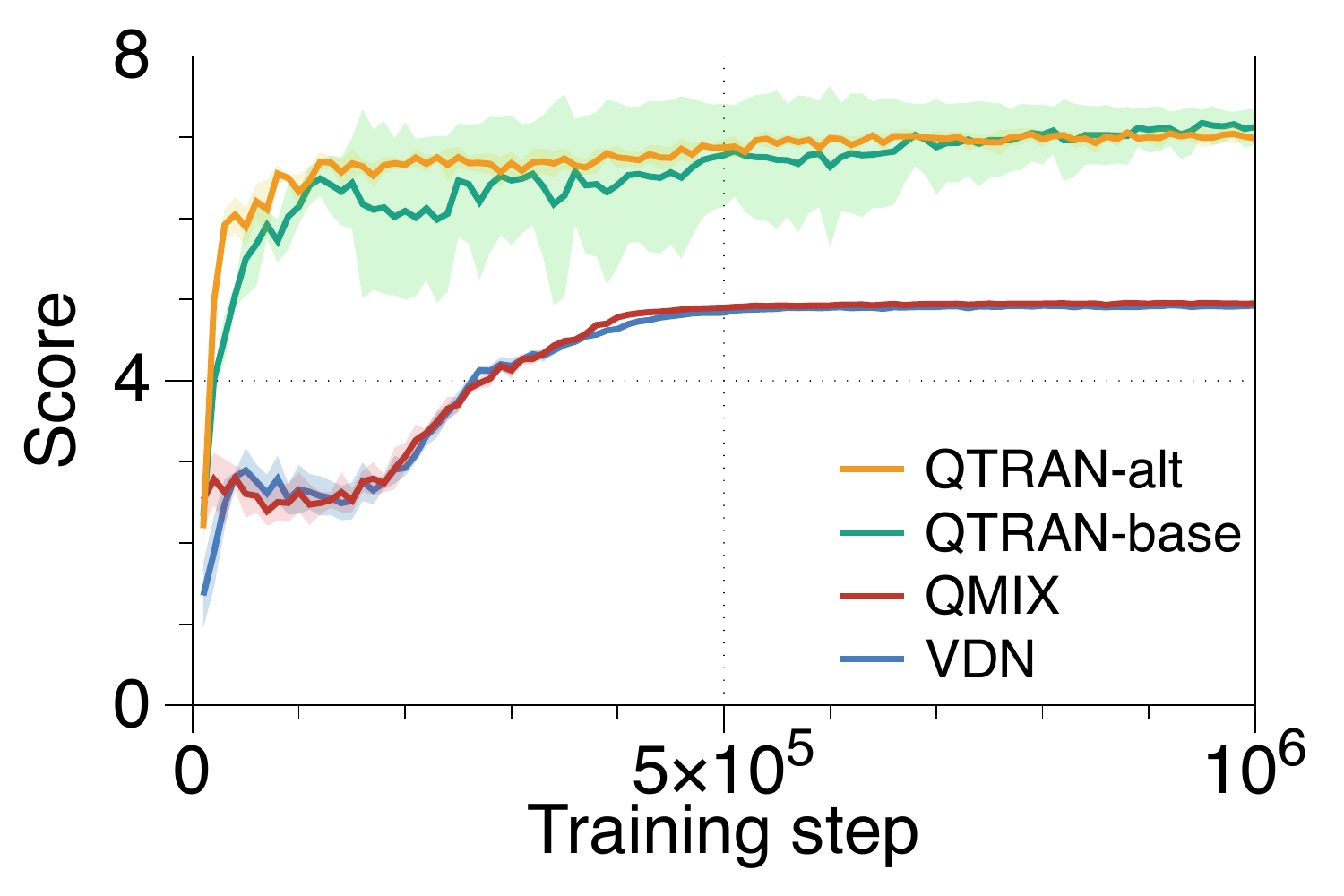}
  \caption{Multi-domain Gaussian Squeeze $K = 2$}
  \label{fig:GSgraph3}
\end{subfigure}\hspace*{\fill}
\caption{Average reward on the GS and GMS tasks with 95\% confidence intervals for VDN, QMIX, and $\algoname$}
\label{fig:GSresult1}
\vspace{-0.3cm}
\end{figure*}

\section{Experiment}


\subsection{Environments}
To demonstrate the performance of $\algoname$, we consider two environments: 
(i) Multi-domain Gaussian Squeeze and (ii) modified predator-prey. 
Details on our implementation of $\algoname$, the source code of our implementation in TensorFlow, and other experimental scripts are available in the Supplementary and a public repository: \url{https://github.com/Sonkyunghwan/QTRAN}.




\myparagraph{Multi-domain Gaussian Squeeze (MGS)}
Gaussian Squeeze (GS) \cite{holmesparker2014exploiting} is a simple non-monotonic multi-agent resource allocation problem, used in other papers, {\em e.g.}, \citet{pmlr-v80-yang18d} and \citet{chen2018factorized}. In GS, multiple homogeneous agents need to work together for efficient resource allocation whilst avoiding congestion. We use GS in conjunction with Multi-domain Gaussian Squeeze (MGS) as follows:
we have ten agents; each agent $i$ takes action $u_i$, which controls the resource usage level, ranging over $\{0,1,...,9\}$. Each agent has its own amount of unit-level resource $s_i \in [0,0.2]$, given by the environment {\em a priori}. Then, a joint action $\bm{u}$ determines the overall resource usage $x(\bm{u}) = \sum_i s_i \times u_i$. We assume that there exist $K$ domains, where the above resource allocation takes place. Then, the goal is to maximize the joint reward defined as $G(\bm{u}) = \sum_{k=1}^K x e^{-(x-\mu_k)^2 / {\sigma_k}^2}$, where $\mu_k$ and $\sigma_k$ are the parameters of each domain. 
Depending on the number of domains, GS has only one local maximum, whereas MGS has multiple local maxima. In our MGS setting, compared to GS, the optimal policy is similar to that in GS, and through this policy, the reward similar to that in GS can be obtained. Additionally, in MGS, a new sub-optimal ``pitfall'' policy that is easier to achieve but is only half as rewarding as the optimal policy.
The case when $K>1$ is usefully utilized to test the algorithms that are required to avoid sub-optimal points in the joint space of actions. The full details on the environment setup and hyperparameters are described in the Supplementary.




\myparagraph{Modified predator-prey (MPP)}
We adopt a more complicated environment by modifying the well-known predator-prey \citep{stone2000multiagent} in the grid world, used in many other MARL research. State and action spaces are constructed similarly to those of the classic predator-prey game. ``Catching'' a prey is equivalent to having the prey within an agent's observation horizon.
We extend it to the scenario that positive reward is given only if multiple predators catch a prey simultaneously, requiring a higher degree of cooperation. The predators get a team reward of 1, if two or more catch a prey at the same time, but they are given negative reward $-P$, when only one predator catches the prey. 


Note that the value of penalty $P$ also determines the degree of monotonicity, {\em i.e.}, the higher $P$ is, the less monotonic the task is. The prey that has been caught regenerated at random positions whenever caught by more than one predator. In our evaluation, we tested up to $N=4$ predators and up to two prey, and the game proceeds over fixed 100 steps. We experimented with six different settings with varying $P$ values and numbers of agents, where $N=2, 4$ and $P = 0.5, 1.0, 1.5$. For the $N=4$ case, we placed two prey; otherwise, just one.
The detailed settings are available in the Supplementary. 

\begin{figure*}[t!]
\begin{subfigure}[t]{.33\textwidth}
  \includegraphics[width=\linewidth]{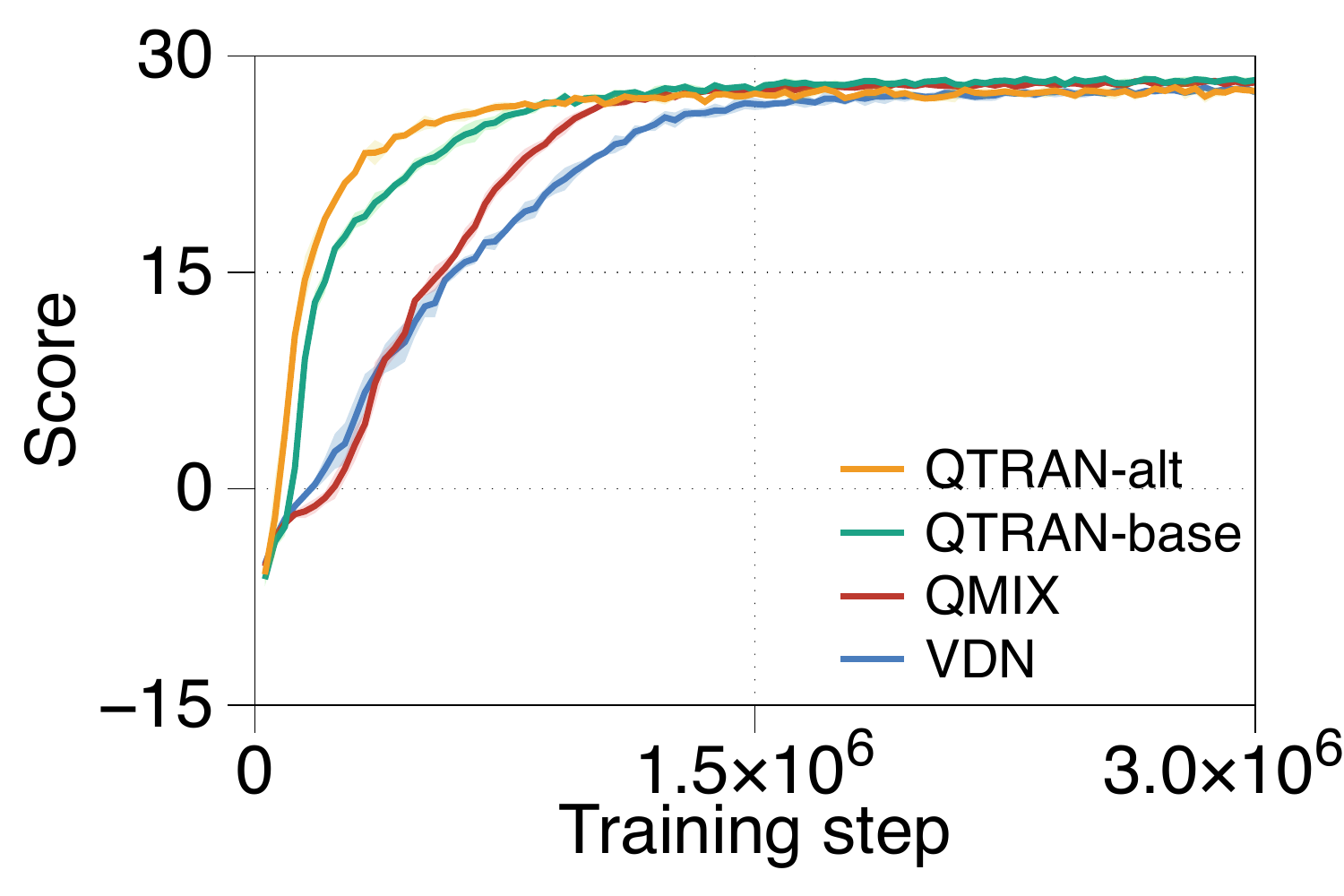}
  \caption{$N=2, P = 0.5$}
  \label{fig:PPgraph1}
\end{subfigure}\hspace*{\fill}
\begin{subfigure}[t]{.33\textwidth}
  \includegraphics[width=\linewidth]{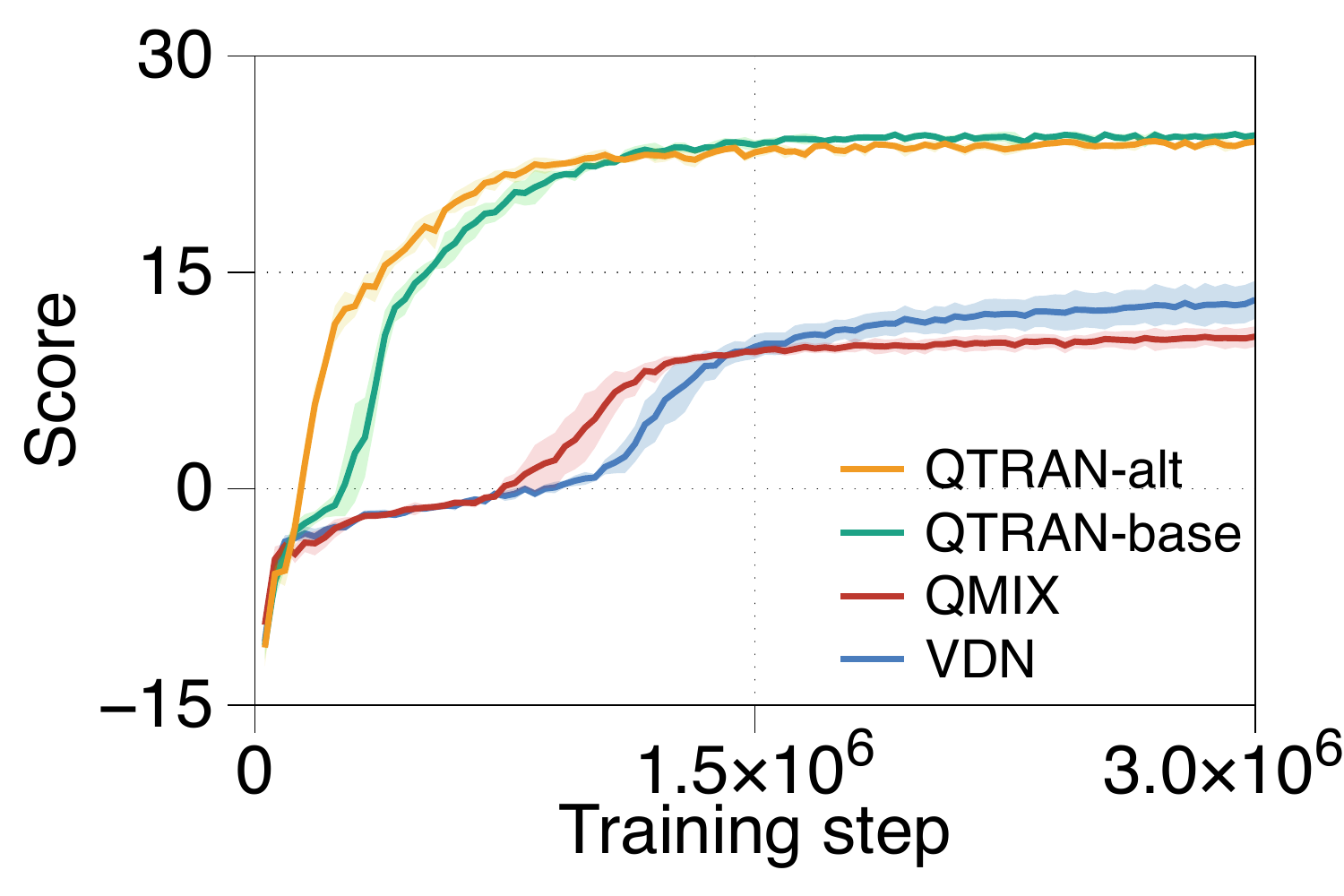}
  \caption{$N=2, P = 1.0$}
  \label{fig:PPgraph2}
\end{subfigure}\hspace*{\fill}
\begin{subfigure}[t]{.33\textwidth}
  \includegraphics[width=\linewidth]{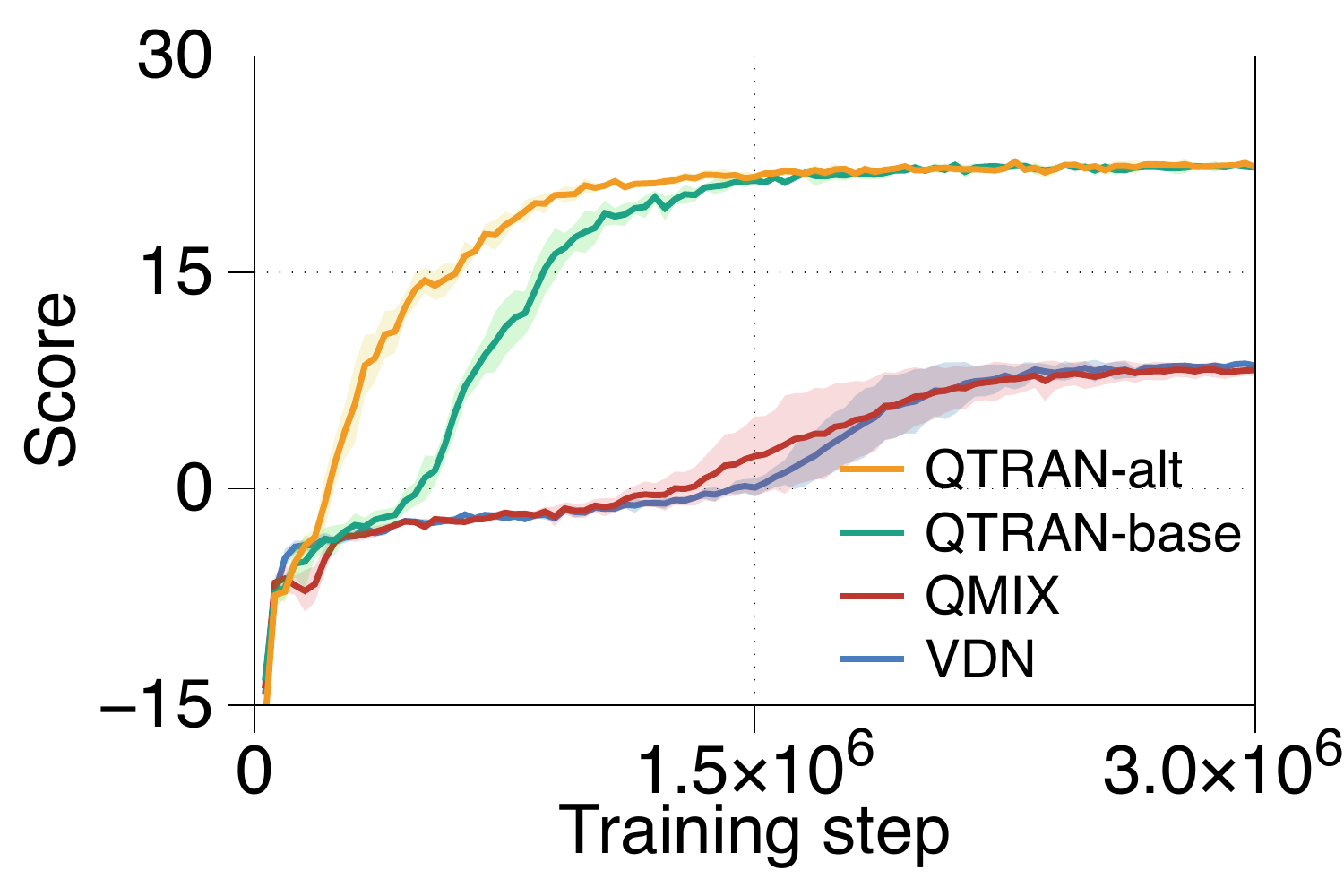}
  \caption{$N=2, P = 1.5$}
  \label{fig:PPgraph3}
\end{subfigure}\hspace*{\fill}\\
\begin{subfigure}[t]{.33\textwidth}
  \includegraphics[width=\linewidth]{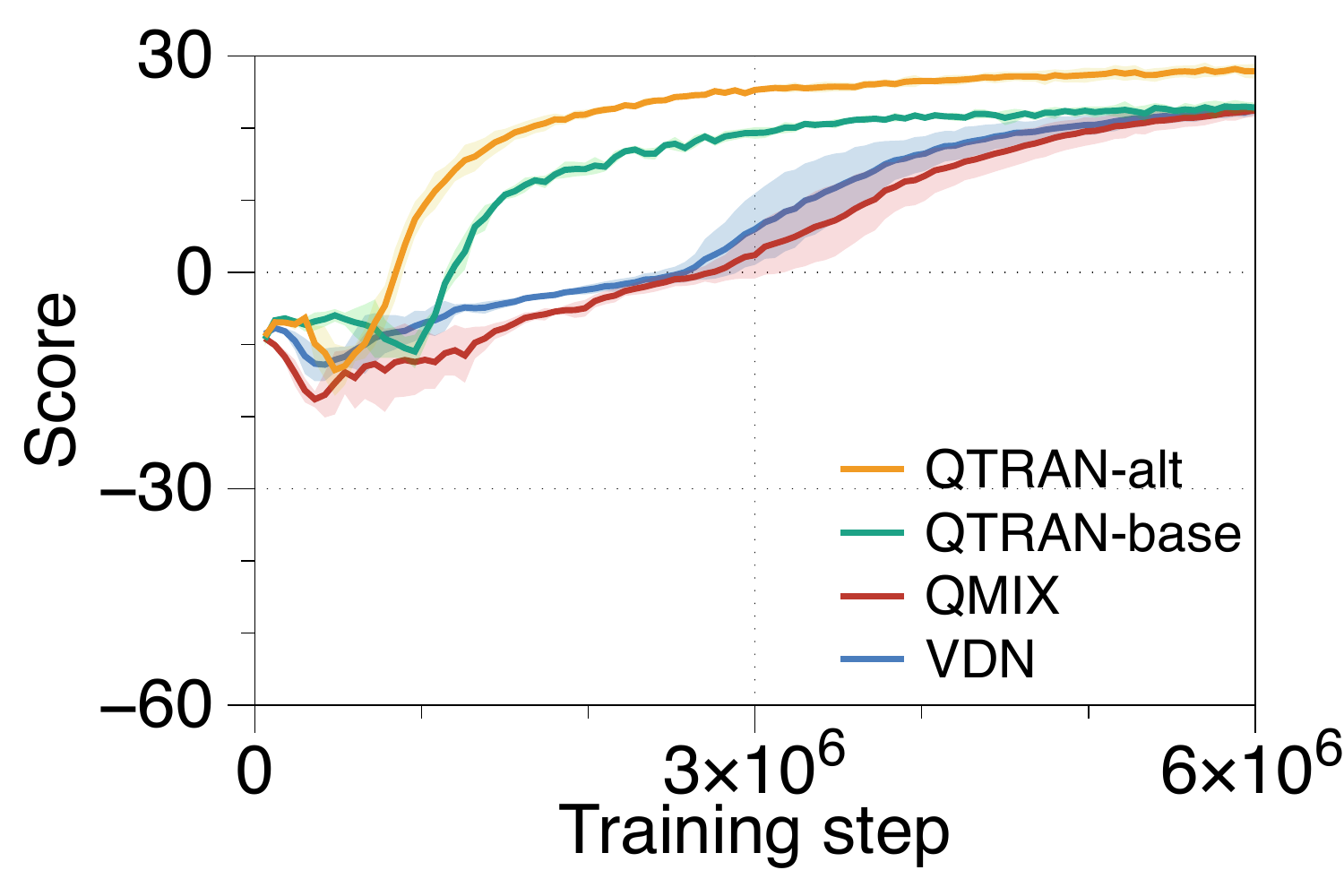}
  \caption{$N=4, P = 0.5$}
  \label{fig:PPgraph4}
\end{subfigure}\hspace*{\fill}
\begin{subfigure}[t]{.33\textwidth}
  \includegraphics[width=\linewidth]{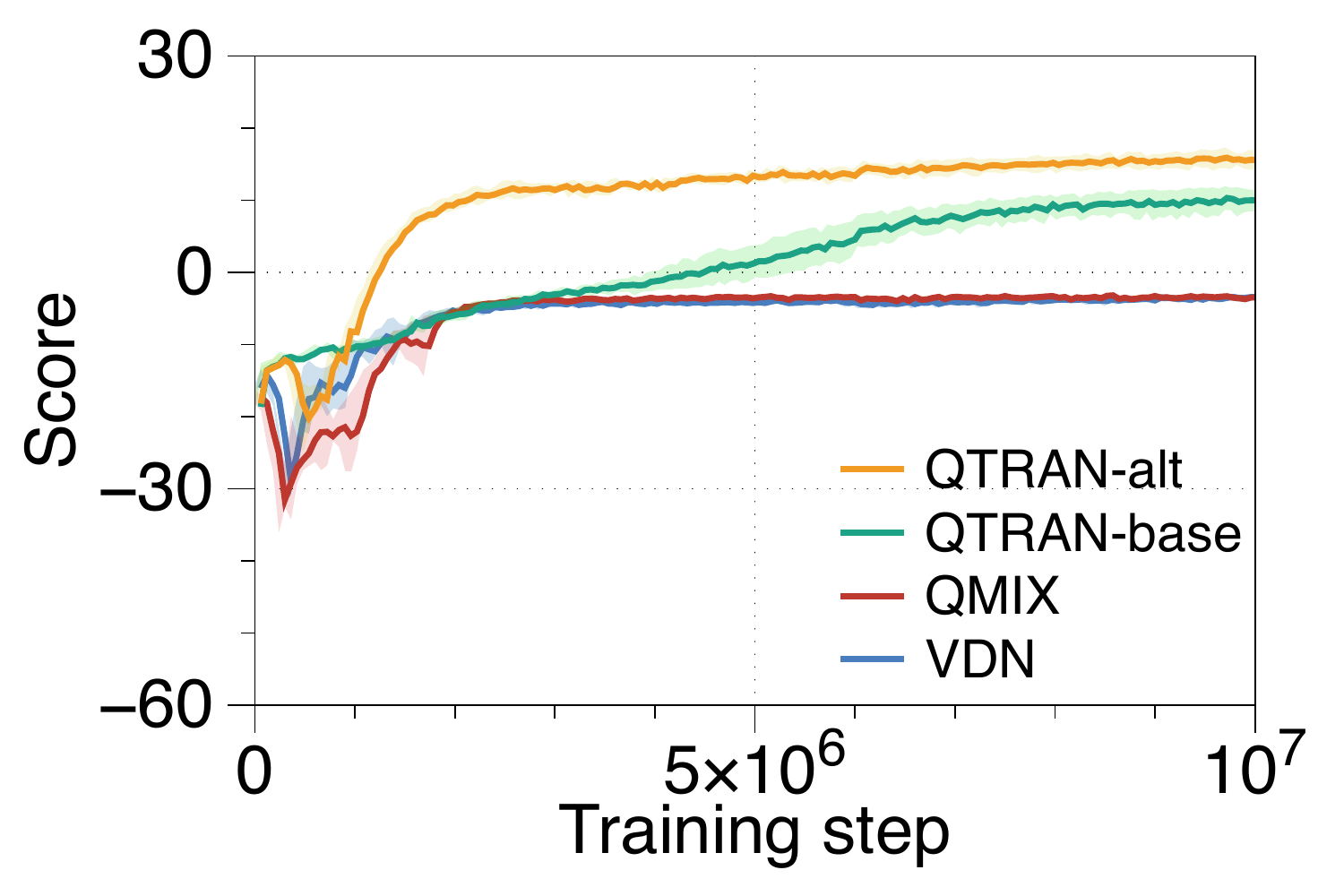}
  \caption{$N=4, P = 1.0$}
  \label{fig:PPgraph5}
\end{subfigure}\hspace*{\fill}
\begin{subfigure}[t]{.33\textwidth}
  \includegraphics[width=\linewidth]{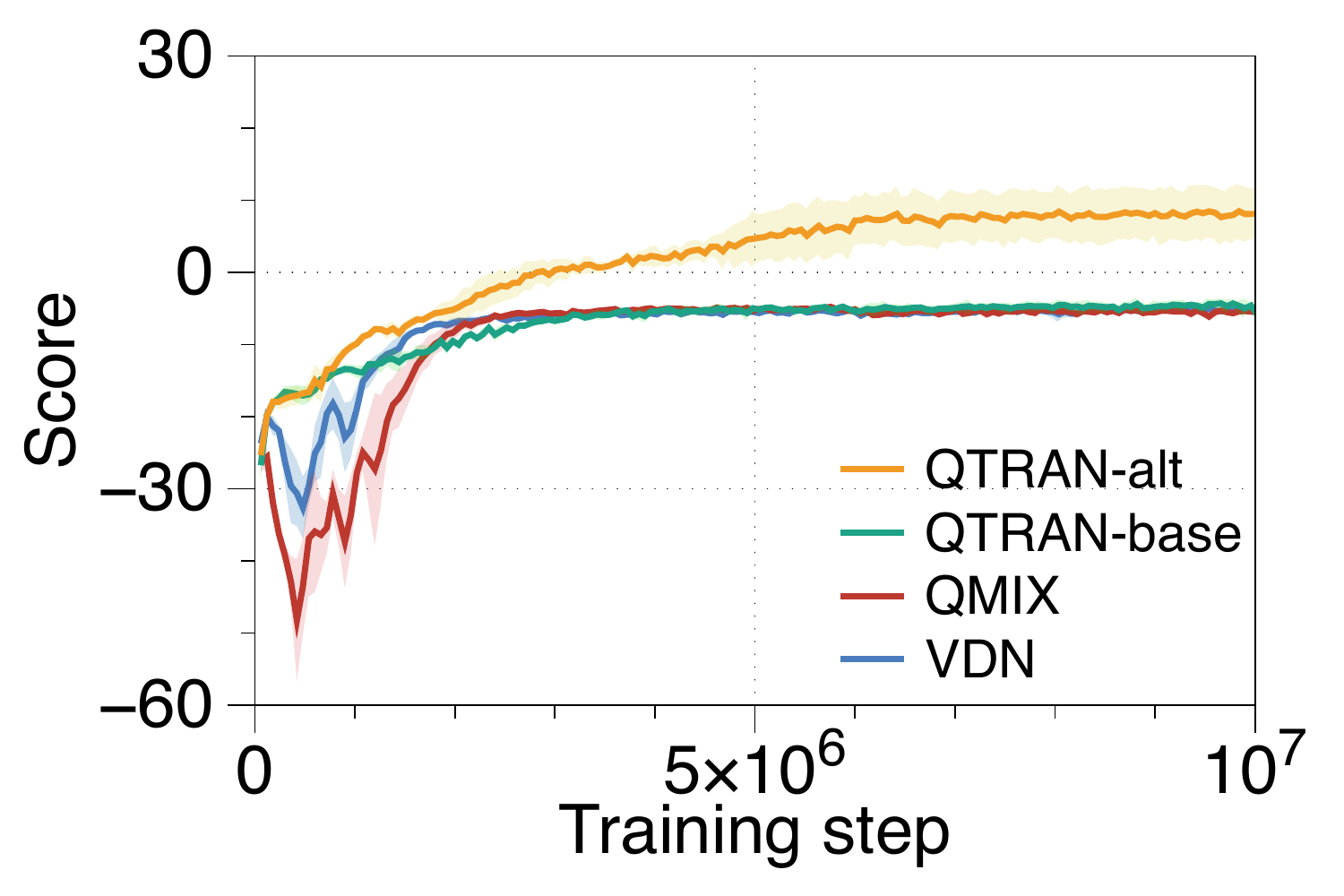}
  \caption{$N=4, P = 1.5$}
  \label{fig:PPgraph6}
\end{subfigure}\hspace*{\fill}
\caption{Average reward per episode on the MPP tasks with 95\% confidence intervals for VDN, QMIX, and $\algoname$}
\label{fig:PPresult1}
\vspace{-0.3cm}
\end{figure*}

\subsection{Results}

\paragraph{Multi-domain Gaussian Squeeze}
Figure~\ref{fig:GSgraph1} shows the result of GS, where it is not surprising to observe that all algorithms converge to the optimal point.   
However, $\algoname$ noticeably differs in its convergence speed; it is capable of handling the non-monotonic nature of the environment more accurately and of finding an optimal policy from well-expressed action-value functions. VDN and QMIX have some structural constraints, which hinder the accurate learning of the action-value functions for non-monotonic structures. 
These algorithms converge to the locally optimal point --- which is the globally optimal point in GS, where $K=1$ --- near the biased samples by a wrong policy with epsilon-decay exploration. 
To support our claim, we experiment with the full exploration without epsilon-decay for the same environment, as shown in Figure~\ref{fig:GSgraph2}. We observe that  $\algoname$ learns more or less the same policy as in Figure~\ref{fig:GSgraph1}, whereas VDN and QMIX significantly deteriorate.
Figure~\ref{fig:GSgraph3} shows the result for a more challenging scenario of MGS, with each of $[s_i]$ being the same, where 
agents can always achieve higher performance than GS. 
VDN and QMIX are shown to learn only a sub-optimal policy in MGS, whose rewards are even smaller than those in GS. 
$\algonamebasic$ and $\algonameadv$ achieve significantly higher rewards, where $\algonameadv$ is more stable, as expected. This is because $\algonameadv$’s alternative loss increases the gap between $\Qtot’$ for non-optimal and optimal actions, and prevents it from being updated to an undesirable policy.



\myparagraph{Modified predator-prey}
Figure~\ref{fig:PPresult1} shows the performance of the three algorithms for six settings with different $N$ and $P$ values, where all results demonstrate the superiority of $\algoname$ to VDN and QMIX. 
In Figures~\ref{fig:PPgraph1} and \ref{fig:PPgraph4} with low penalties $P$, all three algorithms learn the policy that catches the prey well and thus obtain high rewards. However, again, the speed in finding the policy in VDN and QMIX is slower than that in $\algoname$. 
As the penalty $P$ grows and exacerbates the non-monotonic characteristic of the environment, we observe a larger performance gap between $\algoname$ and the two other algorithms. As shown in Figures~\ref{fig:PPgraph2} and \ref{fig:PPgraph3}, even for a large penalty, $\algoname$ agents still cooperate well to catch the prey. However, in VDN and QMIX, agents learn to work together with somewhat limited coordination, so that they do not actively try to catch the prey and instead attempt only to minimize the penalty-receiving risk. In Figures~\ref{fig:PPgraph5} and \ref{fig:PPgraph6}, when the number of agents $N$ grows, VDN and QMIX do not cooperate at all and learn only a sub-optimal policy: running away from the prey to minimize the risks of being penalized.

For the tested values of $N$ and $P$, we note that the performance gap between $\algonamebasic$ and $\algonameadv$ is influenced more strongly by $N$.
When $N=2$, the gap in the convergence speed between $\algonamebasic$ and $\algonameadv$ increases as the penalty grows. Nevertheless, both algorithms ultimately turn out to train the agents to cooperate well for every penalty value tested.
On the other hand, with $N=4$, the gap of convergence speed between $\algonamebasic$ and $\algonameadv$ becomes larger. With more agents, as shown in Figures~\ref{fig:PPgraph4} and \ref{fig:PPgraph5}, $\algonameadv$ achieves higher scores than $\algonamebasic$ does, and $\algonamebasic$ requires longer times to reach positive scores. Figure~\ref{fig:PPgraph6} shows that $\algonamebasic$ does not achieve a positive reward until the end, implying that $\algonamebasic$ fails to converge, whereas $\algonameadv$'s score increases, with training steps up to $10^7$.

\section{Conclusion}
This paper presented $\algoname$, a learning method to factorize the joint action-value functions of a wide variety of MARL tasks. $\algoname$ takes advantage of centralized training and fully decentralized execution of the learned policies by appropriately transforming and factorizing the joint action-value function into individual action-value functions. Our theoretical analysis demonstrates that $\algoname$ handles a richer class of tasks than its predecessors, and our simulation results indicate that $\algoname$ outperforms VDN and QMIX by a substantial margin, especially so when the game exhibits more severe non-monotonic characteristics. 
\section*{Acknowledgements}
This work was supported by Institute for Information \& communications
Technology Planning \& Evaluation(IITP) grant funded by the Korea government(MSIT) (No.2018-0-00170,Virtual Presence in Moving Objects through 5G (PriMo-5G))

This research was supported by Basic Science Research Program through the National Research Foundation of Korea(NRF) funded by the Ministry of  Science and ICT(No. 2016R1A2A2A05921755)
\bibliography{ref.bib}
\appendix
\onecolumn

\vspace{1cm}
\begin{center}
\huge Supplementary 
\end{center}

\section{$\algoname$ Training Algorithm}

The training algorithms for $\algonamebasic$ and $\algonameadv$ are provided in
Algorithm~\ref{algo:qtran}.

\begin{algorithm}[H]
 \caption{$\algonamebasic$ and $\algonameadv$}
 \label{algo:qtran}
 \begin{algorithmic}[1]
 \STATE Initialize replay memory $D$
 \STATE Initialize $[Q_i]$, $\Qtot$, and $\Vtot$ with random parameters $\theta$
 \STATE Initialize target parameters $\theta^- = \theta$
 \FOR{episode = 1 to $M$}
 \STATE Observe initial state $\bm{s}^0$ and observation $\bm{o}^0 = [O(\bm{s}^0,i)]_{i=1}^N$ for each agent $i$
    \FOR{$t = 1$ to $T$}
        \STATE With probability $\epsilon$ select a random action $u^t_i$
        \STATE Otherwise $u^t_i = \arg\max_{u^t_{i}}Q_i(\tau^t_i,u^t_i)$ for each agent $i$
        \STATE Take action $\bm{u}^t$, and retrieve next observation and reward $(\bm{o}^{t+1},r^t)$
        \STATE Store transition $(\bm{\tau}^t, \bm{u}^t, r^t, \bm{\tau}^{t+1})$ in $D$
        \STATE Sample a random minibatch of transitions $(\bm{\tau}, \bm{u}, r, \bm{\tau'})$ from $D$
        \STATE Set $y^{\text{dqn}}(r, \bm{\tau}'; \bm{\theta^{-}}) = r + \gamma
\Qtot(\bm{\tau}', \bm{\bar{u}}';\bm{\theta} ^{-}),$ $ \bar{\bm{u}}' = [\arg\max_{u_{i}}Q_i(\tau'_i,u_i;\bm{\theta}^{-})]_{i=1}^N,$
        \STATE If $\algonamebasic,$ update $\theta$ by minimizing the loss:
        \begin{align*}
L(\bm{\tau}, \bm{u}, r, \bm{\tau}'; \bm{\theta}) &=
 L_{\text{td}} + \lambda_{\text{opt}} L_{\text{opt}} + \lambda_\text{nopt} L_\text{nopt},\cr
L_{\text{td}}(\bm{\tau}, \bm{u}, r, \bm{\tau}';\bm{\theta}) &= \big(\Qtot(\bm{\tau},\bm{u}) - y^{\text{dqn}}(r, \bm{\tau}'; \bm{\theta}^{-})\big)^{2},\cr 
L_{\text{opt}}(\bm{\tau}, \bm{u}, r, \bm{\tau}';\bm{\theta})&= \big(\Qtot'(\bm{\tau},\bm{\bar{u}}) - \hQtot(\bm{\tau},\bm{\bar{u}}) + \Vtot(\bm{\tau})\big)^{2}, \cr 
L_{\text{nopt}}(\bm{\tau}, \bm{u}, r, \bm{\tau}';\bm{\theta})&= \left(\min \big[\Qtot'(\bm{\tau},\bm{u}) - \hQtot(\bm{\tau},\bm{u}) + \Vtot(\bm{\tau}),0\big] \right)^{2}. 
        \end{align*}
        \STATE If $\algonameadv,$ update $\theta$ by minimizing the loss:
        \begin{align*}
L(\bm{\tau}, \bm{u}, r, \bm{\tau}'; \bm{\theta})&=
 L_{\text{td}} + \lambda_{\text{opt}} L_{\text{opt}} + \lambda_\text{nopt-min} L_\text{nopt-min},\cr
L_{\text{td}}(\bm{\tau}, \bm{u}, r, \bm{\tau}';\bm{\theta})&= \big(\Qtot(\bm{\tau},\bm{u}) - y^{\text{dqn}}(r, \bm{\tau}'; \bm{\theta}^{-})\big)^{2},\cr 
L_{\text{opt}}(\bm{\tau}, \bm{u}, r, \bm{\tau}';\bm{\theta})&= \big(\Qtot'(\bm{\tau},\bm{\bar{u}}) - \hQtot(\bm{\tau},\bm{\bar{u}}) + \Vtot(\bm{\tau})\big)^{2}, \cr 
L_{\text{nopt-min}}(\bm{\tau}, \bm{u}, r, \bm{\tau}'; \bm{\theta})&= {1 \over N}\sum_{i=1}^N\left(\min_{u_i \in \set{U}} \left(\Qtot'(\bm{\tau},u_i,\bm{u}_{-i}) -
\hQtot(\bm{\tau},u_i,\bm{u}_{-i}) + 
\Vtot(\bm{\tau})\right)\right)^{2}.
        \end{align*}
        \STATE Update target network parameters $\theta^- = \theta$ with period $I$
 \ENDFOR
 \ENDFOR
 \end{algorithmic}
 \end{algorithm}

\section{Proofs}
In this section, we provide the proofs of theorems and propositions
coming from theorems. 

\subsection{Proof of Theorem~\ref{thm:ifonlyif}}

{\bf Theorem 1.}
{\em A factorizable joint action-value function $\Qtot(\bm{\tau},\bm{u})$ is factorized by $[Q_i(\tau_i,u_i)],$ if }

\begin{subequations}
  \begin{align*}[left ={\sum_{i=1}^{N} Q_i(\tau_i,u_i) -
    \Qtot(\bm{\tau},\bm{u}) + \Vtot(\bm{\tau}) = \empheqlbrace}]
    & 0         &\hspace{-2.5cm}\bm{u} = \bm{\bar{u}}, \hspace{2.5cm} \tag{\ref{eq:regular_cond1}}\\
    & \geq0  &\hspace{-2.5cm}\bm{u} \neq \bm{\bar{u}}, \hspace{2.5cm}\tag{\ref{eq:regular_cond2}}
   \end{align*}
  \end{subequations}

{\em where }
\begin{align*}
\Vtot(\bm{\tau}) &= \max_{\bm{u}}\Qtot(\bm{\tau},\bm{u}) - \sum_{i=1}^N Q_i(\tau_i,\bar{u}_i).
\end{align*}

\begin{proof}

  Theorem~\ref{thm:ifonlyif} shows that if condition
  \eqref{eq:regular_cond0} holds, then $Q_i$ satisfies {\bf IGM}. Thus,
  for some given $Q_i$ that satisfies \eqref{eq:regular_cond0}, we will
  show that
  $\arg\max_{\bm{u}} \Qtot(\bm{\tau},\bm{u}) = \bm{\bar{u}}$. Recall
  that $\bar{u}_{i} = \arg\max_{u_{i}}Q_i(\tau_i,u_i)$ and
  $\bm{\bar{u}} = [\bar{u}_{i}]_{i=1}^N,$.
\begin{align*}
\Qtot(\bm{\tau},\bm{\bar{u}}) &= \sum_{i=1}^N Q_i(\tau_i,\bar{u}_i) + \Vtot(\bm{\tau}) \quad(\text{From} \; \eqref{eq:regular_cond1}) \\
&\geq \sum_{i=1}^N Q_i(\tau_i,u_i) + \Vtot(\bm{\tau}) \\
&\geq \Qtot(\bm{\tau},\bm{u}) \quad(\text{From} \; \eqref{eq:regular_cond2}).
\end{align*}
It means that the set of optimal local actions $\bar{\bm u}$ maximizes
$\Qtot$, showing that $Q_i$ satisfies {\bf IGM}.
This completes the proof.
\end{proof}


\subsection{Necessity in Theorem~\ref{thm:ifonlyif} Under Affine-transformation}

As mentioned in Section~\ref{sec:transformation}, the conditions
\eqref{eq:regular_cond0} in Theroem~\ref{thm:ifonlyif} are necessary under an affine transformation. The necessary condition shows that for some given factorizable $\Qtot$, there exists $Q_i$ that
satisfies \eqref{eq:regular_cond0}, which guides us to design
the $\algoname$ neural network. Note that the affine transformation
$\phi$ is $\phi(\bm{Q}) = A\cdot \bm{Q} +B$, where
$A=[a_{ii}] \in \real_+^{N\times N}$ is a symmetric diagonal matrix
with $a_{ii} > 0, \forall i$ and $B=[b_i] \in \real^N$. To abuse notation, let
$\phi(Q_i(\tau_i,u_i)) = a_{ii}Q_i(\tau_i,u_i) +b_i$.

\begin{prop}
  If $\Qtot(\bm{\tau},\bm{u})$ is factorized by $[Q_i(\tau_i,u_i)]$,
  then there exists an affine transformation $\phi(\bm{Q})$ such that
  $\Qtot(\bm{\tau},\bm{u})$ is factorized by
  $[\phi(Q_i(\tau_i,u_i))]$ and the condition \eqref{eq:regular_cond0} holds by
  replacing $[Q_i(\tau_i,u_i)]$ with $[\phi(Q_i(\tau_i,u_i))].$
\end{prop}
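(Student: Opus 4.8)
The plan is to exhibit an explicit affine transformation and verify the two halves of the claim in turn: first that factorization (IGM) is preserved, then that the preserved factorization can be forced to satisfy \eqref{eq:regular_cond0} by a suitable choice of scaling. Throughout I fix a joint history $\bm{\tau}$, write $M(\bm{\tau}) := \max_{\bm{u}}\Qtot(\bm{\tau},\bm{u})$, and let $\tilde{Q}_i := \phi(Q_i) = a_{ii}Q_i + b_i$ denote the transformed factors, where the $a_{ii} > 0$ are to be pinned down at the end and the $b_i$ are arbitrary. The invariance of factorization comes almost for free: since $a_{ii} > 0$, the map $\phi$ is strictly increasing in each coordinate, so $\arg\max_{u_i}\tilde{Q}_i(\tau_i,u_i) = \arg\max_{u_i}Q_i(\tau_i,u_i) = \bar{u}_i$ for every $i$. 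The vector of individual optimizers is therefore unchanged, and IGM for $[\phi(Q_i)]$ reduces to IGM for $[Q_i]$, which holds by hypothesis. Hence $\Qtot$ is factorized by $[\phi(Q_i)]$ for \emph{any} admissible constants, and all remaining work concerns forcing \eqref{eq:regular_cond0}.

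Next I would substitute $\tilde{Q}_i$ into the left-hand side of \eqref{eq:regular_cond0}, recomputing the value term from the transformed factors as $M(\bm{\tau}) - \sum_i \tilde{Q}_i(\tau_i,\bar{u}_i)$. The one routine computation is that the additive offsets $b_i$ cancel and the whole expression collapses to
\[
M(\bm{\tau}) - \Qtot(\bm{\tau},\bm{u}) - \sum_{i=1}^N a_{ii}\big(Q_i(\tau_i,\bar{u}_i) - Q_i(\tau_i,u_i)\big).
\]
For $\bm{u} = \bm{\bar{u}}$ every summand vanishes and $\Qtot(\bm{\tau},\bm{\bar{u}}) = M(\bm{\tau})$ by IGM, so this equals $0$; thus \eqref{eq:regular_cond1} holds for any choice of the $a_{ii}$. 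The actual content is therefore \eqref{eq:regular_cond2}: for $\bm{u} \neq \bm{\bar{u}}$ I need
\[
\sum_{i=1}^N a_{ii}\big(Q_i(\tau_i,\bar{u}_i) - Q_i(\tau_i,u_i)\big) \;\le\; M(\bm{\tau}) - \Qtot(\bm{\tau},\bm{u}),
\]
where every summand on the left is nonnegative because $\bar{u}_i$ maximizes $Q_i$.

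To close the argument I would simply take the scales small enough. Over the finite joint action set, set $a_{ii} = a$ for all $i$ with $0 < a \le a^{*}$, where $a^{*} := \min_{\bm{u}\neq\bm{\bar{u}}} \big(M(\bm{\tau}) - \Qtot(\bm{\tau},\bm{u})\big)/\big(\sum_i (Q_i(\tau_i,\bar{u}_i) - Q_i(\tau_i,u_i))\big)$. Each denominator is strictly positive since $\bm{u}\neq\bm{\bar{u}}$ forces some $u_i \neq \bar{u}_i$, and each numerator is strictly positive because IGM makes $\bm{\bar{u}}$ the unique joint maximizer; hence $a^{*} > 0$ and is attained over the finite set. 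With this $a$ the displayed inequality holds for every non-optimal $\bm{u}$, establishing \eqref{eq:regular_cond2} and completing the verification.

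The main obstacle is precisely the strict positivity of $M(\bm{\tau}) - \Qtot(\bm{\tau},\bm{u})$ for non-optimal $\bm{u}$: this is where one must rule out a second joint maximizer distinct from $\bm{\bar{u}}$, for otherwise the required inequality would force a sum of nonnegative terms with at least one strictly positive entry to be $\le 0$. Under unique individual maximizers, IGM delivers unique joint optimality automatically, so I would state the result in that (generic) regime and note that with non-singleton individual $\arg\max$ sets the statement needs the obvious adjustment, namely imposing \eqref{eq:regular_cond1} on the entire product set of joint optima. A secondary, cosmetic point is uniformity in $\bm{\tau}$: to obtain a single transformation valid for all histories one takes $a = \inf_{\bm{\tau}} a^{*}(\bm{\tau})$, which remains positive whenever the history (state) space is finite.
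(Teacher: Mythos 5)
Your proposal is correct and follows essentially the same route as the paper's own proof: both rescale the factors by a single small positive constant (with the additive offsets cancelling), observe that \eqref{eq:regular_cond1} holds automatically, and then enforce \eqref{eq:regular_cond2} by choosing the scale no larger than the minimum of $\big(\max_{\bm{u}}\Qtot(\bm{\tau},\bm{u}) - \Qtot(\bm{\tau},\bm{u})\big)\big/\sum_i \big(Q_i(\tau_i,\bar{u}_i)-Q_i(\tau_i,u_i)\big)$ over the finitely many non-optimal actions. You are in fact more explicit than the paper about the hidden assumption this requires --- that $\bm{\bar{u}}$ is the unique joint maximizer, so the numerator is strictly positive --- which the paper simply asserts as part of its definition of factorization.
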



\begin{proof}
To prove, we will show
  that, for the factors $[Q_i]$ of $\Qtot$, there exists an affine
  transformation of $Q_i$ that also satisfies conditions
  \eqref{eq:regular_cond0}.

  By definition, if $\Qtot(\bm{\tau},\bm{u})$ is factorized by
  $[Q_i(\tau_i,u_i)]$, then the followings hold: (i)
  $\Qtot(\bm{\tau},\bar{\bm{u}}) -
  \max_{\bm{u}}\Qtot(\bm{\tau},\bm{u}) = 0$, (ii)
  $\Qtot(\bm{\tau},\bm{u}) - \Qtot(\bm{\tau},\bar{\bm{u}}) < 0$, and (iii) $\sum_{i=1}^N (Q_i(\tau_i,u_i) - Q_i(\tau_i,\bar{u}_i)) < 0$ if
  $\bm{u} \neq \bm{\bar{u}}$. Now, we consider an affine
  transformation, in which $a_{ii} = \alpha$ and $b_i=0 ~ \forall i$,
  where $\alpha > 0$, and $\phi(Q_i) = \alpha Q_i$ with this
  transformation. Since this is a linearly scaled transformation, it
  satisfies the {\bf IGM} condition, and thus \eqref{eq:regular_cond1}
  holds. We also prove that $\phi(Q_i)$ satisfies condition
  \eqref{eq:regular_cond1} by showing that there exists a constant $\alpha$ small enough such that
  \begin{align*}
    \sum_{i=1}^N \alpha Q_i(\tau_i,u_i) - \Qtot(\bm{\tau},\bm{u}) + \Vtot(\bm{\tau},\bm{u})
    = \sum_{i=1}^N \alpha (Q_i(\tau_i,u_i) -  Q_i(\tau_i,\bar{u}_i)) - (\Qtot(\bm{\tau},\bm{u}) - \Qtot(\bm{\tau},\bar{\bm{u}}))
    \geq 0,
  \end{align*}
  where $\Vtot(\bm{\tau})$ is redefined for linearly scaled
  $\alpha Q_i$ as
  $\max_{\bm{u}}\Qtot(\bm{\tau},\bm{u}) - \sum_{i=1}^N \alpha
  Q_i(\tau_i,\bar{u}_i).$ This completes the proof. 
\end{proof}


\subsection{Special Case: Theorem~\ref{thm:ifonlyif} in Fully Observable Environments}

If the task is a fully observable case (observation function is bijective for all $i$), the state-value network is not required
and all $\Vtot(\tau)$ values can be set to zero. We show that
Theorem~\ref{thm:ifonlyif} holds equally for the case where
$\Vtot(\bm{\tau}) = 0$ for a fully observable case. This fully observable case is applied to our example of the simple matrix game. The similar necessity under an affine-transformation holds in this case. 

{\bf Theorem 1a.} (Fully observable case)
{\em A factorizable joint action-value function $\Qtot(\bm{\tau},\bm{u})$ is factorized by $[Q_i(\tau_i,u_i)],$ if }
\begin{subnumcases}{\label{eq:regular_cond3}\hspace{-0.7cm}\sum_{i=1}^N Q_i(\tau_i,u_i) \! -\! \Qtot(\bm{\tau},\bm{u}) \!=\!}
    \!\!\! 0 & $\hspace{-0.5cm}\bm{u} = \bm{\bar{u}},$ \label{eq:regular_cond4}\\
    \!\!\! \geq 0 & $\hspace{-0.5cm}\bm{u} \neq \bm{\bar{u},}$ \label{eq:regular_cond5}
\end{subnumcases}

\begin{proof}
  We will show
  $\arg\max_{\bm{u}} \Qtot(\bm{\tau},\bm{u}) = \bm{\bar{u}}$ ({\em i.e.},
  {\bf IGM}), if the \eqref{eq:regular_cond3} holds.
\begin{align*}
  \Qtot(\bm{\tau},\bm{\bar{u}}) = \sum_{i=1}^N Q_i(\tau_i,\bar{u}_i) \geq \sum_{i=1}^N Q_i(\tau_i,u_i) 
  \geq \Qtot(\bm{\tau},\bm{u}).
\end{align*}
The first equality comes from \eqref{eq:regular_cond4}, and the last
inequality comes from \eqref{eq:regular_cond5}. We note that
$\arg\max_{\bm{u}} \Qtot(\bm{\tau},\bm{u}) = [\arg\max_{u_i}
Q_i(\tau_i, u_i)]$, so this completes the proof.
\end{proof}

{\bf Proposition 1a.}
  {\em If $\Qtot(\bm{\tau},\bm{u})$ is factorized by $[Q_i(\tau_i,u_i)]$,
  then there exists an affine transformation $\phi(\bm{Q})$, such that
  $\Qtot(\bm{\tau},\bm{u})$ is factorized by
  $[\phi(Q_i(\tau_i,u_i))]$ and condition \eqref{eq:regular_cond3} holds by
  replacing $[Q_i(\tau_i,u_i)]$ with $[\phi(Q_i(\tau_i,u_i))].$}

\begin{proof}
  From Theorem 1a, if $\Qtot(\bm{\tau},\bm{u})$ is
  factorizable, then there exist $[Q_i]$ satisfying both {\bf IGM} and
  \eqref{eq:regular_cond3}. Now, we define an additive transformation
  $\phi'_i(Q_i(\tau_i,u_i)) = Q_i(\tau_i,u_i) + {{1}\over{N}}
  \max_{\bm{u}}\Qtot(\bm{\tau},\bm{u}) - Q_i(\tau_i, \bar{u}_i)$ for a
  given $\tau_i$, which is uniquely defined for fully observable
  cases. [$\phi'_i(Q_i(\tau_i,u_i))$] also satisfy {\bf IGM}, and the  left-hand side of \eqref{eq:regular_cond3} can be rewritten as:

\begin{align*}
    \sum_{i=1}^N Q_i(\tau_i,u_i) - \Qtot(\bm{\tau},\bm{u}) - \sum_{i=1}^N Q_i(\tau_i,\bar{u}_i) + \max_{\bm{u}}\Qtot(\bm{\tau},\bm{u}) = \sum_{i=1}^N \phi'_i (Q_i(\tau_i,u_i)) - \Qtot(\bm{\tau},\bm{u})
\end{align*}

So there exist individual action-value functions $[\phi'_i(Q_i'(\tau_i,u_i))]$ that satisfy both {\bf IGM} and \eqref{eq:regular_cond3}, where $\Vtot(\bm{\tau})$ is redefined as 0. This completes the proof of the necessity.

\end{proof}

\subsection{Proof of Theorem~\ref{thm:min}}

{\bf Theorem~\ref{thm:min}.}  The statement presented in {\rm Theorem~\ref{thm:ifonlyif}} and the necessary condition of {\rm Theorem~\ref{thm:ifonlyif}} holds by replacing
\eqref{eq:regular_cond2} with the following \eqref{eq:min2}: if
$\bm{u} \neq \bar{\bm{u}}$,
\begin{align}
  \min_{u_i \in \set{U}} \Big [\Qtot'(\bm{\tau},u_i,\bm{u}_{-i}) 
  - \Qtot(\bm{\tau},u_i,\bm{u}_{-i}) + \Vtot(\bm{\tau}) \Big] = 0,
  \quad \forall i =1, \ldots, N, \tag{\ref{eq:min2}}
\end{align}
where $\bm{u}_{-i} = (u_1, \ldots, u_{i-1}, u_{i+1}, \ldots, u_{N})$, {\em i.e.}, the action vector except for $i$'s action.

\begin{proof}
  ($\Rightarrow$)
  Recall that condition \eqref{eq:min2} is stronger than \eqref{eq:regular_cond2}, which is itself sufficient for Theorem~\ref{thm:ifonlyif}. Therefore, by transitivity, condition \eqref{eq:min2} is sufficient for Theorem~\ref{thm:min}. Following paragraphs focus on the other direction, {\em i.e.}, how condition \eqref{eq:min2} is {\em necessary} for Theorem~\ref{thm:min}.

  ($\Leftarrow$)
  We prove that, if there exist individual action-value functions
  satisfying condition \eqref{eq:regular_cond0}, then there exists an
  individual action-value function $Q_i'$ that satisfies
  \eqref{eq:min2}. In order to show the existence of such $Q_i'$, we
  propose a way to construct $Q_i'$.

  We first consider the case with $N=2$ and then generalize the
  result for any $N$. The condition \eqref{eq:min2} for $N=2$ is
  denoted as:
  \begin{align*}
    \min_{u_i \in \set{U}} \Big [Q_1(\tau_1,u_1) + Q_2(\tau_2,u_2)
    - \Qtot(\bm{\tau},u_1,u_2) + \Vtot(\bm{\tau}) \Big] = 0.
  \end{align*}
  Since this way of constructing  $Q_i'$ is symmetric for all $i$, we present its construction only for $u_1$ without loss of generality.
  For $Q_1$ and $Q_2$ satisfying
  \eqref{eq:regular_cond0}, if
  $\beta := \min_{u_1 \in \set{U}} \Big [Q_1(\tau_1,u_1) + Q_2(\tau_2,u_2) -
  \Qtot(\bm{\tau},u_1,u_2) + \Vtot(\bm{\tau}) \Big] > 0$ for
  given $\bm{\tau}$ and $u_2$, then $u_2 \neq \bar{u}_2$. This is because
  $Q_1(\tau_1,\bar{u}_1) + Q_2(\tau_2,\bar{u}_2) -
  \Qtot(\bm{\tau},\bar{u}_1,\bar{u}_2) + \Vtot(\bm{\tau}) = 0$ by
  condition \eqref{eq:regular_cond1}. Now, we replace $Q_2(\tau_2,u_2)$
  with $Q'_2(\tau_2,u_2) = Q_2(\tau_2,u_2)- \beta$. Since
  $Q_2(\tau_2,\bar{u}_2) > Q_2(\tau_2,u_2) > Q_2(\tau_2,u_2) - \beta$,
  it does not change the optimal action and other conditions. Then,
  \eqref{eq:min2} is satisfied for given $\bm{\tau}$ and $u_2$.
  By repeating this replacement process, we can construct
  $Q_i'$ that satisfies condition $\eqref{eq:min2}$.

  More generally, when $N \neq 2$, if
  $\min_{u_i \in \set{U}} \Big [\Qtot'(\bm{\tau},u_i,\bm{u}_{-i}) -
  \Qtot(\bm{\tau},u_i,\bm{u}_{-i}) + \Vtot(\bm{\tau}) \Big] = \beta > 0$ for given
  $\bm{\tau}$ and $\bm{u}_{-i}$, then there exists some $j \neq i$ that satisfies
  $u_j \neq \bar{u}_j$. Therefore, by repeating the same process as
  when $N=2$ through $j$, we can construct $Q_i'$ for all $i$, and this
  confirms that individual action-value functions satisfying condition \eqref{eq:min2} exist. This completes the proof.

\end{proof}

\subsubsection{Process of constructing $Q_i'$ in the matrix game using Theorem~\ref{thm:min}}
We now present the process of how we have demonstrated through the example in Section~\ref{sec:matgame}. In the original matrix shown in Tables~\ref{table:matrix-F1} and \ref{table:matrix-F4}, the second row does not satisfy the condition \eqref{eq:min2}, and $\beta = 0.23$ for $u_1=B$. Then, we replace $Q_1(B)$ as shown in Table~\ref{table:matrix-F5}. Table~\ref{table:matrix-F2} shows that its third row does not satisfy the condition \eqref{eq:min2}. Finally, we replace $Q_1(C)$ as shown in Table~\ref{table:matrix-F6}. Then, the resulting Table~\ref{table:matrix-F3} satisfies the condition \eqref{eq:min2}.

\begin{table}[h]
\scriptsize
    \setlength{\tabcolsep}{-1pt}
    \begin{minipage}{.33\columnwidth}
      \centering
        \begin{tabular}{|P{1.2cm}||P{1.2cm}|P{1.2cm}|P{1.2cm}|}
        \hline
        \backslashbox{$u_1$}{$u_2$}& A & B & C\\ \hline \hline
        A & 0.00  & 18.14 & 18.14 \\ \hline
        B & \textbf{14.11} & \textbf{0.23} & \textbf{0.23} \\ \hline
        C & 13.93 & 0.05 & 0.05 \\ \hline
        \end{tabular}
        \subcaption{$\Qtot' - \Qtot$}
        \label{table:matrix-F1}
    \end{minipage}
    \begin{minipage}{.33\columnwidth}
      \centering
        \begin{tabular}{|P{1.2cm}||P{1.2cm}|P{1.2cm}|P{1.2cm}|}
        \hline
        \backslashbox{$u_1$}{$u_2$}& A & B & C\\ \hline \hline
        A & 0.00  & 18.14 & 18.14 \\ \hline
        B & 13.88 & 0.00 & 0.00 \\ \hline
        C & \textbf{13.93} & \textbf{0.05} & \textbf{0.05} \\ \hline
        \end{tabular}
        \subcaption{$\Qtot' - \Qtot$ after one replacement}
        \label{table:matrix-F2}
    \end{minipage}
    \begin{minipage}{.33\columnwidth}
      \centering
        \begin{tabular}{|P{1.2cm}||P{1.2cm}|P{1.2cm}|P{1.2cm}|}
        \hline
        \backslashbox{$u_1$}{$u_2$}& A & B & C\\ \hline \hline
        A & 0.00  & 18.14 & 18.14 \\ \hline
        B & 13.88 & 0.00 & 0.00 \\ \hline
        C & 13.88 & 0.00 & 0.00 \\ \hline
        \end{tabular}
        \subcaption{$\Qtot' - \Qtot$ after two replacements}
        \label{table:matrix-F3}
    \end{minipage}
    \begin{minipage}{.33\columnwidth}
      \centering
        \begin{tabular}{|P{1.2cm}||P{1.2cm}|P{1.2cm}|P{1.2cm}|}
        \hline
        \backslashbox{$Q_1$}{$Q_2$} & 4.16(A) & 2.29(B) & 2.29(C)\\ \hline \hline
        3.84(A) & 8.00  & 6.13 & 6.12 \\ \hline
        -2.06(B) & 2.10 & 0.23 & 0.23 \\ \hline
        -2.25(C) & 1.92 & 0.04 & 0.04 \\ \hline
        \end{tabular}
        \subcaption{$Q_1, Q_2,\Qtot'$}
        \label{table:matrix-F4}
    \end{minipage}
    \begin{minipage}{.33\columnwidth}
      \centering
        \begin{tabular}{|P{1.2cm}||P{1.2cm}|P{1.2cm}|P{1.2cm}|}
        \hline
        \backslashbox{$Q_1$}{$Q_2$} & 4.16(A) & 2.29(B) & 2.29(C)\\ \hline \hline
        3.84(A) & 8.00  & 6.13 & 6.12 \\ \hline
        -2.29(B) & 1.87 & 0.00 & 0.00 \\ \hline
        -2.25(C) & 1.92 & 0.04 & 0.04 \\ \hline
        \end{tabular}
        \subcaption{$Q_1, Q_2,\Qtot'$ after one replacement}
        \label{table:matrix-F5}
    \end{minipage}
    \begin{minipage}{.33\columnwidth}
      \centering
        \begin{tabular}{|P{1.2cm}||P{1.2cm}|P{1.2cm}|P{1.2cm}|}
        \hline
        \backslashbox{$Q_1$}{$Q_2$} & 4.16(A) & 2.29(B) & 2.29(C)\\ \hline \hline
        3.84(A) & 8.00  & 6.13 & 6.12 \\ \hline
        -2.29(B) & 1.87 & 0.00 & 0.00 \\ \hline
        -2.30(C) & 1.87 & -0.01 & -0.01 \\ \hline
        \end{tabular}
        \subcaption{$Q_1, Q_2,\Qtot'$ after two replacements}
        \label{table:matrix-F6}
    \end{minipage}
        \caption{The process of replacing $[Q_i]$ satisfying the condition \eqref{eq:regular_cond5} with $[Q_i]$ satisfying the condition \eqref{eq:min2}}
\vspace{-0.2cm}
\label{table:matrix2}
\end{table}


\section{Details of environments and implementation}

\subsection{Environment}

\paragraph{Matrix game}

In order to see the impact of $\algonameadv$, we train the agents in a single state matrix game where two agents each have 21 actions. Each agent $i$ takes action $u_i$, ranging over $\in \{0,...,20\}.$ The reward value $R$ for a joint action is given as follows:

\begin{align*}
f_1(u_1, u_2) &= 5 - \left(\frac{15 - u_1}{3}\right)^2 - \left(\frac{5 - u_2}{3}\right)^2 \\
f_2(u_1, u_2) &= 10 - \left(\frac{5 - u_1}{1}\right)^2 - \left(\frac{15 - u_2}{1}\right)^2 \\
R(u_1, u_2) &= \max(f_1(u_1,u_2), f_2(u_1,u_2)) 
\end{align*}

\begin{figure*}[!h]
  \centering
 \includegraphics[width=0.5\textwidth]{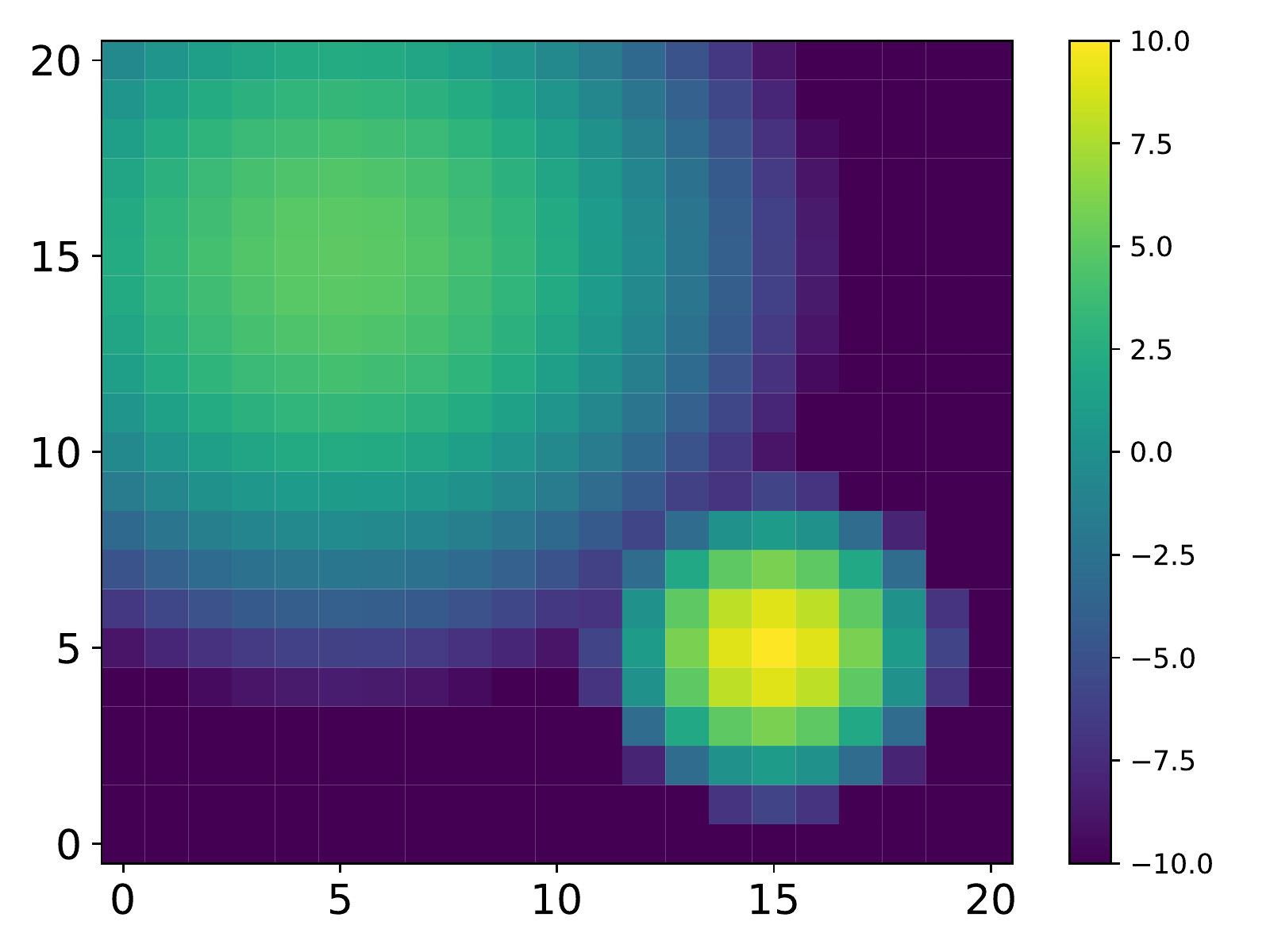}
\caption{$x$-axis and $y$-axis: agents 1 and 2's actions, respectively. Colored values represent the reward for selected actions.}
  \label{fig:Origin}
\end{figure*}

Figure~\ref{fig:Origin} shows reward for selected action. Colored values represent the values. In the simple matrix game, the reward function has a global maximum point at $(u_1, u_2) = (5, 15)$ and a local maximum point at $(u_1, u_2) = (15, 5).$

\myparagraph{Multi-domain Gaussian Squeeze}
We adopt and modify Gaussian Squeeze, where agent numbers ($N=10)$ and action spaces ($u_i \in \{0,1,...,9\})$ are much larger than a simple matrix game. In MGS, each of the ten agents has its own amount of unit-level resource $s_i \in [0,0.2]$ given by the environment {\em a priori}. This task covers a fully observable case where all agents can see the entire state. We assume that there exist $K$ domains, where the above resource allocation takes place. The joint action $\bm{u}$ determines the overall resource usage $x(\bm{u}) = \sum_i s_i \times u_i$. Reward is given as a function of resource usage as follows: $G(\bm{u}) = \sum_{k=1}^K x e^{-(x-\mu_k)^2 / {\sigma_k}^2}$. We test with two different settings: (i) $K=1, \mu_1 = 8, \sigma_1 = 1$ as shown in Figure~\ref{fig:GMSenv1}, and (ii) $K=2, \mu_1 = 8, \sigma_1 = 0.5, \mu_2 = 5, \sigma_2 = 1$ as shown in Figure~\ref{fig:GMSenv2}. In the second setting, there are two local maxima for resource $x$. The maximum on the left is relatively easy to find through exploration -- as manifested in the greater variance of the Gaussian distribution, but the maximum reward --- as represented by the lower peak --- is relatively low. On the other hand, the maximum on the right is more difficult to find through exploration, but it offers higher reward.

\begin{figure*}[!h]
\hspace*{\fill}
\begin{subfigure}[t]{.33\textwidth}
  \includegraphics[width=\linewidth]{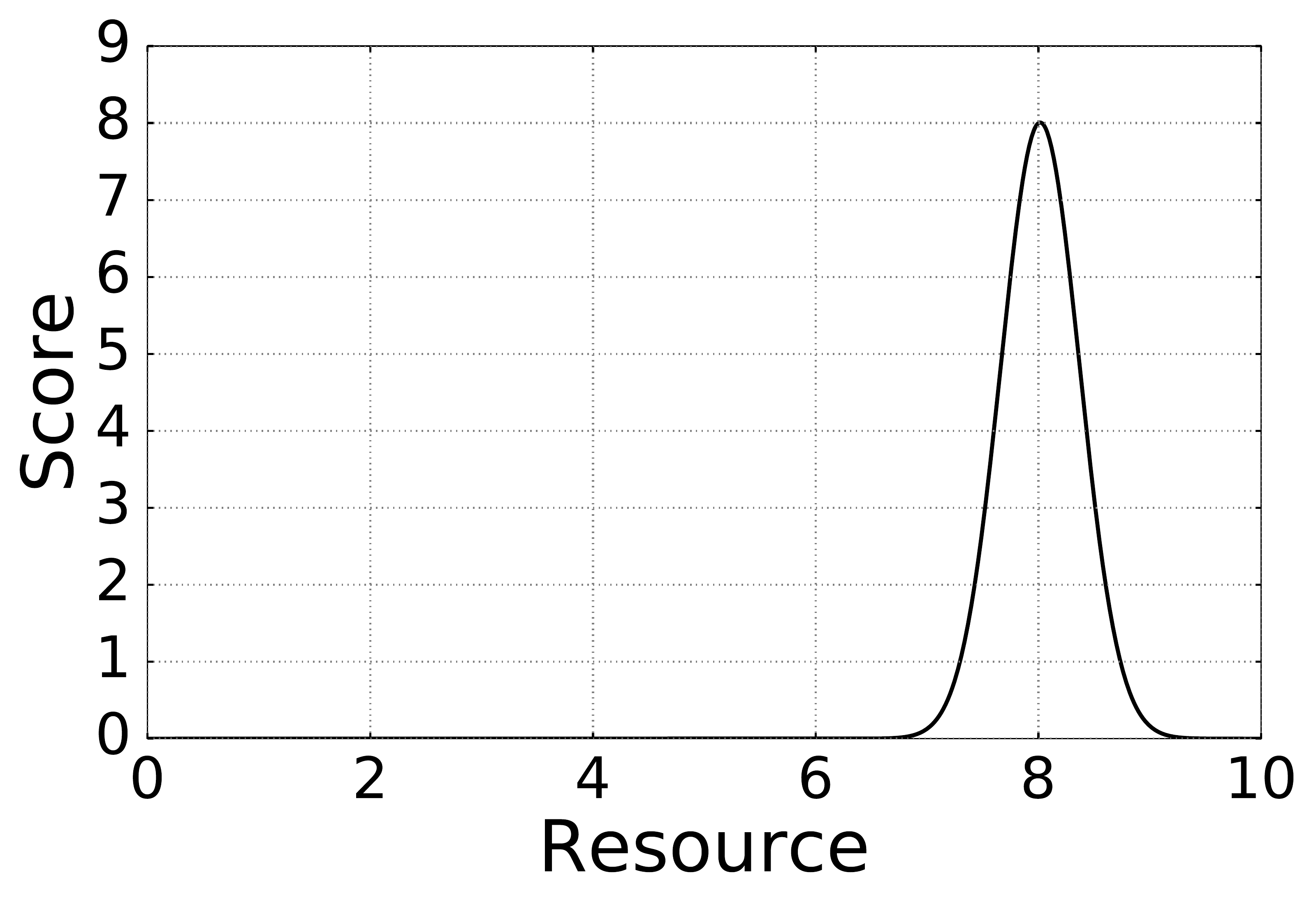}
  \caption{Gaussian Squeeze}
  \label{fig:GMSenv1}
\end{subfigure}\hspace*{\fill}
\begin{subfigure}[t]{.33\textwidth}
  \includegraphics[width=\linewidth]{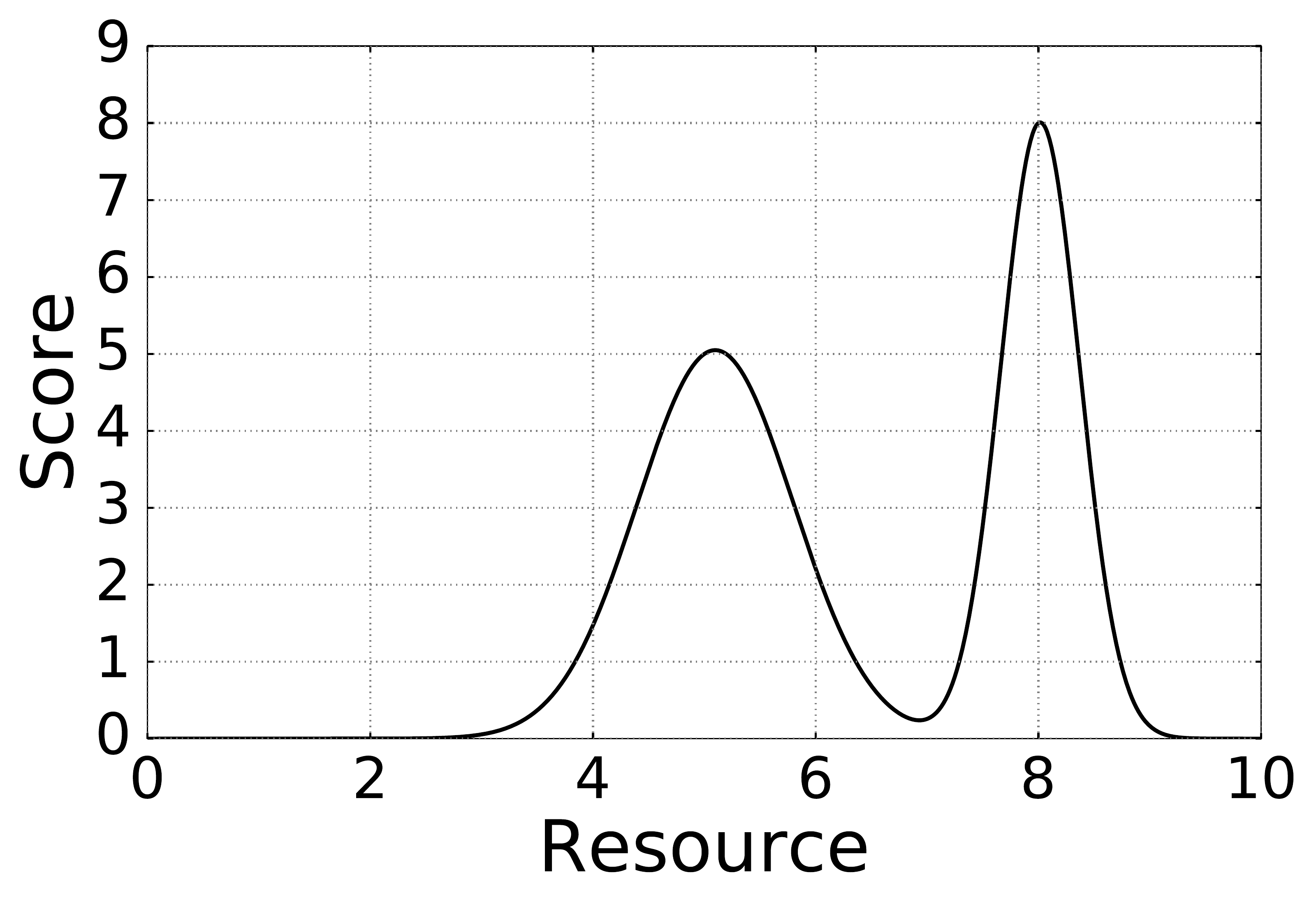}
  \caption{Multi-domain Gaussian Squeeze}
  \label{fig:GMSenv2}
\end{subfigure}\hspace*{\fill}
\caption{Gaussian Squeeze and Multi-domain Gaussian Squeeze}
\label{fig:GMSenv}
\end{figure*}

\paragraph{Modified predator-prey}

Predator-prey involves a grid world, in which multiple predators attempt to capture randomly moving prey. Agents have a $5 \times 5$ view and select one of five actions $\in \{$Left, Right, Up, Down, Stop$\}$ at each time step. Prey move according to selecting a uniformly random action at each time step. We define the ``catching'' of a prey as when the prey is within the cardinal direction of at least one predator. Each agent's observation includes its own coordinates, agent ID, and the coordinates of the prey relative to itself, if observed. The agents can separate roles even if the parameters of the neural networks are shared by agent ID. We test with two different grid worlds: (i) a $5 \times 5$ grid world with two predators and one prey, and (ii) a $7 \times 7$ grid world with four predators and two prey. We modify the general predator-prey, such that a positive reward is given only if multiple predators catch a prey simultaneously, requiring a higher degree of cooperation. The predators get a team reward of $1$ if two or more catch a prey at the same time, but they are given negative reward $-P$, if only one predator catches the prey as shown in Figure~\ref{fig:PPenv}. We experimented with three varying $P$ vales, where $P = 0.5, 1.0, 1.5$. The terminating condition of this task is when a prey is caught with more than one predator. The prey that has been caught is regenerated at random positions whenever the task terminates, and the game proceeds over fixed 100 steps.

\begin{figure*}[!h]
\hspace*{2.3cm}
\begin{subfigure}[t]{.85\textwidth}
  \includegraphics[width=\linewidth]{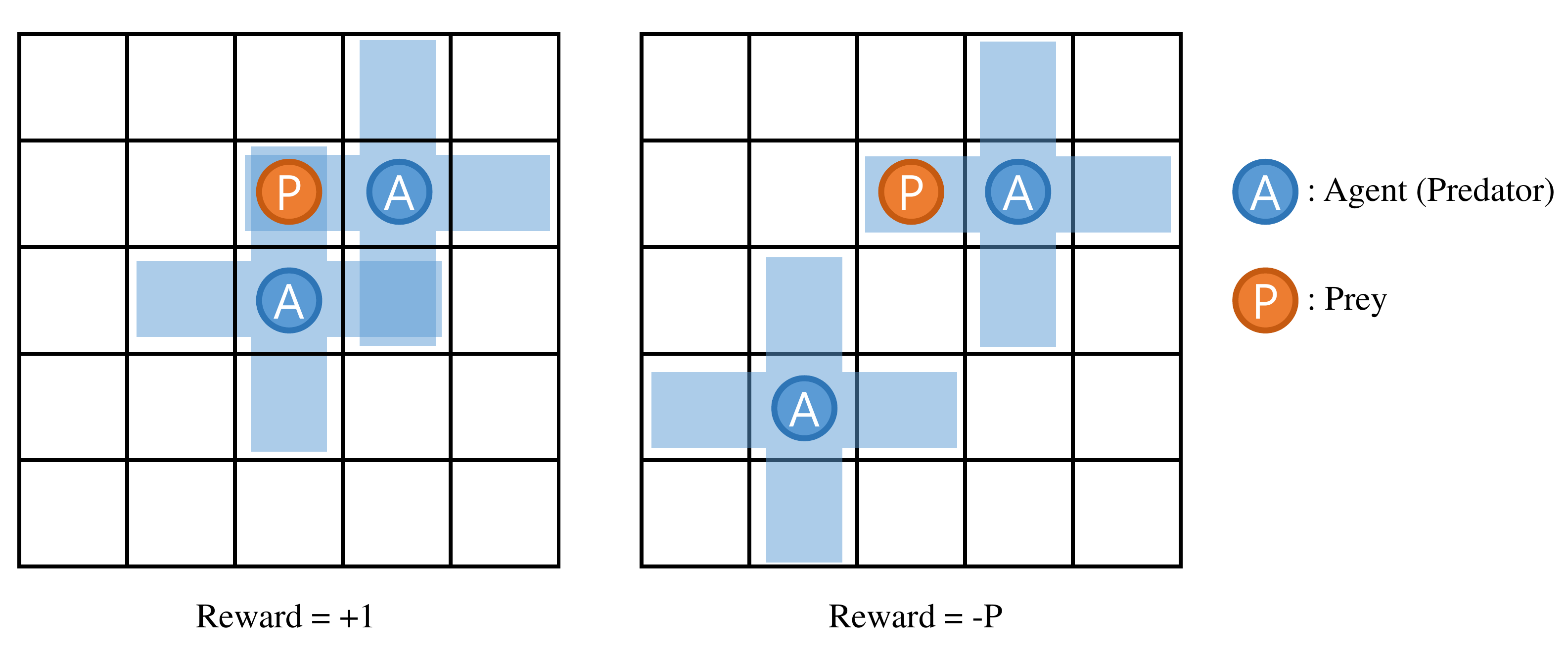}
  \label{fig:PPenv1}
\end{subfigure}
\vspace{-0.5cm}
\caption{Predator-prey environment}
\label{fig:PPenv}
\end{figure*}

\subsection{Experiment details}

\paragraph{Matrix game}
Table~\ref{table:hyperparameter1} shows the values of the hyperparameters for the training in the matrix game environment. Individual action-value networks, which are common in all VDN, QMIX, and $\algoname$, each consist of two hidden layers. In addition to the individual Q-networks, QMIX incorporates a monotone network with one hidden layer, and the weights and biases of this network are produced by separate hypernetworks. $\algoname$ has an additional joint action-value network with two hidden layers. All hidden layer widths are 32, and all activation functions are ReLU. All neural networks are trained using the Adam optimizer. We use $\epsilon$-greedy action selection with $\epsilon = 1$ independently for exploration.

\begin{table}[h!]
\centering
\begin{tabular}{@{}lllll@{}}
\toprule
\textbf{Hyperparameter}       & \textbf{Value} & \multicolumn{3}{l}{\textbf{Description}}                                \\ \midrule
training step                 & 20000         & \multicolumn{3}{l}{Maximum time steps until the end of training}        \\                      
learning rate       & 0.0005        & \multicolumn{3}{l}{Learning rate used by Adam optimizer}  \\
replay buffer size            & 20000           & \multicolumn{3}{l}{Maximum number of samples to store in memory} \\
minibatch size            & 32           & \multicolumn{3}{l}{Number of samples to use for each update} \\
$\lambda_{opt},$ $\lambda_{nopt}$             & 1,1           & \multicolumn{3}{l}{Weight constants for loss functions $L_{opt}$, $L_{nopt}$ and $L_{nopt-min}$} \\
\bottomrule
\end{tabular}
\vspace{-0.2cm}
\caption{\small Hyperparameters for matrix game training}
\label{table:hyperparameter1}
\end{table}
\vspace{-0.4cm}

\myparagraph{Multi-domain Gaussian Squeeze}
Table~\ref{table:hyperparameter2} shows the values of the hyperparameters for the training in the MGS environment. Each individual action-value network consists of three hidden layers. In addition to the individual Q-networks, QMIX incorporates a monotone network with one hidden layer, and the weights and biases of this network are produced by separate hypernetworks. $\algoname$ has an additional joint action-value network with two hidden layers. All hidden layer widths are 64, and all activation functions are ReLU. All neural networks are trained using the Adam optimizer. We use $\epsilon$-greedy action selection with decreasing $\epsilon$ from 1 to 0.1 for exploration.

\begin{table}[h!]
\centering

\begin{tabular}{@{}lllll@{}}
\toprule
\textbf{Hyperparameter}       & \textbf{Value} & \multicolumn{3}{l}{\textbf{Description}}                                \\ \midrule
training step                 & 1000000         & \multicolumn{3}{l}{Maximum time steps until the end of training}        \\                       
learning rate       & 0.0005        & \multicolumn{3}{l}{Learning rate used by Adam optimizer}  \\
replay buffer size            & 200000           & \multicolumn{3}{l}{Maximum number of samples to store in memory} \\
final exploration step            & 500000           & \multicolumn{3}{l}{Number of steps over which $\epsilon$ is annealed to the final value} \\
minibatch size            & 32           & \multicolumn{3}{l}{Number of samples to use for each update} \\
$\lambda_{opt},$ $\lambda_{nopt}$             & 1,1           & \multicolumn{3}{l}{Weight constants for loss functions $L_{opt}$, $L_{nopt}$ and $L_{nopt-min}$} \\
\bottomrule
\end{tabular}
\vspace{-0.2cm}
\caption{\small Hyperparameters for Multi-domain Gaussian Squeeze}
\label{table:hyperparameter2}
\end{table}

\vspace{-0.2cm}

\myparagraph{Modified predator-prey}
Table~\ref{table:hyperparameter3} shows the values of the hyperparameters for the training in the modified predator-prey environment. Each individual action-value network consists of three hidden layers. In addition to the individual Q-networks, QMIX incorporates a monotone network with one hidden layer, and the weights and biases of this network are produced by separate hypernetworks. $\algoname$ has additional joint action-value network with two hidden layers. All hidden layer widths are 64, and all activation functions are ReLU. All neural networks are trained using the Adam optimizer. We use $\epsilon$-greedy action selection with decreasing $\epsilon$ from 1 to 0.1 for exploration. Since history is not very important in this task, our experiments use feed-forward policies, but our method is also applicable with recurrent policies. 

\begin{table}[h!]
\centering
\begin{tabular}{@{}lllll@{}}
\toprule
\textbf{Hyperparameter}       & \textbf{Value} & \multicolumn{3}{l}{\textbf{Description}}                                \\ \midrule
training step                 & 10000000         & \multicolumn{3}{l}{Maximum time steps until the end of training}        \\
discount factor               & 0.99            & \multicolumn{3}{l}{Importance of future rewards}                        \\
learning rate       & 0.0005        & \multicolumn{3}{l}{Learning rate used by Adam optimizer}  \\
target update period            & 10000           & \multicolumn{3}{l}{Target network update period to track learned network} \\
replay buffer size            & 1000000           & \multicolumn{3}{l}{Maximum number of samples to store in memory} \\
final exploration step            & 3000000           & \multicolumn{3}{l}{Number of steps over which $\epsilon$ is annealed to the final value} \\
minibatch size            & 32           & \multicolumn{3}{l}{Number of samples to use for each update} \\
$\lambda_{opt},$ $\lambda_{nopt}$             & 1,1           & \multicolumn{3}{l}{Weight constants for loss functions $L_{opt}$, $L_{nopt}$ and $L_{nopt-min}$} \\
\bottomrule
\end{tabular}
\vspace{-0.2cm}
\caption{\small Hyperparameters for predator-prey training}
\label{table:hyperparameter3}
\end{table}

\vspace{-0.3cm}




\section{Additional results for matrix games}

Table~\ref{table:matrix-random} shows the comparison between the final performance levels of VDN, QMIX, and $\algonamebasic$ for 310 $3 \times 3$ random matrix games, where each value of the payoff matrix is given as a random parameter from 0 to 1. Experimental settings are the same as in the previous matrix game. Since matrix games always satisfy {\bf IGM} conditions, $\algonamebasic$ always trains the optimal action for all 310 cases. On the other hand, VDN and QMIX were shown to be unable to learn an optimal policy for more than half of non-monotonic matrix games.

We briefly analyze the nature of the structural constraints assumed by VDN and QMIX, namely, additivity and monotonicity of the joint action-value functions. There have been only 19 cases in which the results of VDN and QMIX differ from each other. Interestingly, for a total of five cases, the performance of VDN is better than QMIX. QMIX was shown to outperform VDN in more cases (14) than the converse case (5). This supports the idea that the additivity assumption imposed by VDN on the joint action-value functions is indeed stronger than the monotonicity assumption imposed by QMIX.

\begin{table}[h!]
\centering
\begin{tabular}{llll}
\toprule
\textbf{$\algoname$=VDN=QMIX} & \textbf{$\algoname$\textgreater{}VDN=QMIX} & \textbf{VDN\textgreater{}QMIX} & \textbf{QMIX\textgreater{}VDN} \\ \hline
\multicolumn{1}{c}{114} & \multicolumn{1}{c}{177}   & \multicolumn{1}{c}{5} & \multicolumn{1}{c}{14} \\
\bottomrule
\end{tabular}
\caption{\small Final performance comparison with 310 random matrices}
\label{table:matrix-random}
\end{table}

Figures~\ref{fig:PCresult4}-\ref{fig:PCresult5} show the joint action-value function of VDN and QMIX, and Figures~\ref{fig:PCresult2}-\ref{fig:PCresult3} show the transformed joint action-value function of $\algonamebasic$ and $\algonameadv$ for a matrix game where two agents each have 20 actions. In the result, VDN and QMIX can not recover joint action-value, and these algorithms learn sub-optimal policy $u_1, u_2 = (15, 5).$ In the other hand, the result shows that $\algonamebasic$ and $\algonameadv$ successfully learn the optimal action, but $\algonameadv$ has the ability to more accurately distinguish action from non-optimal action as shown in Figure~\ref{fig:PCresult3}.

\begin{figure}[!h]
\hspace*{\fill}
\begin{subfigure}[t]{.12\columnwidth}
  \includegraphics[width=\linewidth]{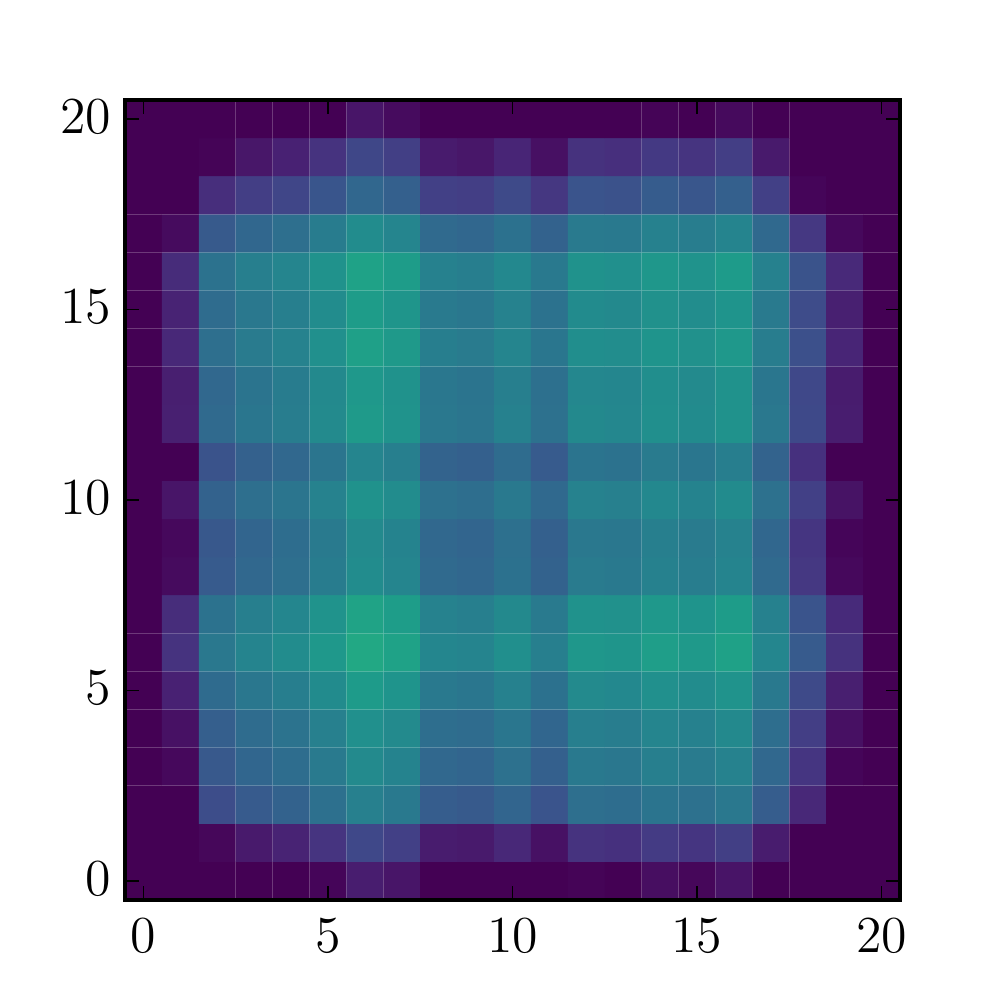}
  \caption{\scriptsize 1000 step}
\end{subfigure}\hspace*{\fill}
\begin{subfigure}[t]{.12\columnwidth}
  \includegraphics[width=\linewidth]{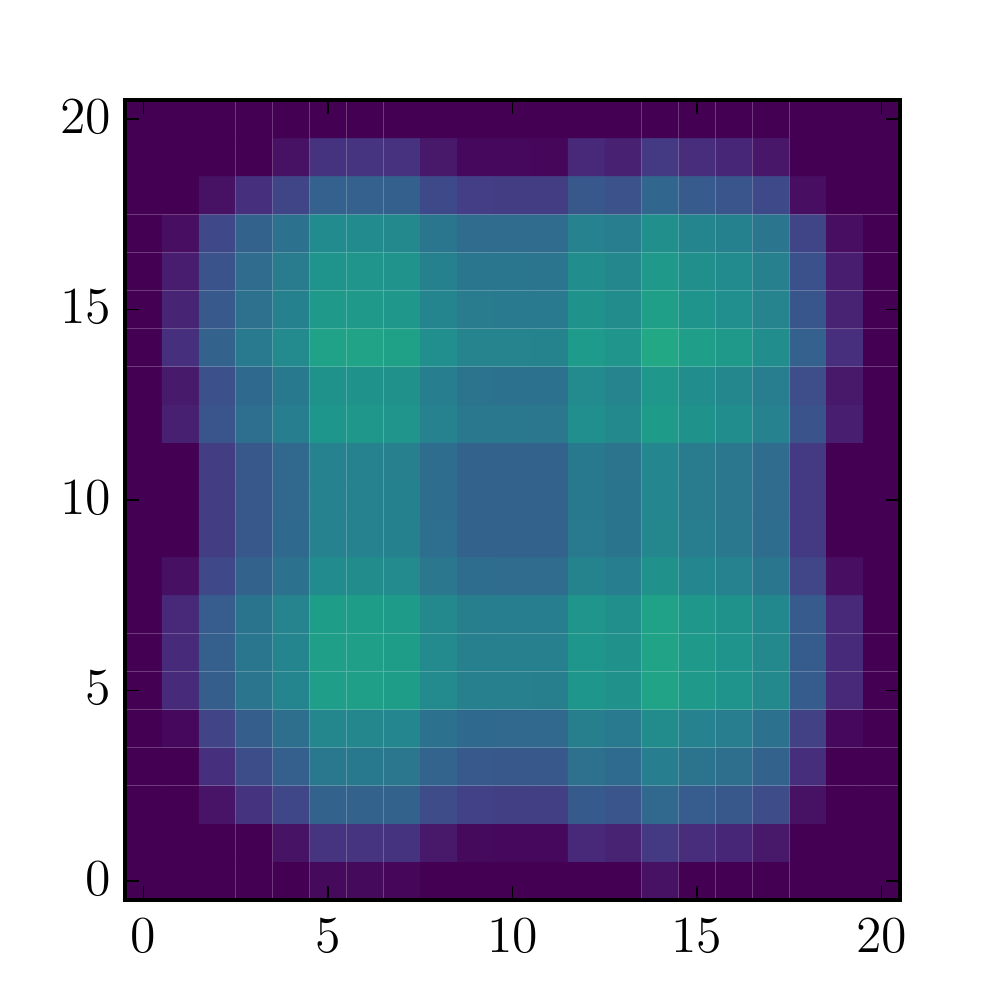}
  \caption{\scriptsize 2000 step}
\end{subfigure}\hspace*{\fill}
\begin{subfigure}[t]{.12\columnwidth}
  \includegraphics[width=\linewidth]{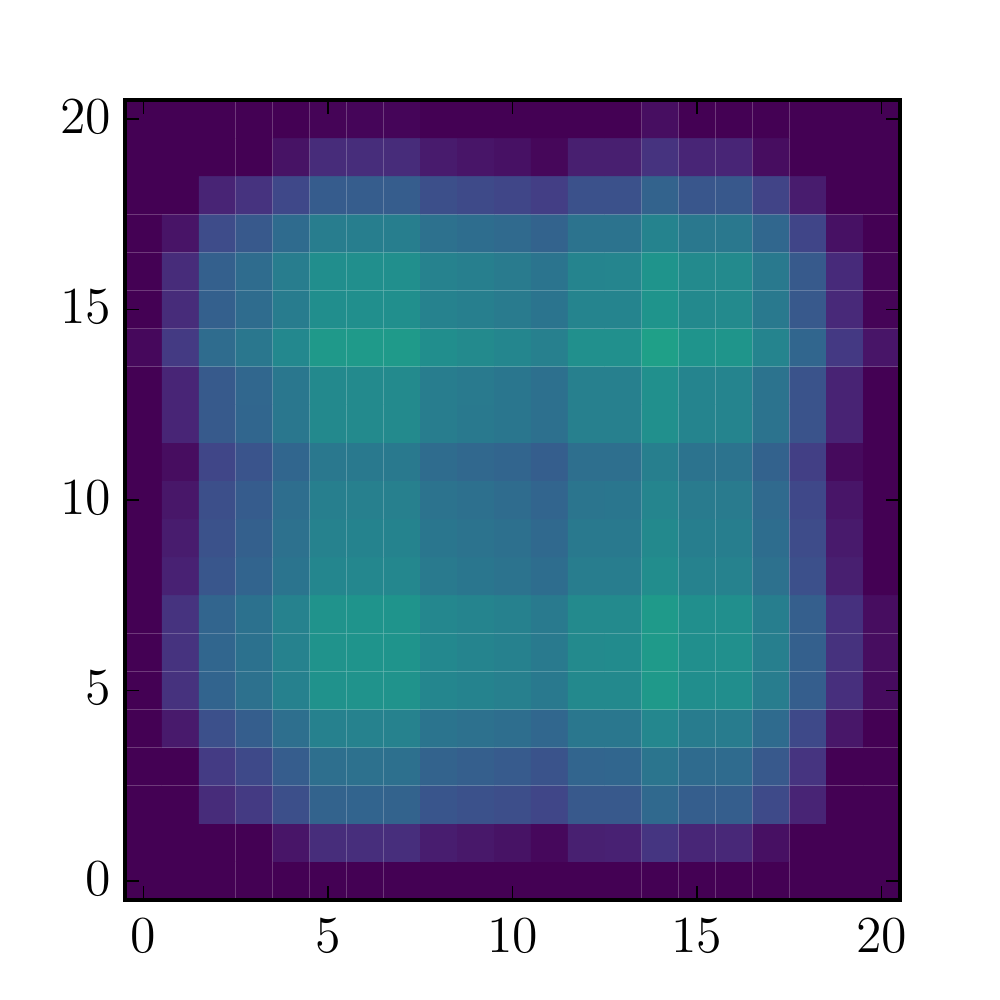}
  \caption{\scriptsize 3000 step}
\end{subfigure}\hspace*{\fill}
\begin{subfigure}[t]{.12\columnwidth}
  \includegraphics[width=\linewidth]{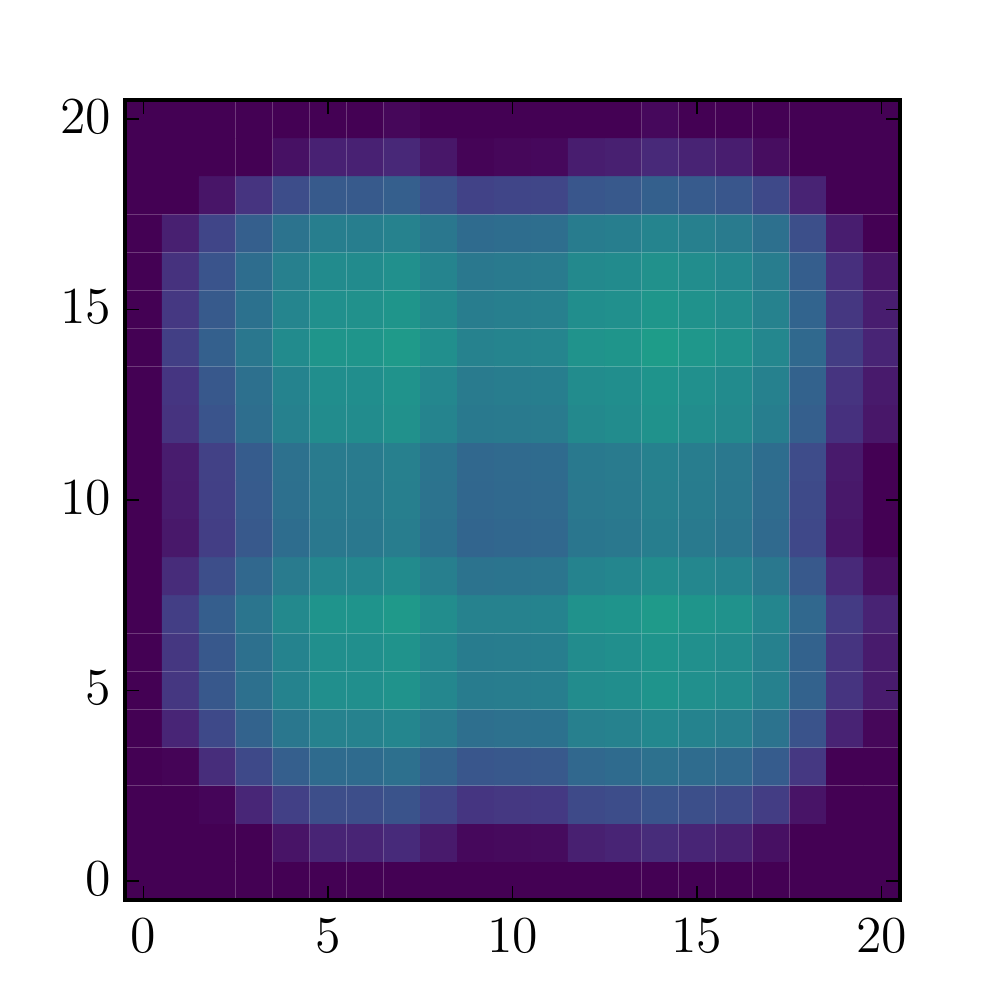}
  \caption{\scriptsize 4000 step}
\end{subfigure}\hspace*{\fill}
\hspace*{\fill}
\begin{subfigure}[t]{.12\columnwidth}
  \includegraphics[width=\linewidth]{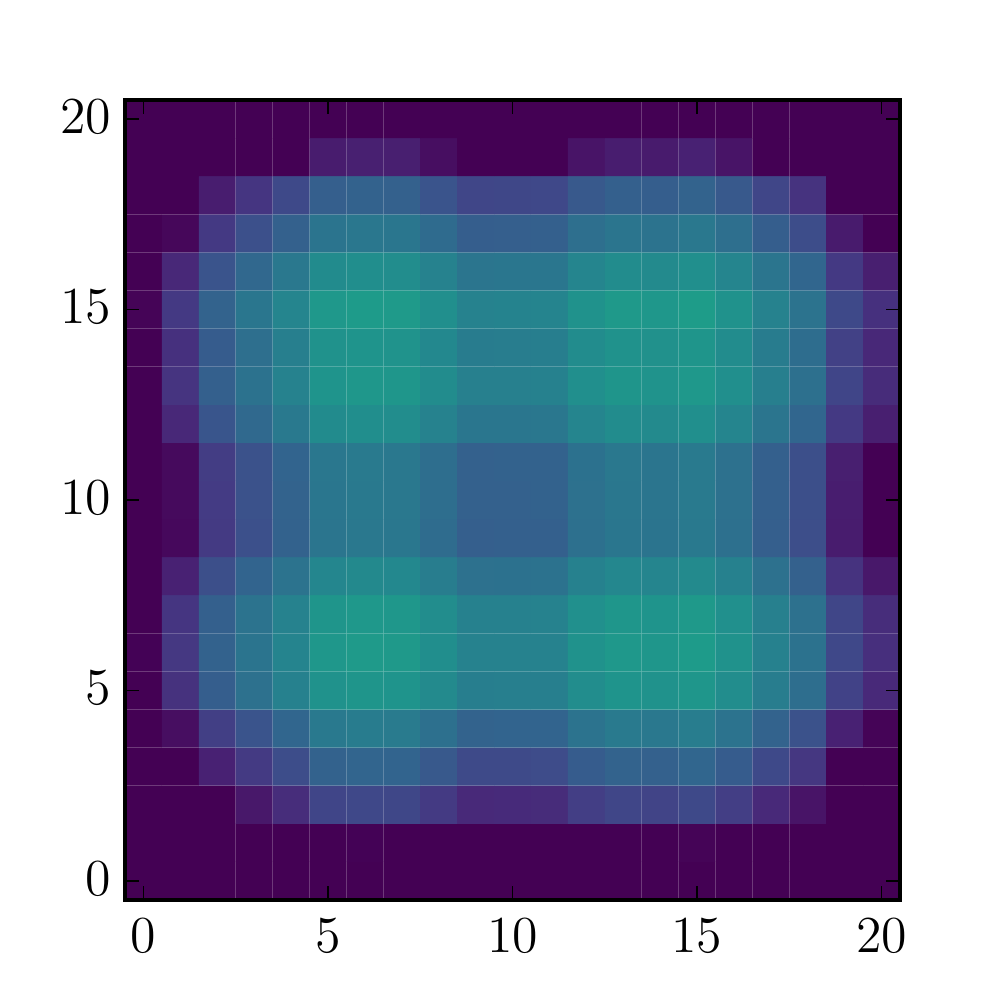}
  \caption{\scriptsize 5000 step}
\end{subfigure}\hspace*{\fill}
\begin{subfigure}[t]{.12\columnwidth}
  \includegraphics[width=\linewidth]{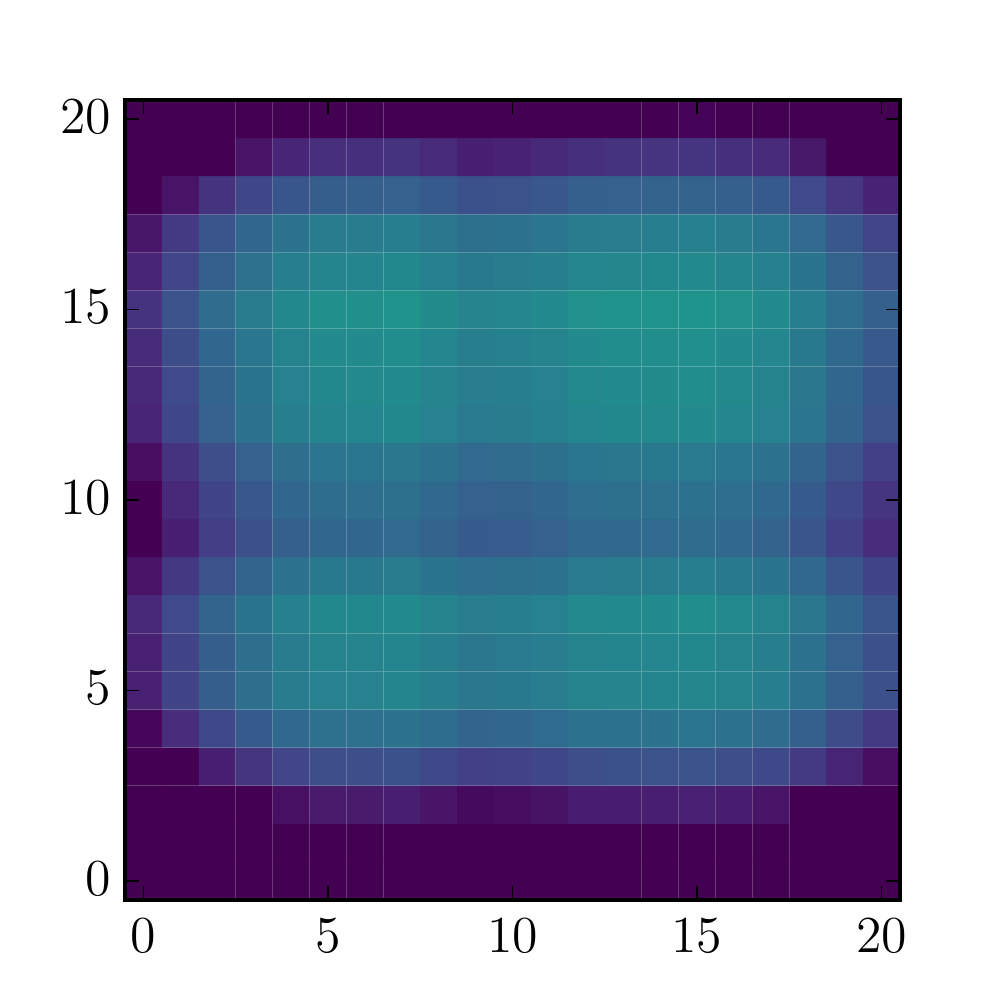}
  \caption{\scriptsize 6000 step}
\end{subfigure}\hspace*{\fill}
\begin{subfigure}[t]{.12\columnwidth}
  \includegraphics[width=\linewidth]{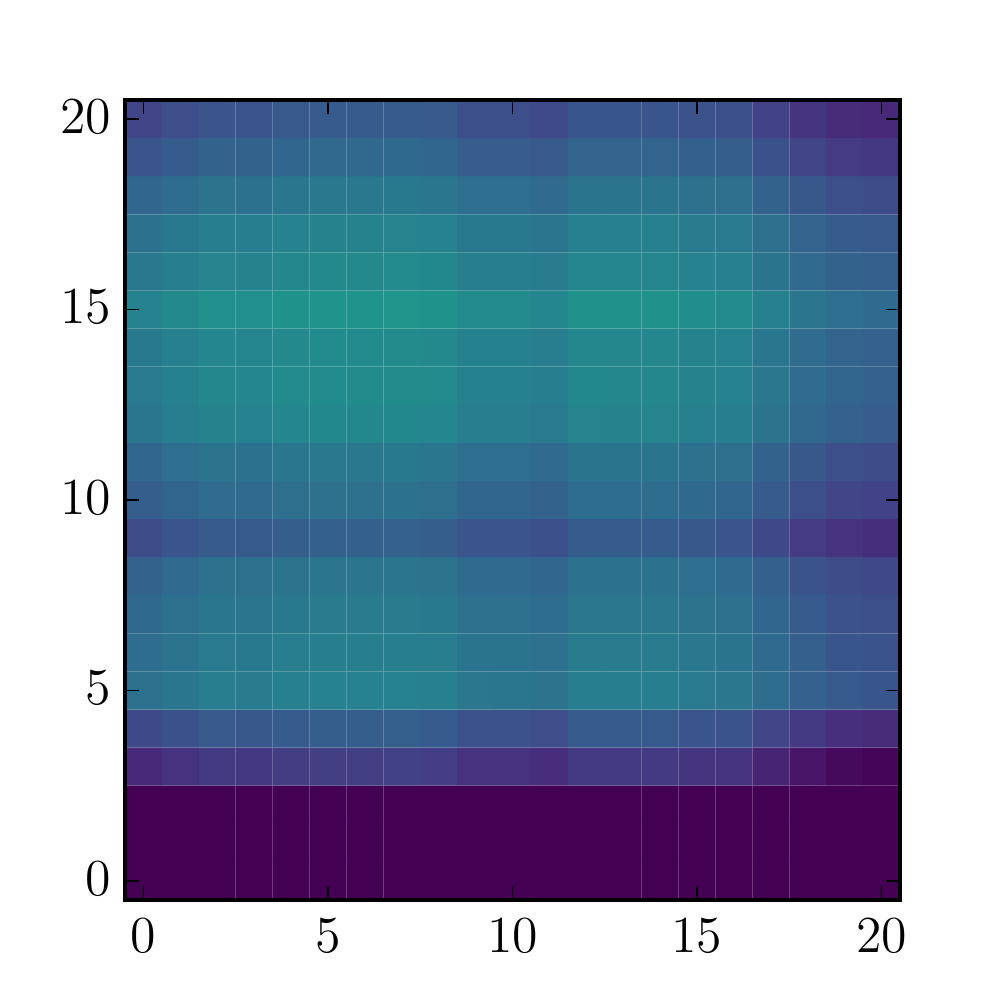}
  \caption{\scriptsize 7000 step}
\end{subfigure}\hspace*{\fill}
\begin{subfigure}[t]{.12\columnwidth}
  \includegraphics[width=\linewidth]{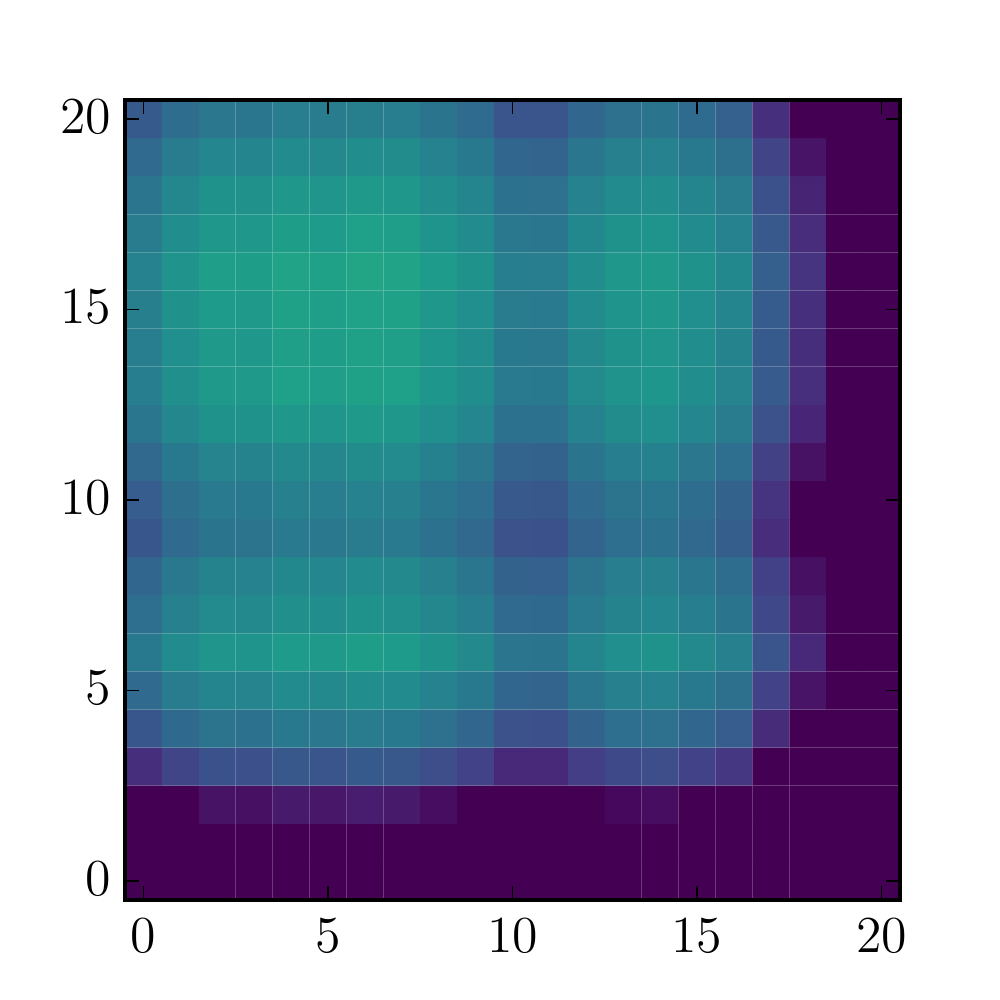}
  \caption{\scriptsize 8000 step}
\end{subfigure}\hspace*{\fill}
\vspace{-0.2cm}
\caption{$x$-axis and $y$-axis: agents 1 and 2's actions. Colored values represent the values of $\Qtot$ for VDN}
\label{fig:PCresult4}

\end{figure}

\vspace{-0.6cm}
\begin{figure}[!h]
\hspace*{\fill}
\begin{subfigure}[t]{.12\columnwidth}
  \includegraphics[width=\linewidth]{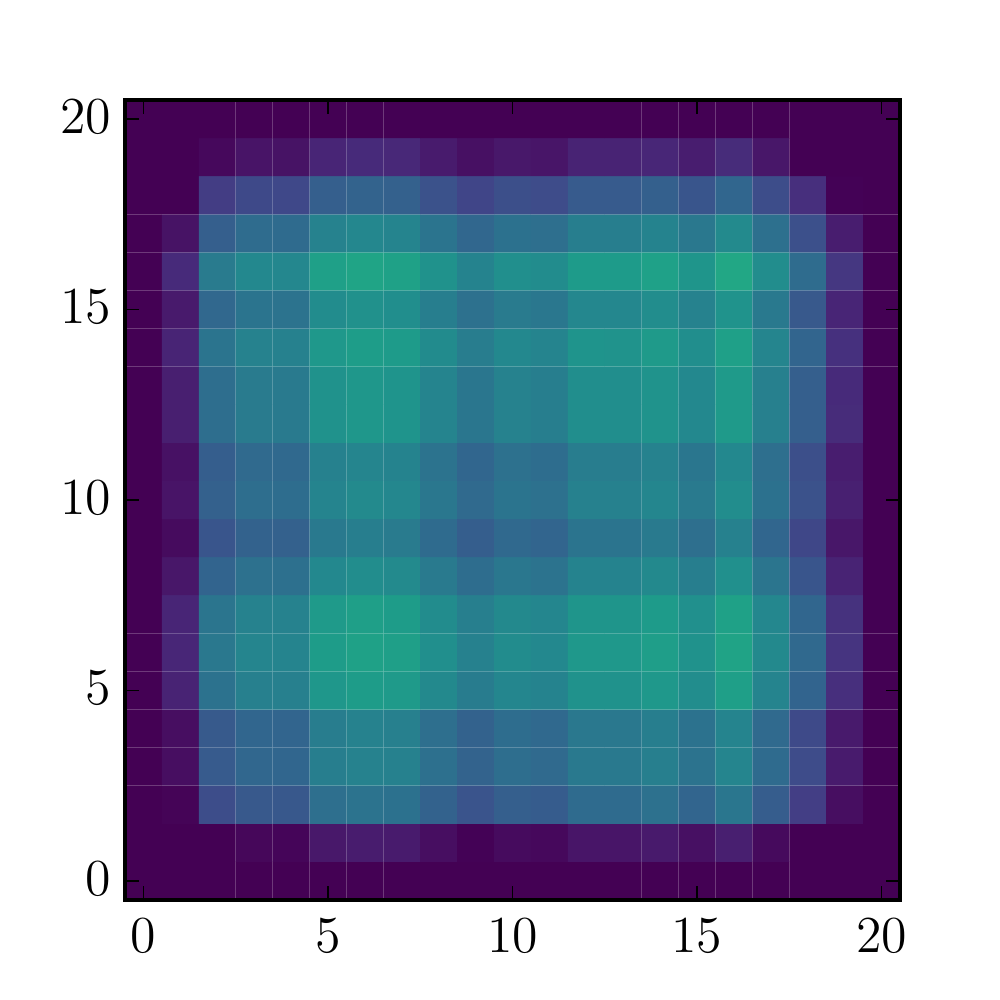}
  \caption{\scriptsize 1000 step}
\end{subfigure}\hspace*{\fill}
\begin{subfigure}[t]{.12\columnwidth}
  \includegraphics[width=\linewidth]{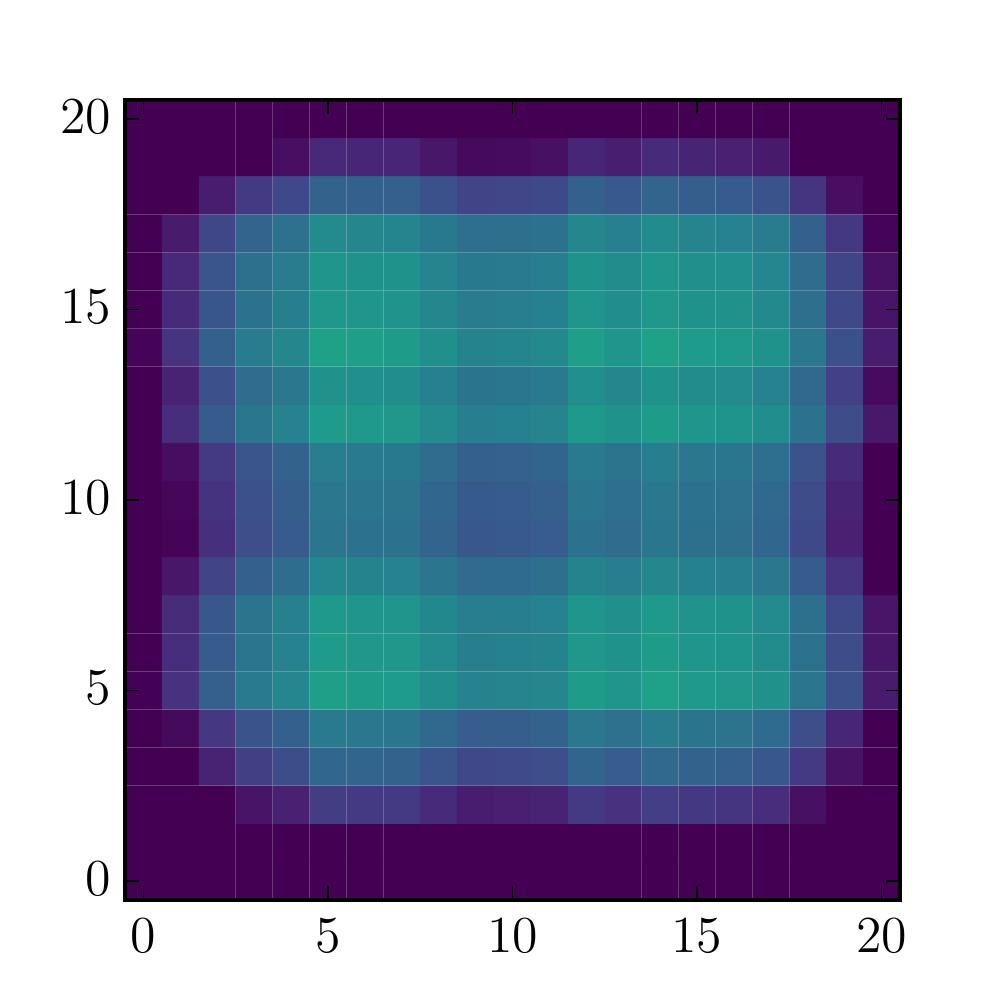}
  \caption{\scriptsize 2000 step}
\end{subfigure}\hspace*{\fill}
\begin{subfigure}[t]{.12\columnwidth}
  \includegraphics[width=\linewidth]{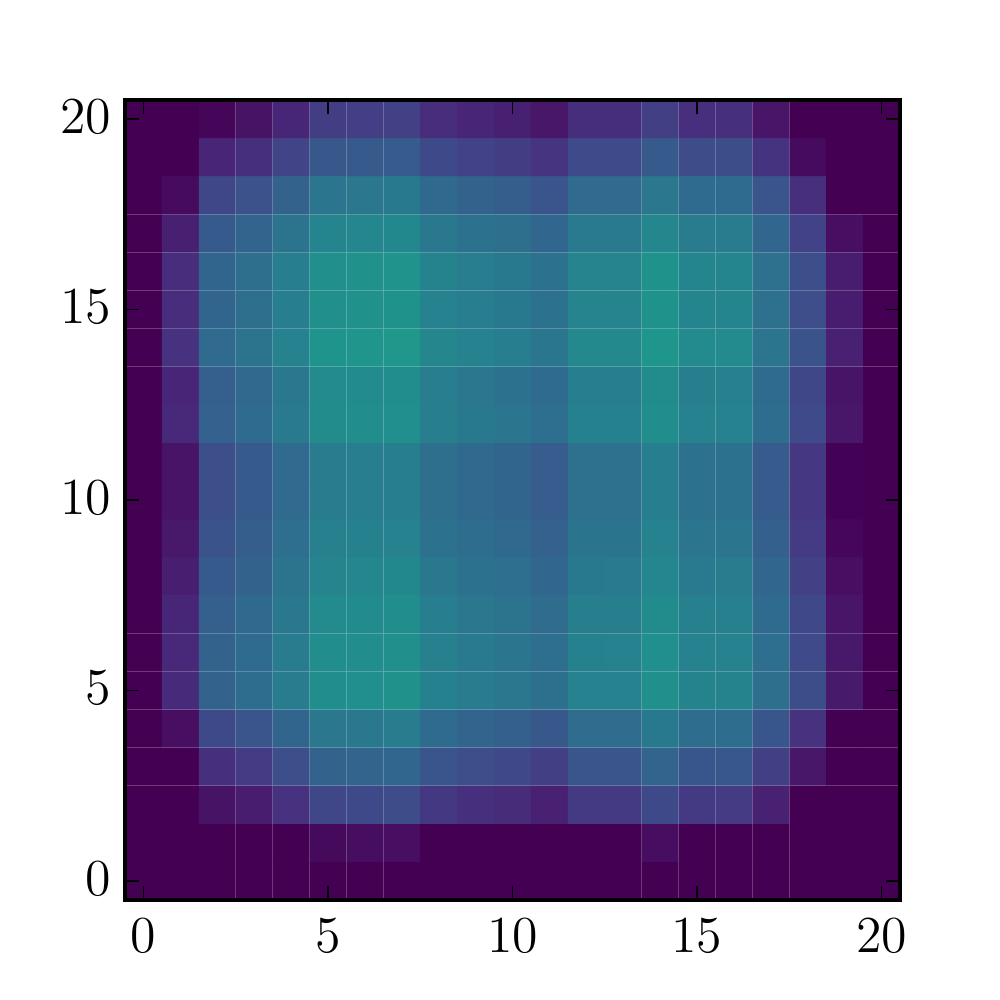}
  \caption{\scriptsize 3000 step}
\end{subfigure}\hspace*{\fill}
\begin{subfigure}[t]{.12\columnwidth}
  \includegraphics[width=\linewidth]{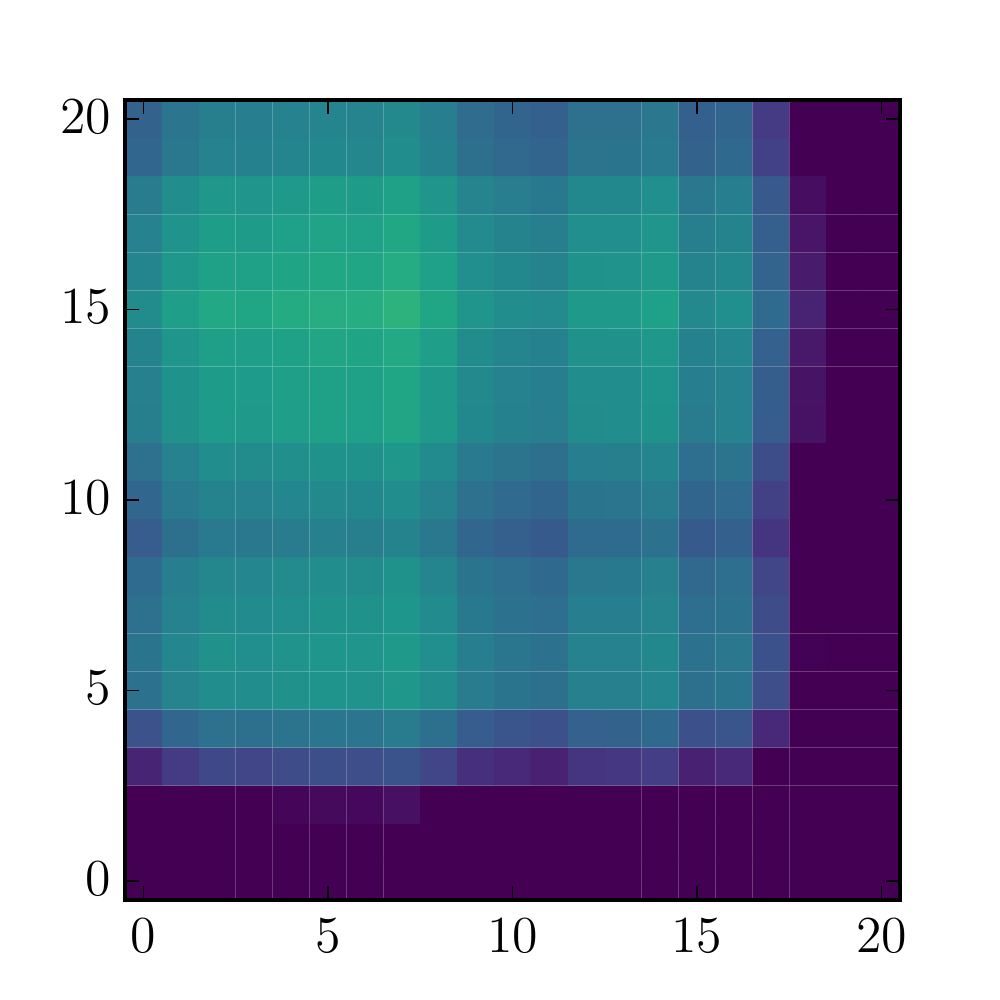}
  \caption{\scriptsize 4000 step}
\end{subfigure}\hspace*{\fill}
\hspace*{\fill}
\begin{subfigure}[t]{.12\columnwidth}
  \includegraphics[width=\linewidth]{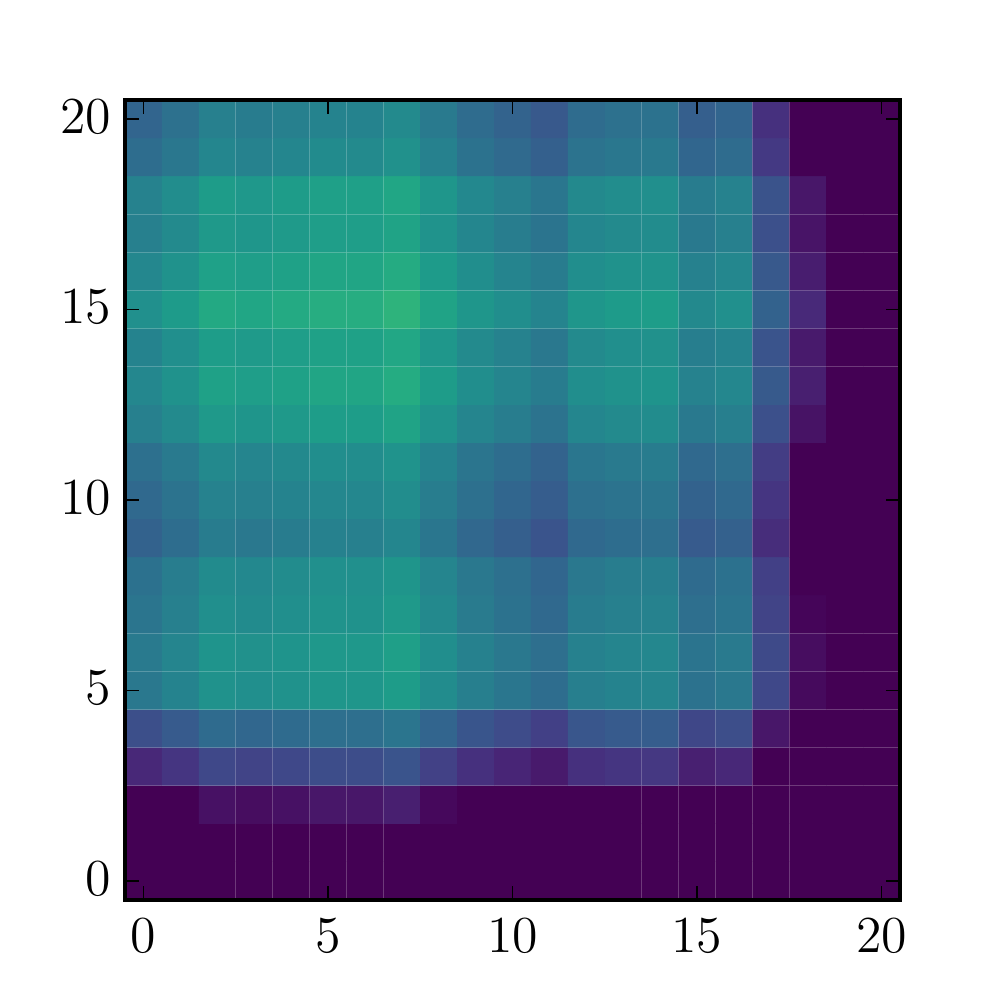}
  \caption{\scriptsize 5000 step}
\end{subfigure}\hspace*{\fill}
\begin{subfigure}[t]{.12\columnwidth}
  \includegraphics[width=\linewidth]{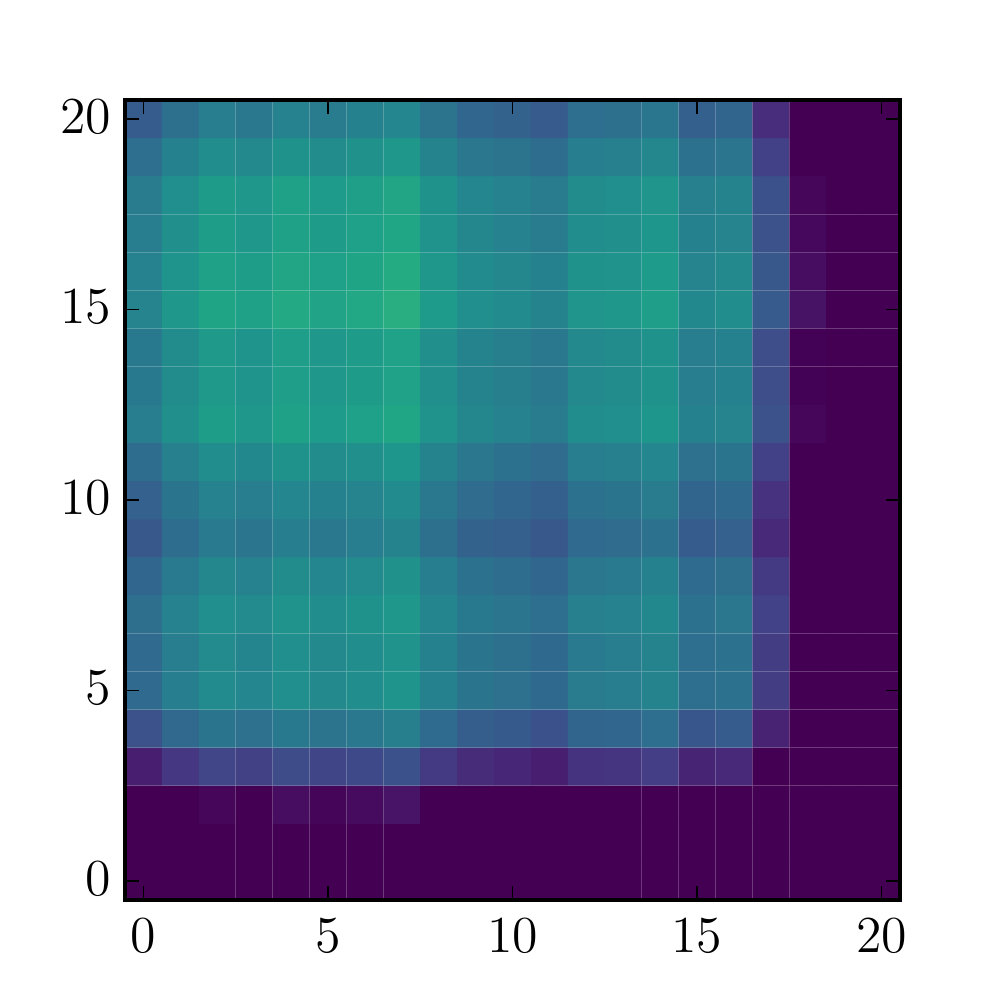}
  \caption{\scriptsize 6000 step}
\end{subfigure}\hspace*{\fill}
\begin{subfigure}[t]{.12\columnwidth}
  \includegraphics[width=\linewidth]{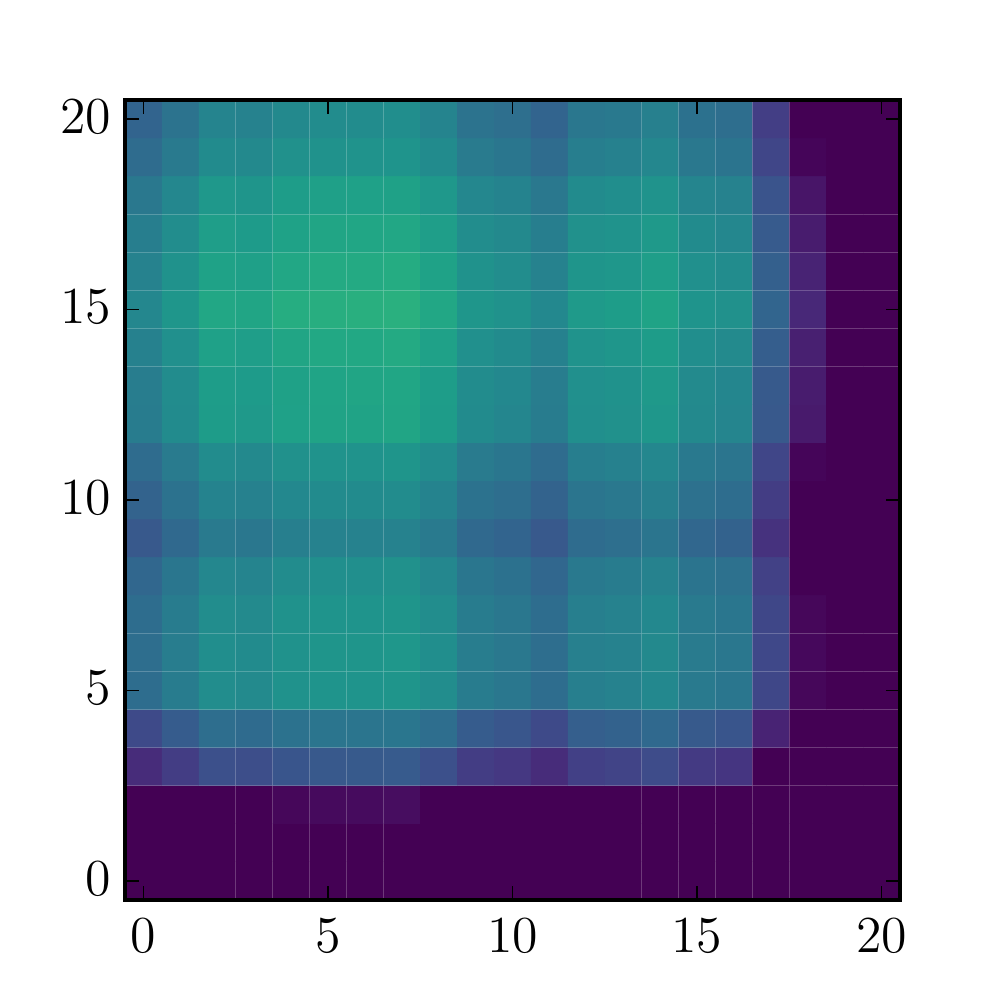}
  \caption{\scriptsize 7000 step}
\end{subfigure}\hspace*{\fill}
\begin{subfigure}[t]{.12\columnwidth}
  \includegraphics[width=\linewidth]{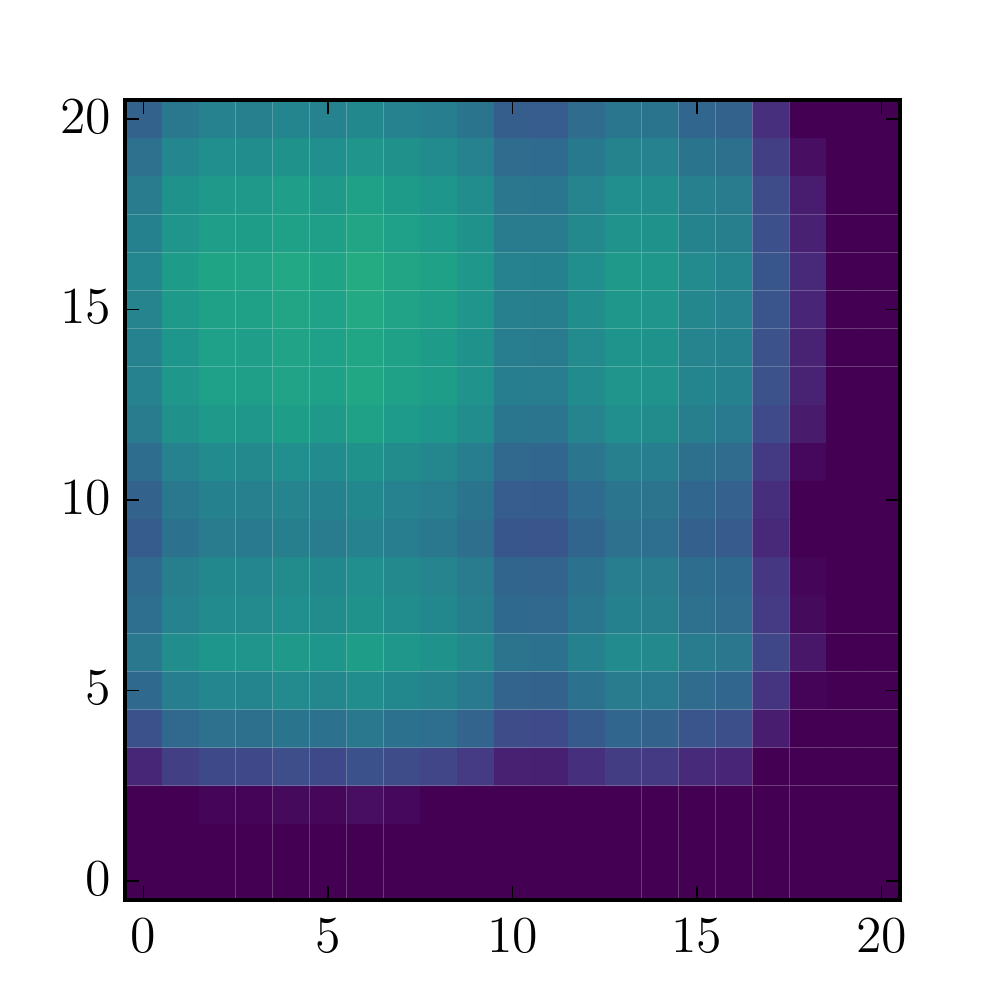}
  \caption{\scriptsize 8000 step}
\end{subfigure}\hspace*{\fill}
\vspace{-0.2cm}
\caption{$x$-axis and $y$-axis: agents 1 and 2's actions. Colored values represent the values of $\Qtot$ for QMIX}
\label{fig:PCresult5}
\end{figure}
\vspace{-0.6cm}
\begin{figure}[!h]
\hspace*{\fill}
\begin{subfigure}[t]{.12\columnwidth}
  \includegraphics[width=\linewidth]{figure/result-pcolor/QTRAN0.pdf}
  \caption{\scriptsize 1000 step}
\end{subfigure}\hspace*{\fill}
\begin{subfigure}[t]{.12\columnwidth}
  \includegraphics[width=\linewidth]{figure/result-pcolor/QTRAN1.pdf}
  \caption{\scriptsize 2000 step}
\end{subfigure}\hspace*{\fill}
\begin{subfigure}[t]{.12\columnwidth}
  \includegraphics[width=\linewidth]{figure/result-pcolor/QTRAN2.pdf}
  \caption{\scriptsize 3000 step}
\end{subfigure}\hspace*{\fill}
\begin{subfigure}[t]{.12\columnwidth}
  \includegraphics[width=\linewidth]{figure/result-pcolor/QTRAN3.pdf}
  \caption{\scriptsize 4000 step}
\end{subfigure}\hspace*{\fill}
\hspace*{\fill}
\begin{subfigure}[t]{.12\columnwidth}
  \includegraphics[width=\linewidth]{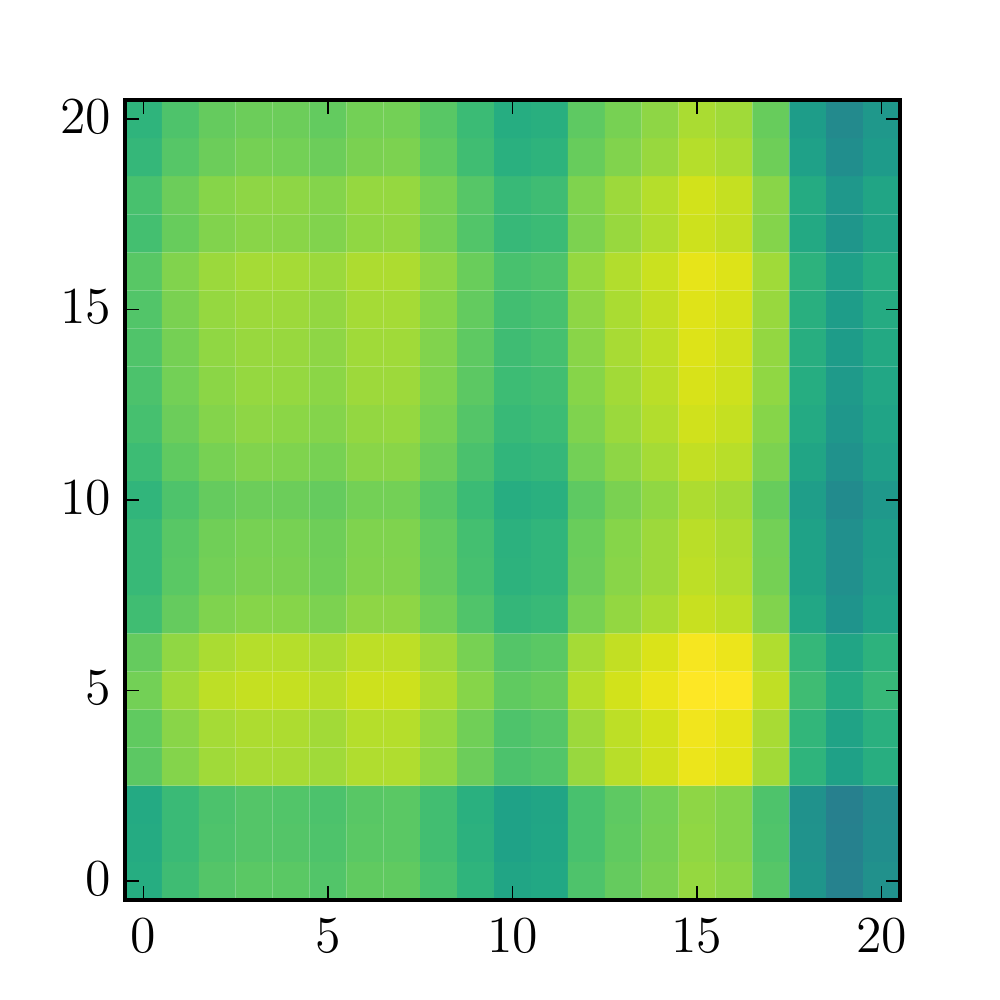}
  \caption{\scriptsize 5000 step}
\end{subfigure}\hspace*{\fill}
\begin{subfigure}[t]{.12\columnwidth}
  \includegraphics[width=\linewidth]{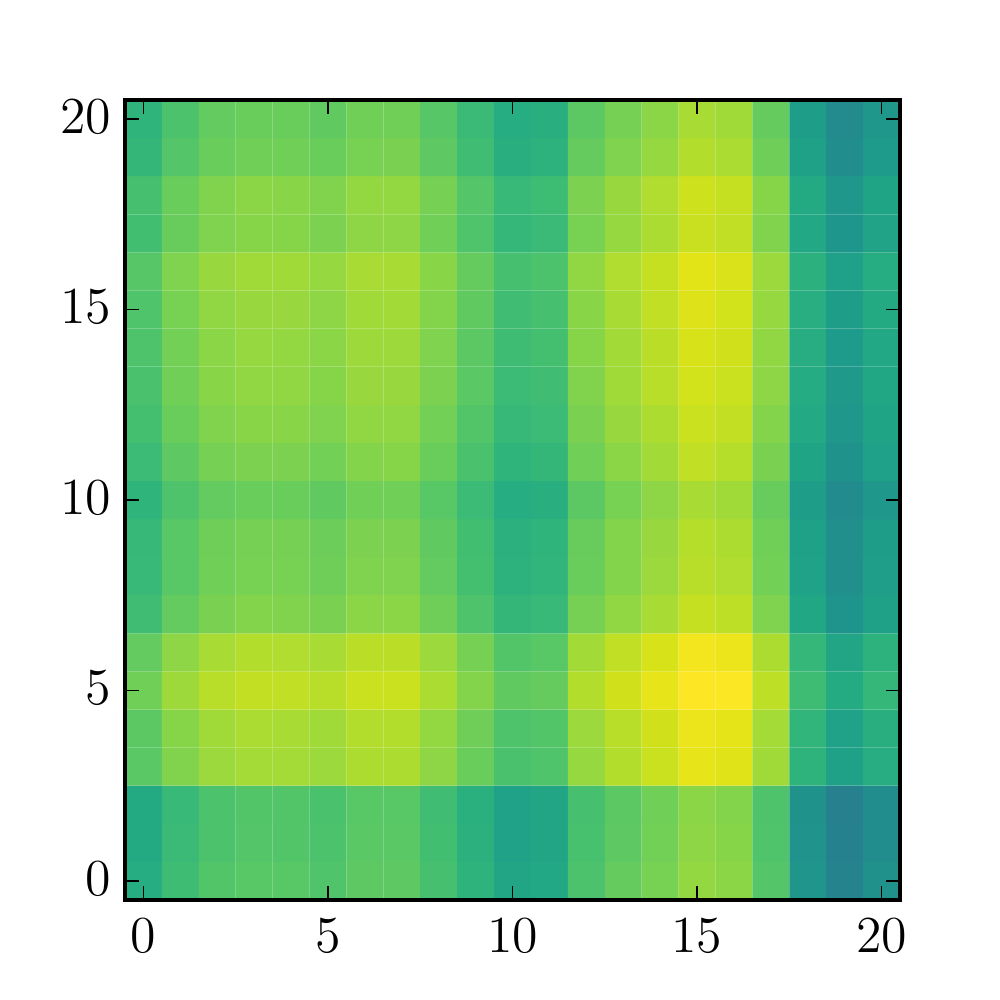}
  \caption{\scriptsize 6000 step}
\end{subfigure}\hspace*{\fill}
\begin{subfigure}[t]{.12\columnwidth}
  \includegraphics[width=\linewidth]{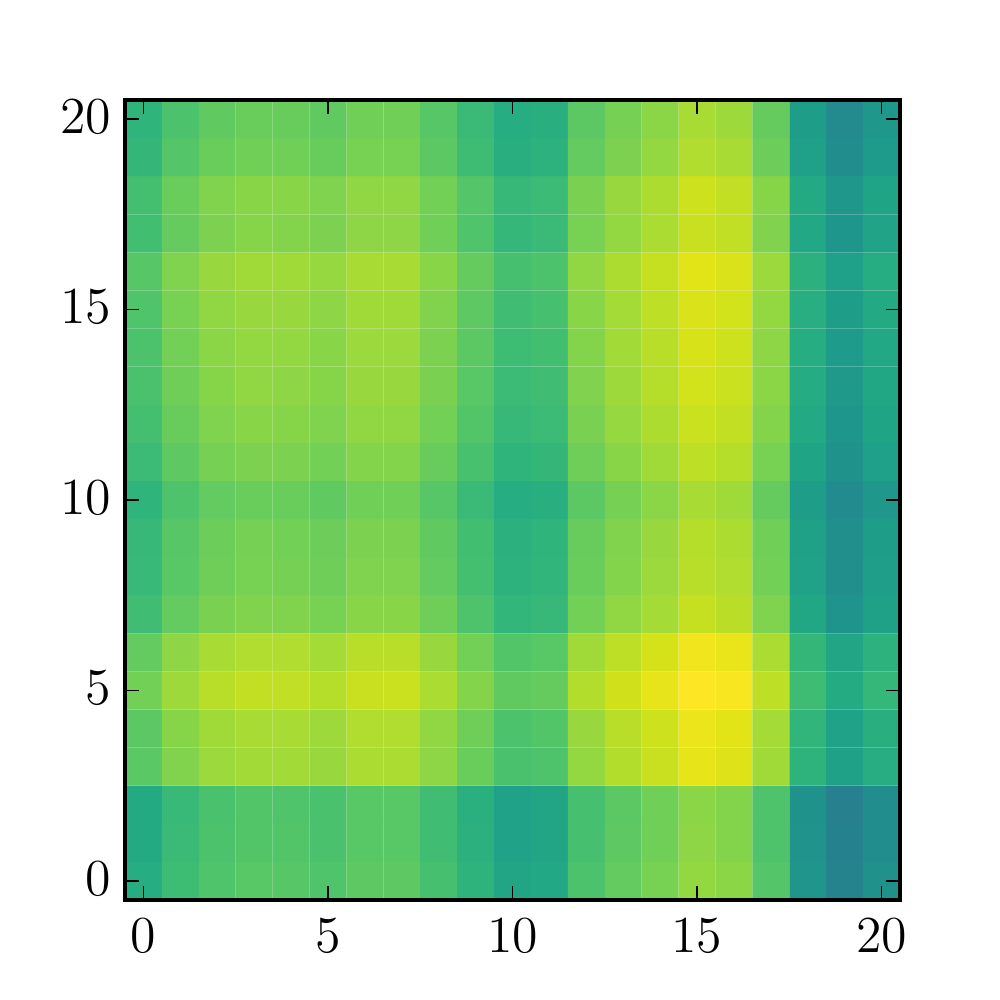}
  \caption{\scriptsize 7000 step}
\end{subfigure}\hspace*{\fill}
\begin{subfigure}[t]{.12\columnwidth}
  \includegraphics[width=\linewidth]{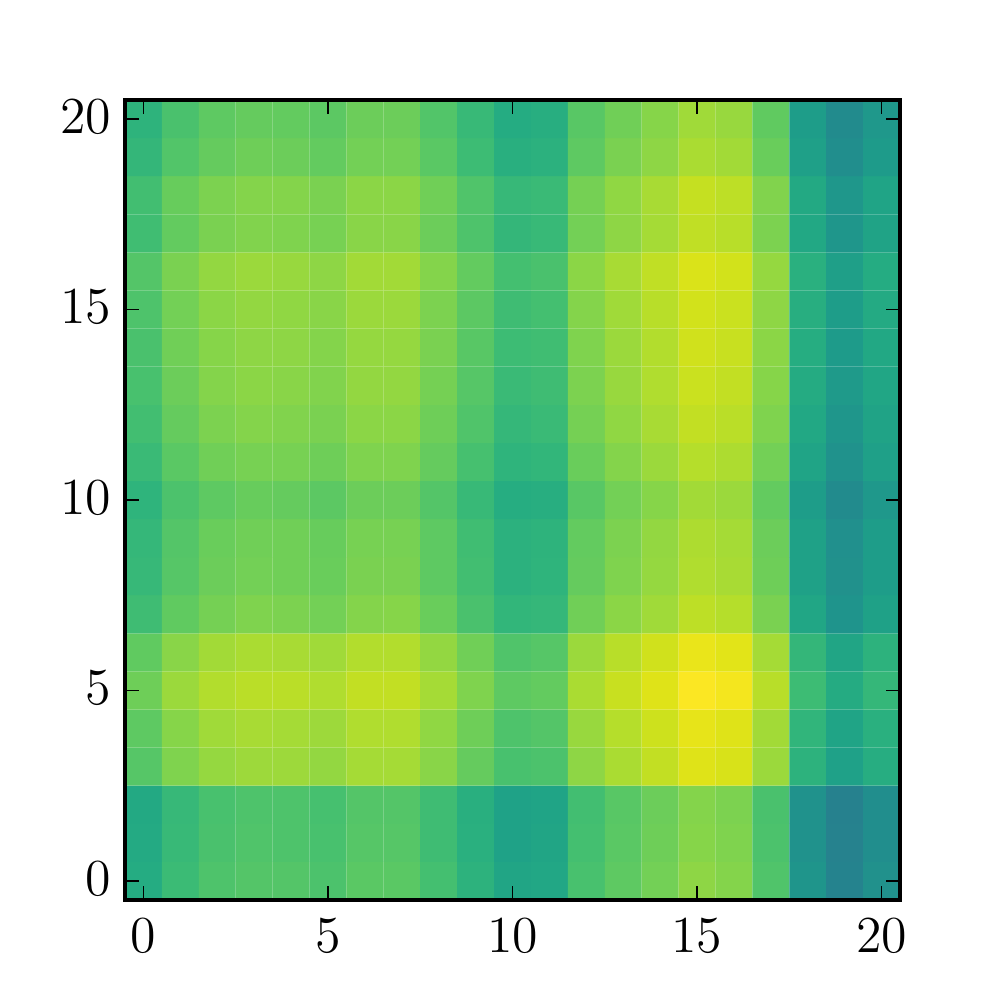}
  \caption{\scriptsize 8000 step}
\end{subfigure}\hspace*{\fill}
\vspace{-0.2cm}
\caption{$x$-axis and $y$-axis: agents 1 and 2's actions. Colored values represent the values of $\Qtot'$ for $\algonamebasic$}
\label{fig:PCresult2}
\end{figure}
\vspace{-0.6cm}
\begin{figure}[!h]
\hspace*{\fill}
\begin{subfigure}[t]{.12\columnwidth}
  \includegraphics[width=\linewidth]{figure/result-pcolor/QTRAN-adv0.pdf}
  \caption{\scriptsize 1000 step}
\end{subfigure}\hspace*{\fill}
\begin{subfigure}[t]{.12\columnwidth}
  \includegraphics[width=\linewidth]{figure/result-pcolor/QTRAN-adv1.pdf}
  \caption{\scriptsize 2000 step}
\end{subfigure}\hspace*{\fill}
\begin{subfigure}[t]{.12\columnwidth}
  \includegraphics[width=\linewidth]{figure/result-pcolor/QTRAN-adv2.pdf}
  \caption{\scriptsize 3000 step}
\end{subfigure}\hspace*{\fill}
\begin{subfigure}[t]{.12\columnwidth}
  \includegraphics[width=\linewidth]{figure/result-pcolor/QTRAN-adv3.pdf}
  \caption{\scriptsize 4000 step}
\end{subfigure}\hspace*{\fill}
\hspace*{\fill}
\begin{subfigure}[t]{.12\columnwidth}
  \includegraphics[width=\linewidth]{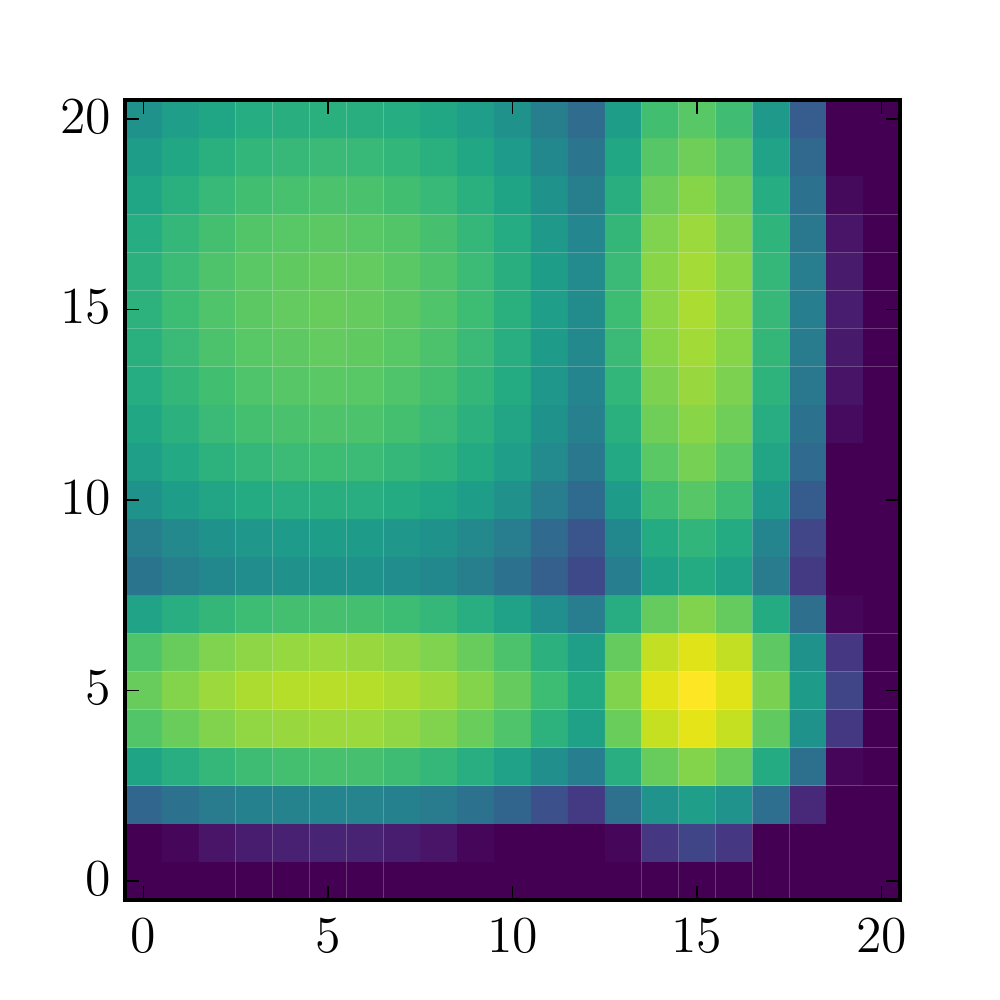}
  \caption{\scriptsize 5000 step}
\end{subfigure}\hspace*{\fill}
\begin{subfigure}[t]{.12\columnwidth}
  \includegraphics[width=\linewidth]{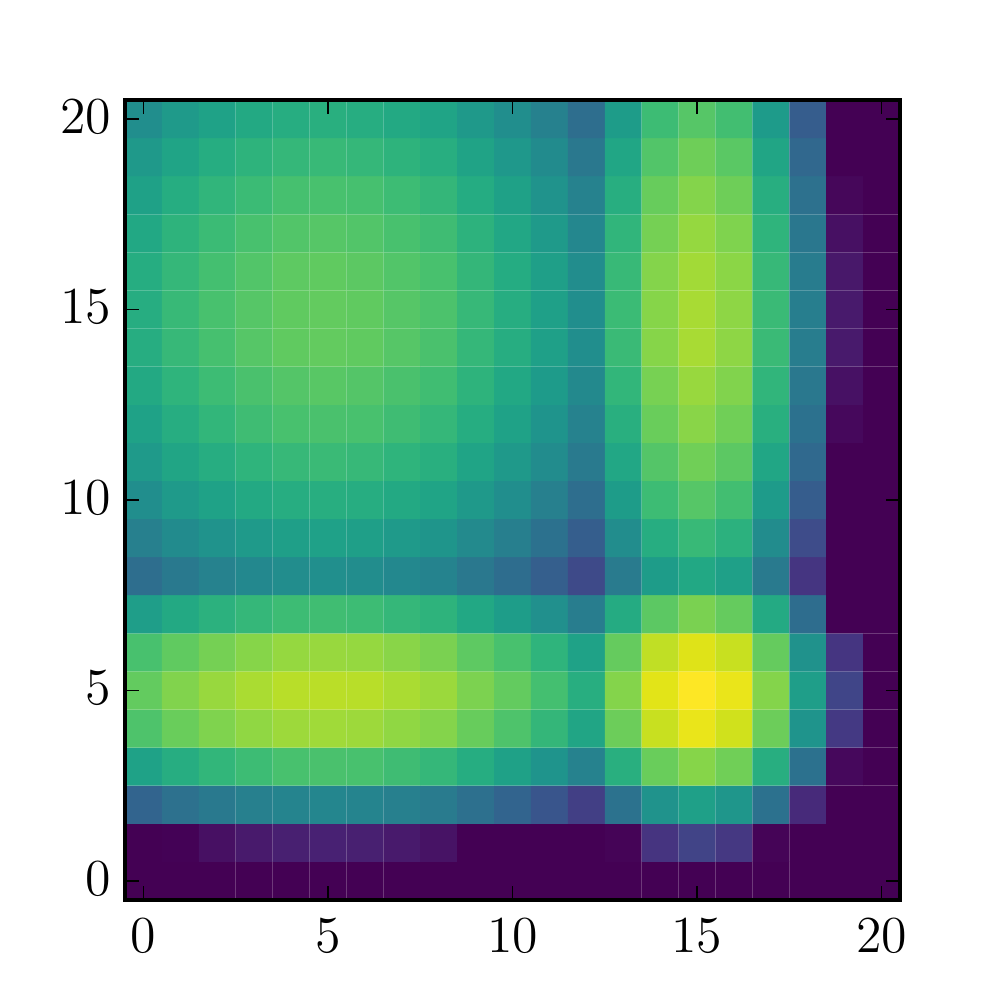}
  \caption{\scriptsize 6000 step}
\end{subfigure}\hspace*{\fill}
\begin{subfigure}[t]{.12\columnwidth}
  \includegraphics[width=\linewidth]{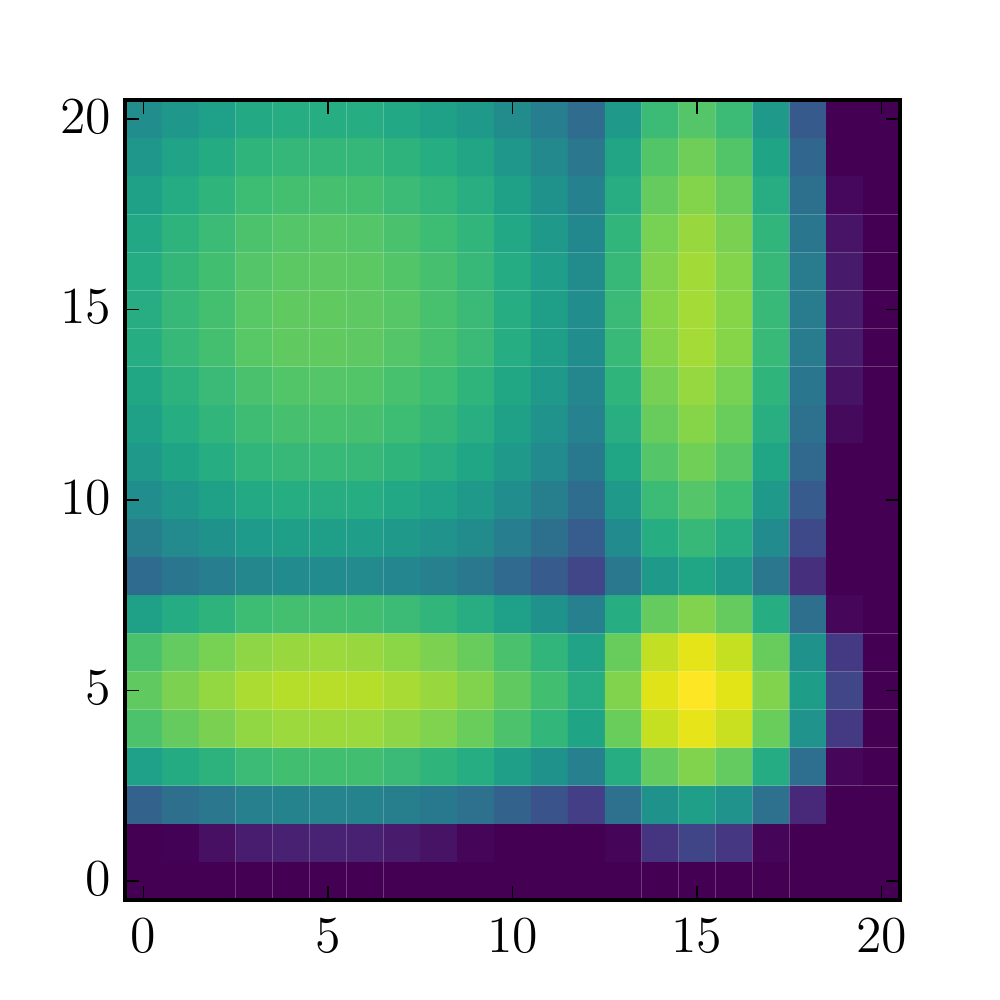}
  \caption{\scriptsize 7000 step}
\end{subfigure}\hspace*{\fill}
\begin{subfigure}[t]{.12\columnwidth}
  \includegraphics[width=\linewidth]{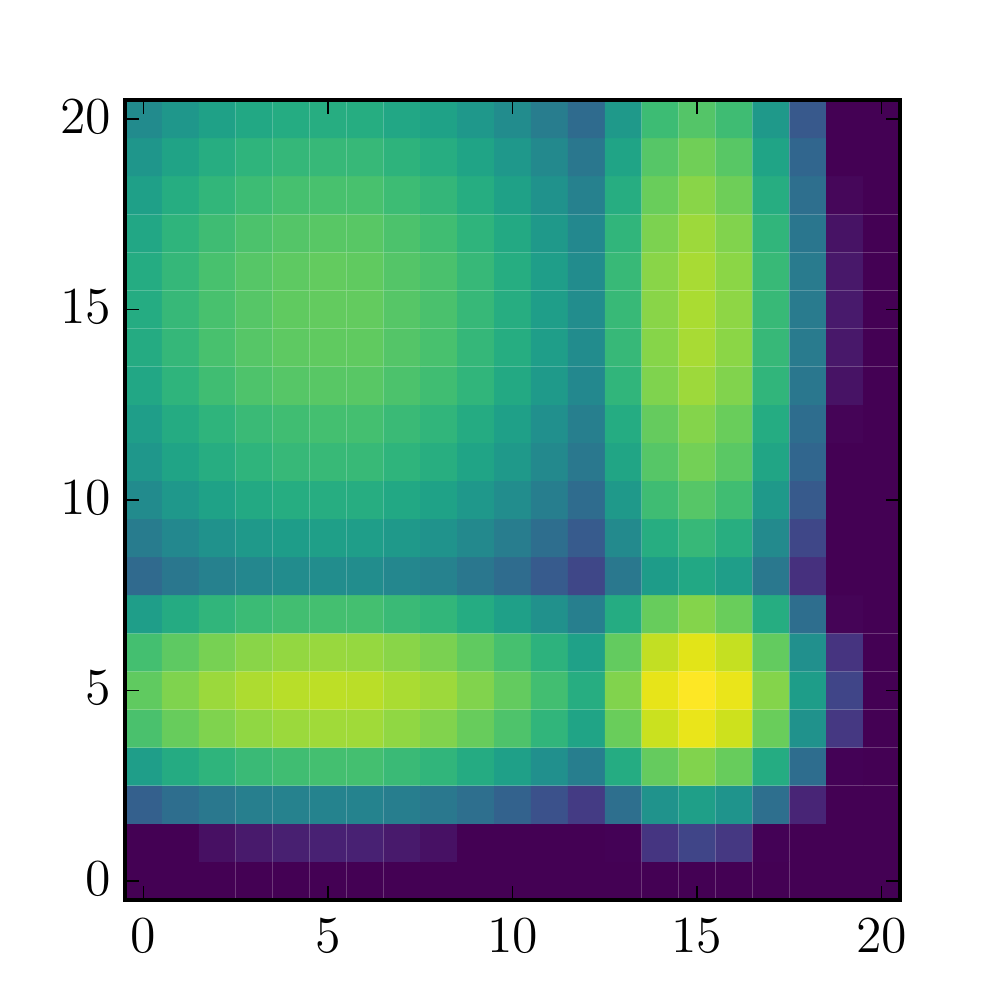}
  \caption{\scriptsize 8000 step}
\end{subfigure}\hspace*{\fill}
\vspace{-0.2cm}
\caption{$x$-axis and $y$-axis: agents 1 and 2's actions.  Colored values represent the values of $\Qtot'$ for $\algonameadv$}
\label{fig:PCresult3}
\end{figure}

\section{Comparison with other value-based methods for modified predator-prey}

Additionally, we have conducted experiments with Dec-HDRQN \cite{Omid:Hysteretic}. Dec-HDRQN can indeed solve problems similar to ours by changing the learning rate according to TD-error without factorization. However, Dec-HDRQN does not take advantage of centralized training. We implemented Dec-HDRQN ($\alpha = 0.001, \beta = 0.0002$) with modified predator-prey experiments and Figure~\ref{fig:PPresult1cr} shows the performance of algorithms for six settings with different $N$ and $P$ values.

First, when the complexity of the task is relatively low, Dec-HDRQN shows better performance than VDN and QMIX as shown in the Figure~\ref{fig:PPgraph2cr}. However, QTRAN performs better than Dec-HDRQN in the case. When the penalty and the number of agents are larger, Dec-HDRQN underperforms VDN and QMIX. Figure~\ref{fig:PPgraph3cr} shows Dec-HDRQN scores an average of nearly 0 in the case of $N=2, P=1.5$. There is a limit of Dec-HDRQN since the method is heuristic and does not perform centralized training. Finally, Dec-HDRQN showed slower convergence speed overall.

\begin{figure*}[t!]
\begin{subfigure}[t]{.33\textwidth}
  \includegraphics[width=\linewidth]{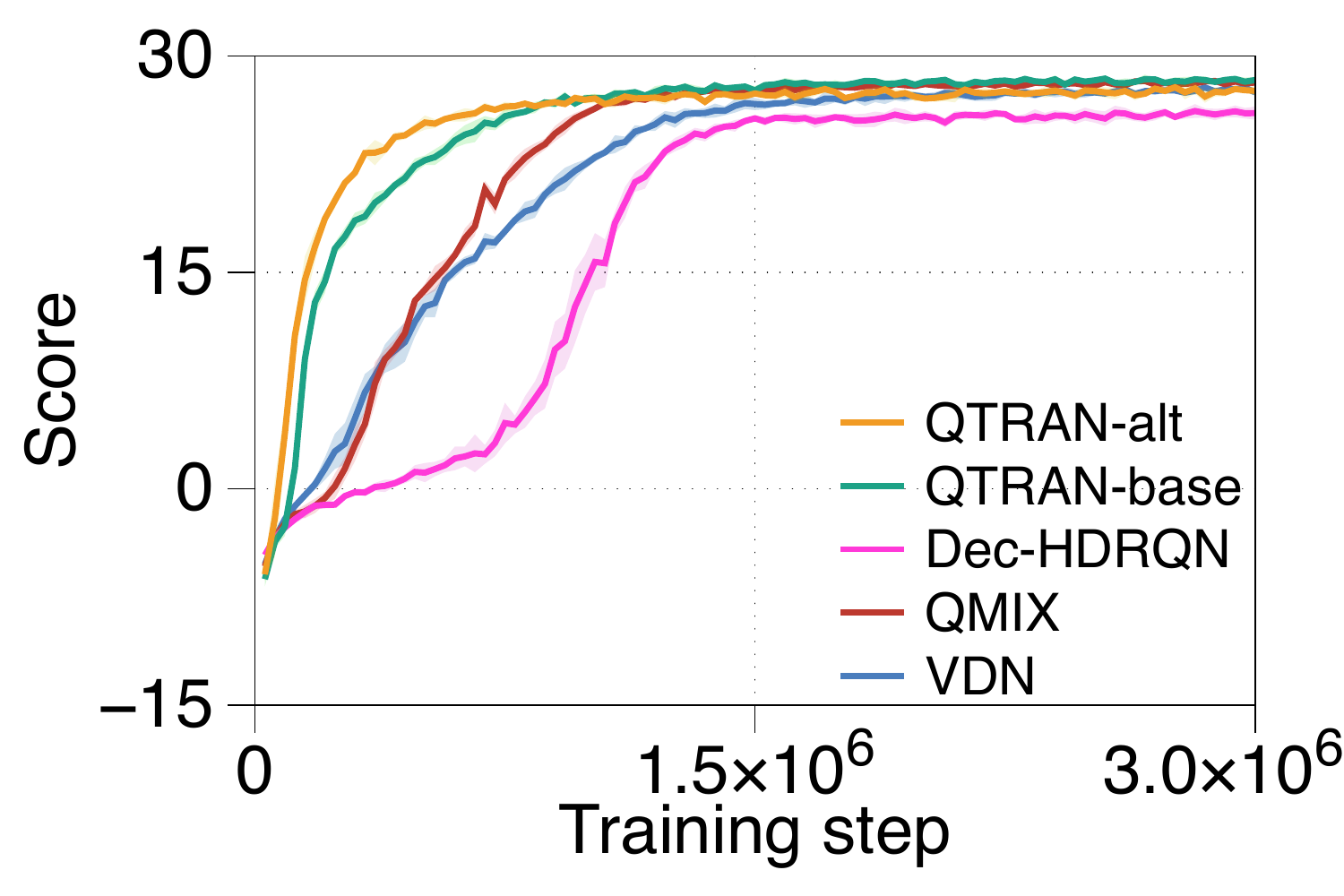}
  \caption{$N=2, P = 0.5$}
  \label{fig:PPgraph1cr}
\end{subfigure}\hspace*{\fill}
\begin{subfigure}[t]{.33\textwidth}
  \includegraphics[width=\linewidth]{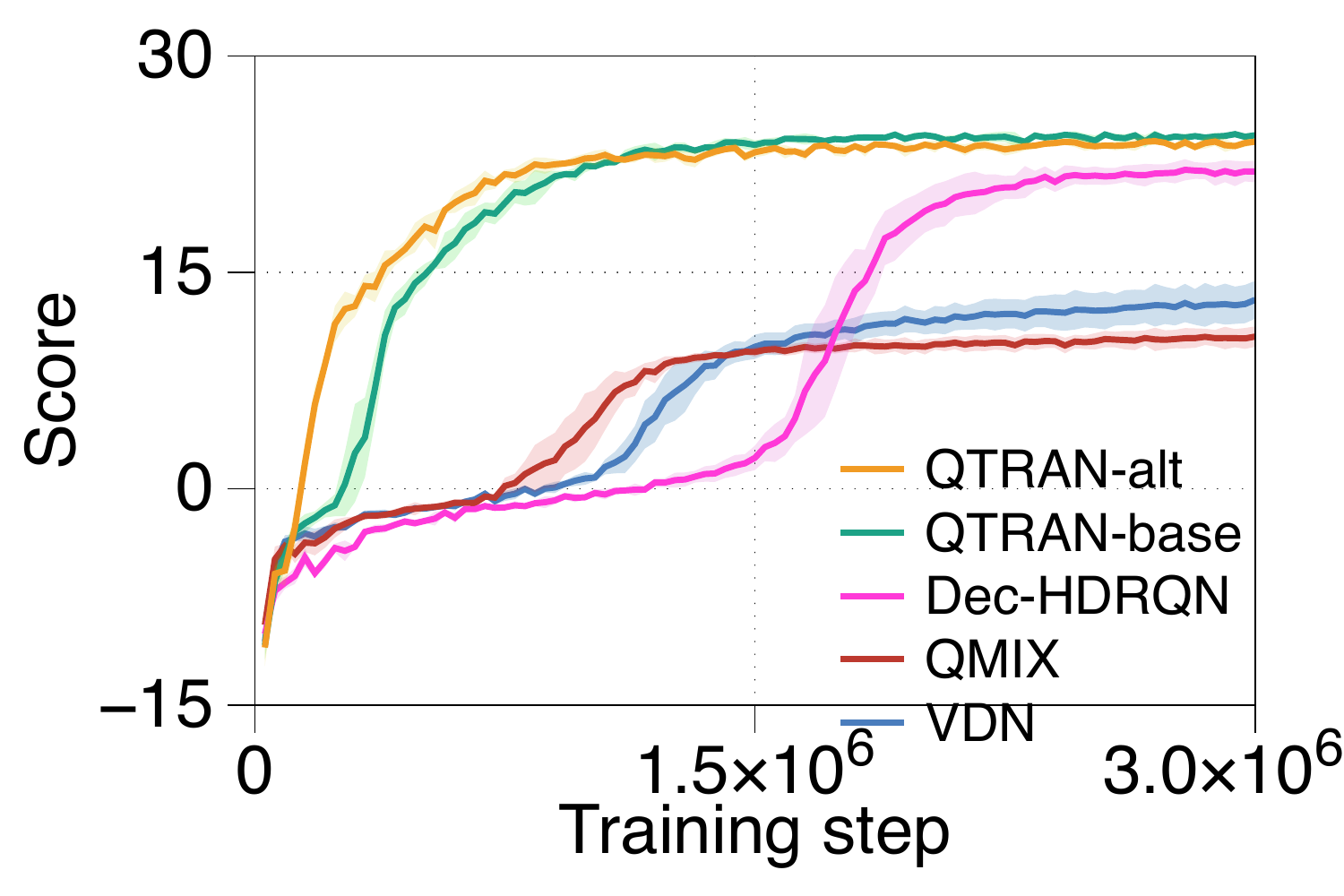}
  \caption{$N=2, P = 1.0$}
  \label{fig:PPgraph2cr}
\end{subfigure}\hspace*{\fill}
\begin{subfigure}[t]{.33\textwidth}
  \includegraphics[width=\linewidth]{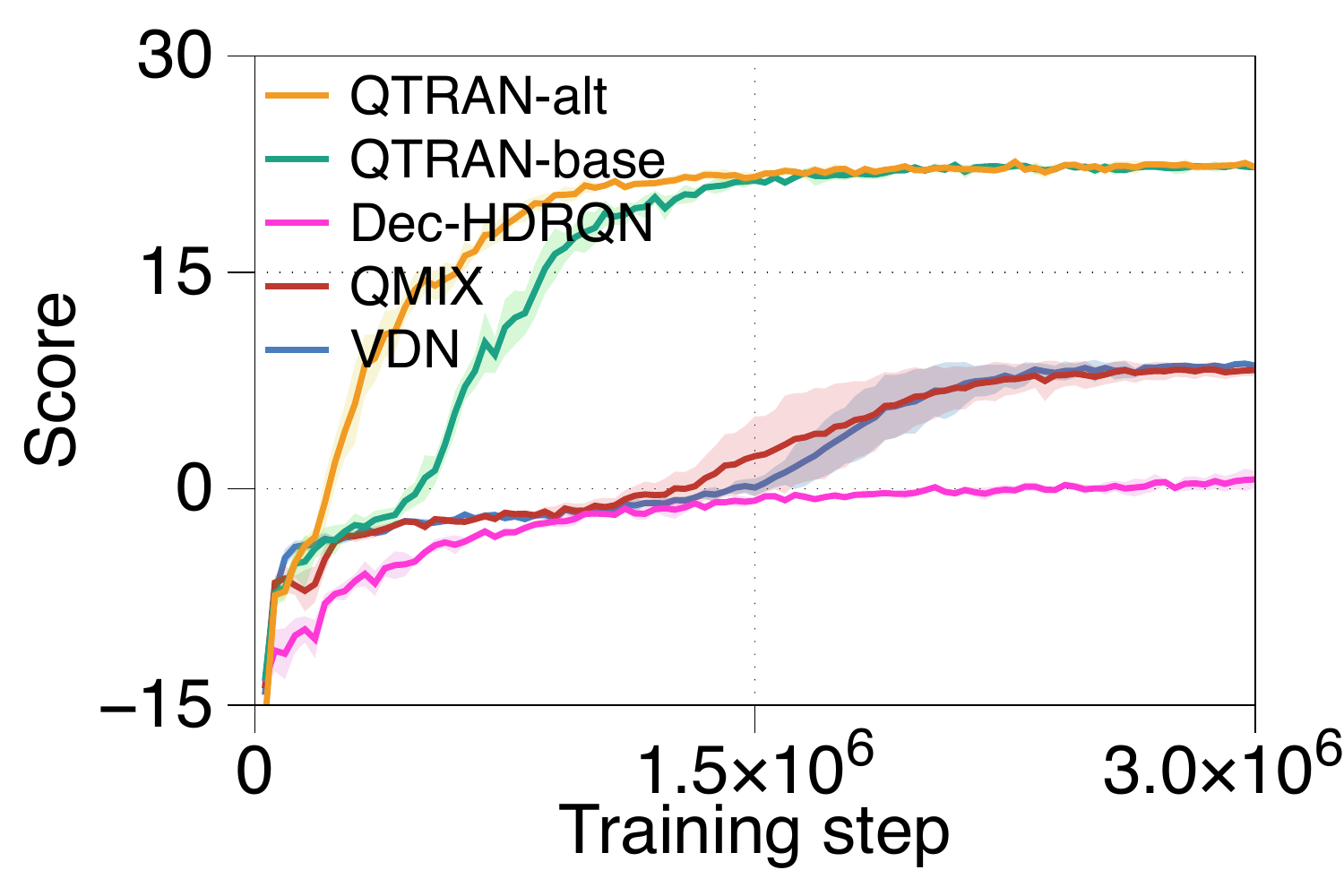}
  \caption{$N=2, P = 1.5$}
  \label{fig:PPgraph3cr}
\end{subfigure}\hspace*{\fill}\\
\begin{subfigure}[t]{.33\textwidth}
  \includegraphics[width=\linewidth]{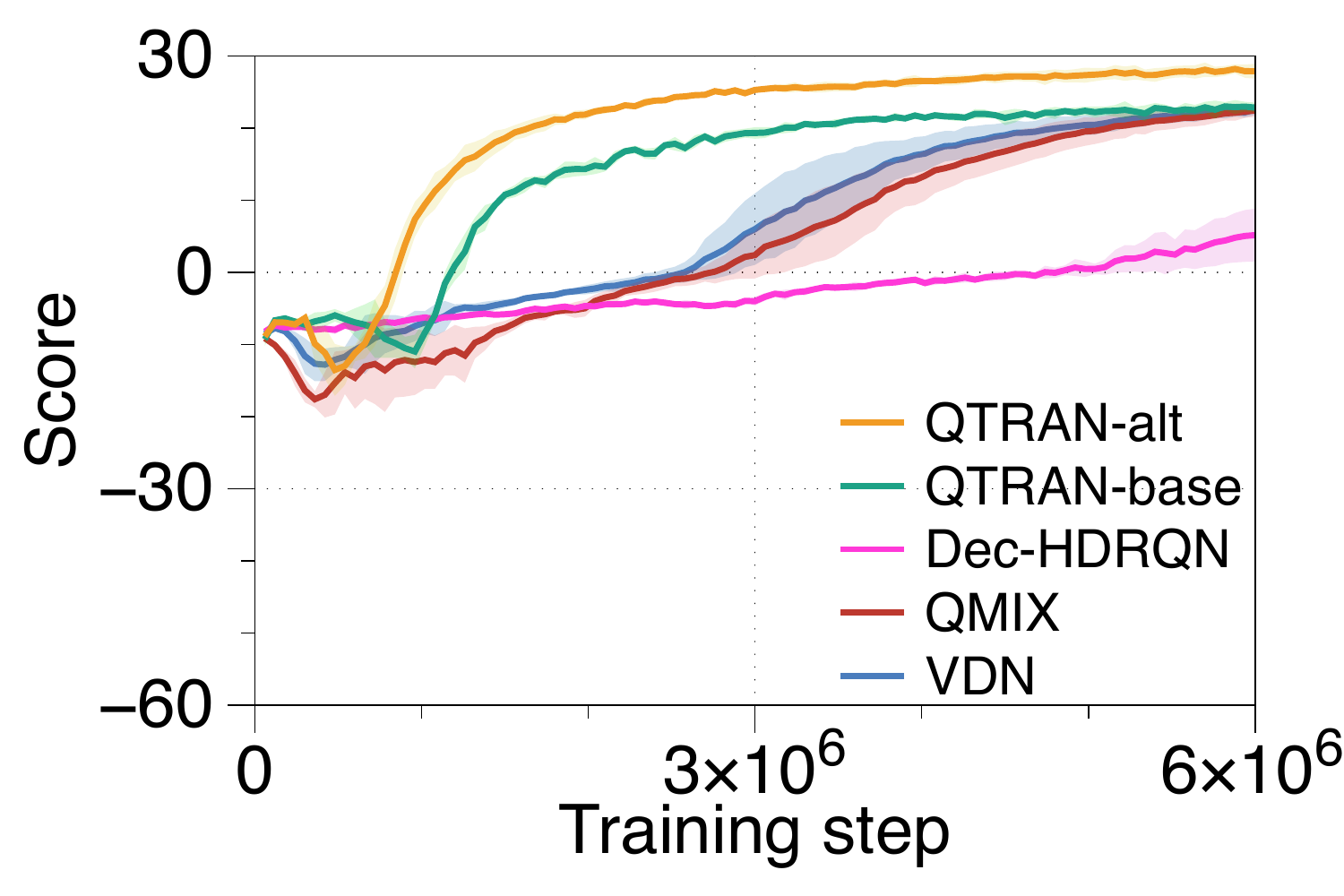}
  \caption{$N=4, P = 0.5$}
  \label{fig:PPgraph4cr}
\end{subfigure}\hspace*{\fill}
\begin{subfigure}[t]{.33\textwidth}
  \includegraphics[width=\linewidth]{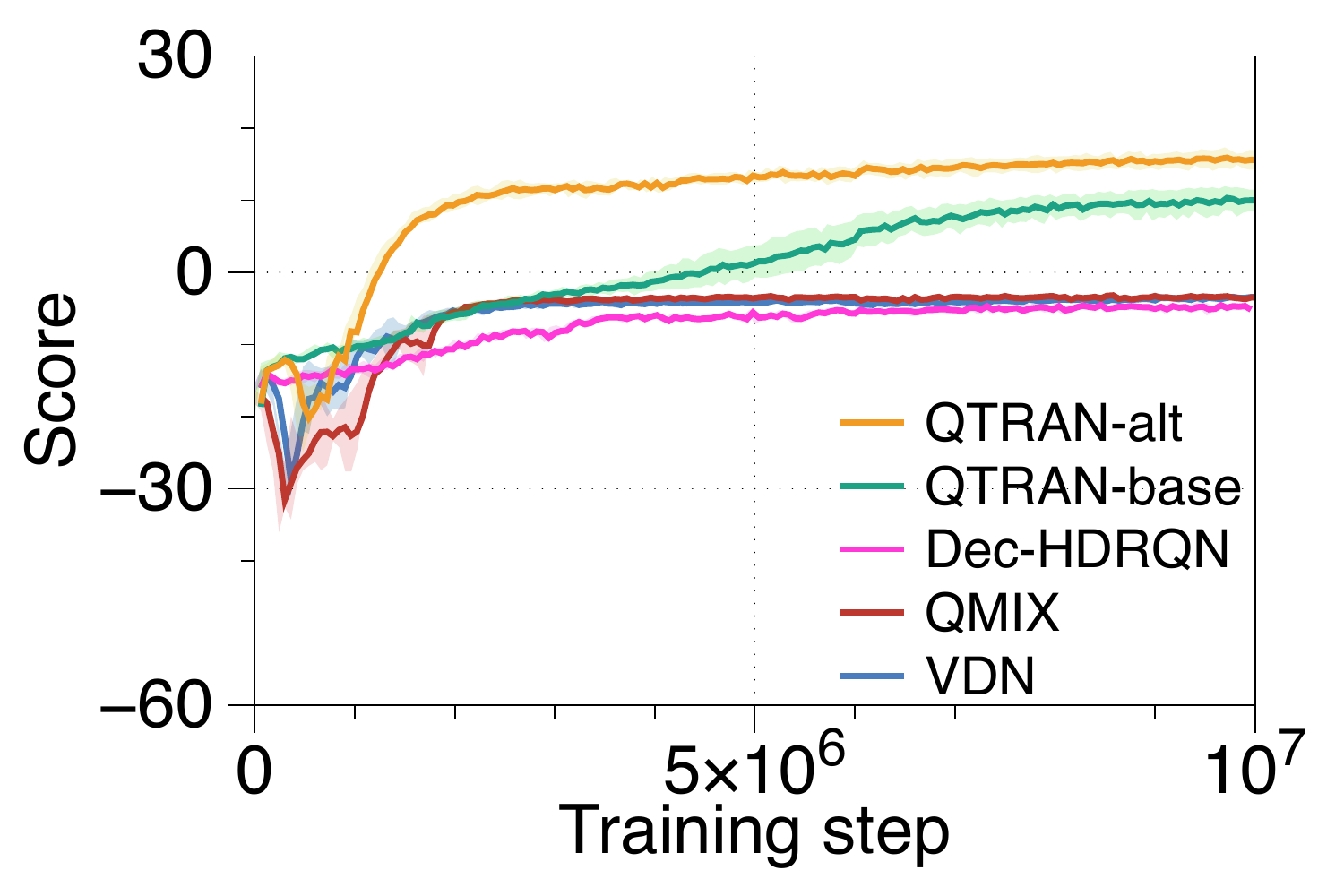}
  \caption{$N=4, P = 1.0$}
  \label{fig:PPgraph5cr}
\end{subfigure}\hspace*{\fill}
\begin{subfigure}[t]{.33\textwidth}
  \includegraphics[width=\linewidth]{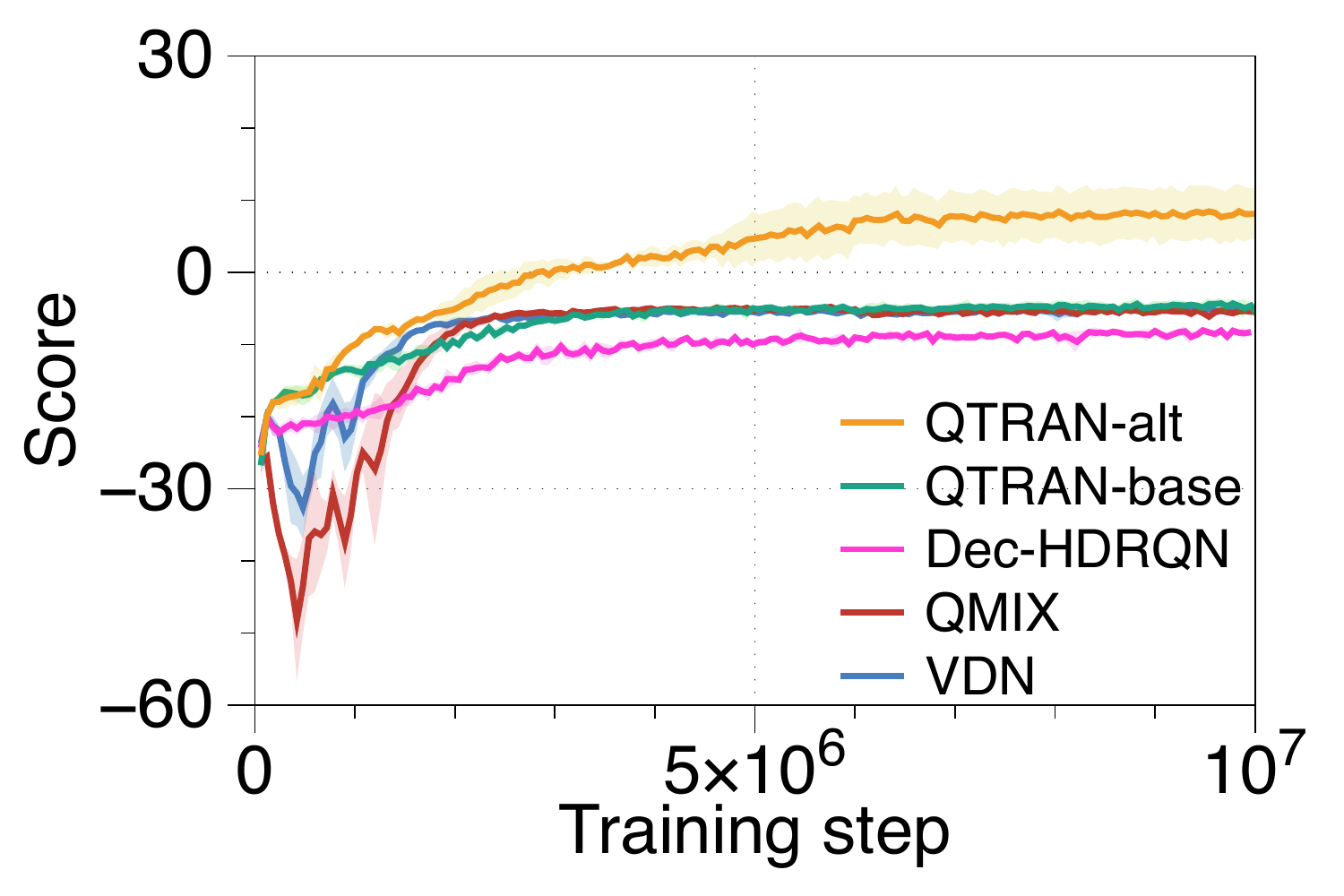}
  \caption{$N=4, P = 1.5$}
  \label{fig:PPgraph6cr}
\end{subfigure}\hspace*{\fill}
\caption{Average reward per episode on the MPP tasks with 95\% confidence intervals for VDN, QMIX, Dec-HDRQN and $\algoname$}
\label{fig:PPresult1cr}
\vspace{-0.3cm}
\end{figure*}





\bibliographystyle{icml2019}

\end{document}